\newcites{si}{Additional References for the Appendix}
\newcommand{\expectation}{\mathbb{E}}
\newcommand{\alglinelabel}{%
  \addtocounter{ALC@line}{-1}% Reduce line counter by 1
  \refstepcounter{ALC@line}% Increment line counter with reference capability
  \label% Regular \label
}
\newcommand{\algfullname}{Continuous Doubly Constrained Batch \\ Reinforcement Learning}
\newcommand{\algpartname}{Continuous Doubly Constrained Batch RL}
\newcommand{\algname}{CDC}
\newcommand{\beginsupplement}{ % use to mark beginning of supplementary section.
\setcounter{section}{0}
\renewcommand{\thesection}{S\arabic{section}} %
\renewcommand{\thesubsection}{\thesection.\arabic{subsection}}
\setcounter{table}{0}
\renewcommand{\thetable}{S\arabic{table}} %
\setcounter{figure}{0}
\renewcommand{\thefigure}{S\arabic{figure}} %
}
\newcommand{\EE}{\mathbb{E}}
\newcommand{\argmax}{\mathop{\mathrm{argmax}}}
\newcommand{\argmin}{\mathop{\mathrm{argmin}}}
\newcommand{\regular}{baseline} % {baseline} % TODO:rename to new name for $\eta=0,\lambda=0$
\title{\algfullname}
\author{%
  Rasool Fakoor$^1$, Jonas Mueller$^1$, Kavosh Asadi$^1$, Pratik Chaudhari$^{1,2}$, Alexander J. Smola$^1$ \\$^1$Amazon Web Services, $^2$University of Pennsylvania\\
  \texttt{fakoor@amazon.com}
}
\begin{document}
\maketitle

\begin{abstract}
Reliant on too many experiments to learn good actions, current Reinforcement Learning (RL) algorithms have limited applicability in real-world settings, which can be too expensive to allow exploration. We propose an algorithm for batch RL, where effective policies are learned using only a fixed offline dataset instead of online interactions with the environment. The limited data in batch RL produces inherent uncertainty in value estimates of states/actions that were insufficiently represented in the training data. This leads to particularly severe extrapolation when our candidate policies diverge from one that generated the data. We propose to mitigate this issue via two straightforward penalties: a policy-constraint to reduce this divergence and a value-constraint that discourages overly optimistic estimates. Over a comprehensive set of $32$ continuous-action batch RL benchmarks, our approach compares favorably to state-of-the-art methods, regardless of how the offline data were collected.
\end{abstract}

\section{Introduction}
\label{sec:intro}

\begin{wrapfigure}{r}{0.50\textwidth}
\vspace*{-1.8em}
\centering
\includegraphics[width=0.25\textwidth]{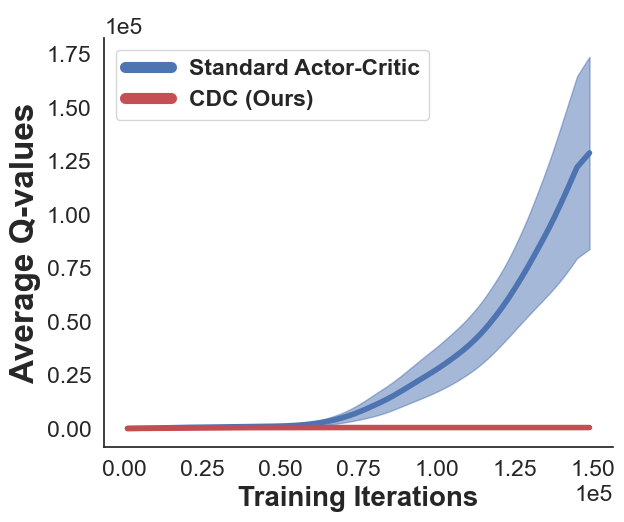}
\includegraphics[width=0.23\textwidth]{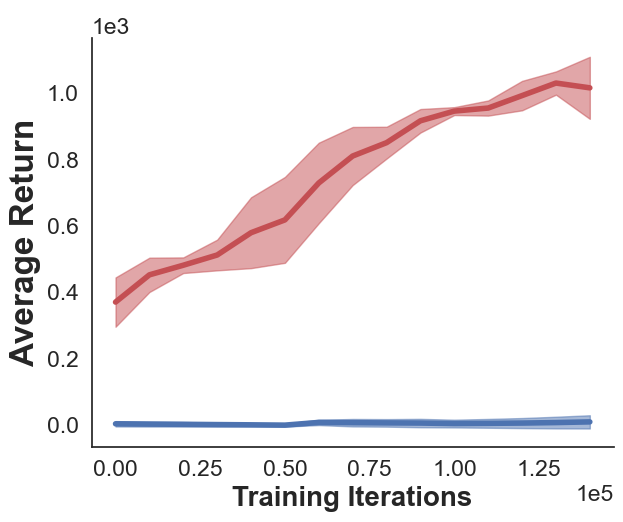}
\caption{\small{\textbf{Batch RL with CDC vs.\ No CDC}. Left: Standard actor-critic  overestimates $Q$-values whereas CDC estimates are well controlled. Right: Wild overestimation leads to worse-performing policies whereas CDC performs well. 
}}
\label{fig:extrapolate_main}
\vskip -1.28em
\end{wrapfigure} 

Deep RL algorithms have demonstrated impressive performance in simulable digital environments like video games~\citep{mnih2015human, Silver1140, SilveralphaGo}. In these settings, the agent can execute different policies and observe their performance.
Barring a few examples~\citep{li2017deep}, advancements have not translated quite as well to real-world environments, where it is typically infeasible to experience millions of environmental interactions~\citep{dulac2019challenges}. Moreover, in presence of an acceptable heuristic, it is inappropriate to deploy an agent that learns from scratch hoping that it may eventually outperform the heuristic after sufficient experimentation.

The setting of \emph{batch} or \emph{offline} RL instead offers a more pertinent framework to learn performant policies for real-world applications \citep{Lange2012, thomas2015high}. Batch RL is widely applicable because this setting does not require that: a proposed policy be tested through real environment interactions, or that data be collected under a particular policy. Instead, the agent only has access to a fixed dataset $\mathcal{D}$ collected through actions taken according to some unknown \emph{behavior} policy $\pi_{\text{b}}$. 
The main challenge in this setting is that data may only span a small subset of the possible state-action pairs. Worst yet, the agent cannot observe the effects of novel out-of-distribution (OOD) state-action combinations that, by definition, are not present in $\mathcal{D}$.

A key challenge stems from the inherent uncertainty when learning from limited data \citep{levineOfflineReinforcementLearning2020,kumarStabilizingOffPolicyQLearning2019}.
Failure to account for this can lead to wild extrapolation  \citep{fujimotoOffPolicyDeepReinforcement2019, kumarConservativeQLearningOffline2020} and over/under-estimation bias in value estimates \citep{ThrunSchwartz1993, DoubleQHasselt, vanhasselt2015deep, lan2019maxmin}. 
This is a systemic problem that is exacerbated for out-of-distribution (OOD) state-actions where data is scarce.  Standard temporal difference updates to $Q$-values rely on the Bellman optimality operator which implies upwardly-extrapolated estimates tend to dominate these updates. As $Q$-values are updated with overestimated targets, they become upwardly biased even for state-actions  well-represented in $\mathcal{D}$. In turn, this can further increase the upper limit of the extrapolation errors at OOD state-actions, which forms a vicious cycle of extrapolation-inflated overestimation (\emph{extra-overestimation} for short) shown in Figure \ref{fig:extrapolate_main}.
This extra-overestimation is much more severe than the usual overestimation bias encountered in online RL \cite{ThrunSchwartz1993,DoubleQHasselt}.
As such, we critically need to constrain value estimates whenever they lead to situations that look potentially 'too good to be true', in particular when they occur where a policy might exploit them.

Likewise, naive exploration can lead to policies that diverge significantly from $\pi_b$. This, in turn, leads to even greater estimation error since we have very little data in this un(der)-explored space. Note that this is not a reason for particular concern in online RL: after all, once we are done exploring a region of the space that turns out to be less promising than we thought, we simply update the value function and stop visiting or visit rarely. Not so in batch RL where we %  have only one chance at devising a good policy.
cannot adjust our policy based on observing its actual effects in the environment.
These issues are exacerbated for applications with a large number of possible states and actions, such as the continuous 
settings considered in this work. Since there is no opportunity to try out a proposed policy in batch RL, learning must remain appropriately conservative for the policy to have reasonable effects when it is later actually deployed. Standard regularization techniques are leveraged in supervised learning to address such ill-specified estimation problems, and have been employed in the RL setting as well~\citep{williams1991function,schulman2015trust, fakoorp3o}.

This paper adapts standard off-policy actor-critic RL to the batch setting by adding a simple pair of regularizers. In particular, our main contribution is to introduce two novel batch-RL regularizers: The first regularizer combats the extra-overestimation bias in regions that are out-of-distribution. The second regularizer is designed to hedge against the adverse effects of policy updates that severly diverge from $\pi_b(a|s)$. The resultant method, \emph{\algpartname{}}  (\algname) exhibits state-of-the-art performance across 32 continuous control tasks from the D4RL benchmark~\cite{fu2020d4rl} demonstrating the usefulness of our regularizers for batch RL.

\section{Background}\label{sec:back}
 \vspace*{-0.5em}
Consider an infinite-horizon Markov Decision Process (MDP)~\cite{PutermanMDP1994}, $(S, A, T, r, \mu_0, \gamma)$. Here $S$ is the state space, $A \subset \mathbb{R}^d $ is a (continuous) action space, $T : S \times A \times  S \rightarrow \mathbb{R}_+$ encodes transition probabilities of the MDP, $\mu_0$ denotes the initial state distribution, $r(s,a)$ is the instantaneous reward obtained by taking action $a \in A $ in state $s \in S$, and $\gamma \in [0, 1]$ is a \emph{discount factor} for future rewards.

Given a stochastic policy $\pi(a|s)$, the sum of discounted rewards generated by taking a series of  actions $a_t \sim \pi(\cdot|s_t)$ corresponds to the \emph{return}  $R^\pi_t = \sum_{i=t}^\infty \gamma^{i-t} r(s_i,a_i)$ achieved under policy $\pi$.
The \emph{action-value function} (Q-value for short) corresponding to $\pi$, $Q^\pi(s,a)$, is defined as the expected return starting at state $s$, taking $a$, and acting according to $\pi$ thereafter,  $Q^\pi(s,a) =  \EE_{s_t \sim T, a_t\sim \pi} \left[ \sum_{t=0}^\infty \gamma^{t} r_t \mid (s_0, a_0) = (s, a) \right]$. $Q^\pi(s,a)$ obeys the Bellman equation~\cite{bellamn1957}:
\begin{align}
    % Q^\pi(s,a) = \EE_{s_t \sim T, a_t \sim \pi(\cdot |s_t)} \Big[R(s_t,a_t) | s=s0, a=a0 \Big]
    % Q^\pi(s,a) = \EE_{ \Bigg\{ \substack{a_t \sim \pi(a|s_t) \\ r_t \sim r(s_t,a_t) \\ s_{t+1} \sim T(s_{t},a_t)} \Bigg\} } \Big[ \sum_{t=0}^\infty \gamma^{t} r_t \mid s_0 = s, a_0 = a \Big]
    %Q^\pi(s,a) & =  \EE_{s_t \sim T(\cdot |s,a), a_t\sim \pi} \left[ \sum_{t=0}^\infty \gamma^{t} r_t \mid (s_0, a_0) = (s, a) \right] \nonumber
%\intertext{$Q^\pi(s,a)$ obeys the Bellman equation~\cite{bellamn1957}}
    Q^\pi(s,a) & = r(s, a) + \gamma \EE_{s' \sim T (\cdot|s,a), a'\sim \pi(\cdot|s')} \left[ Q^\pi(s', a') \right]
    \label{eq:realq}
\end{align}
Unlike in online RL, no interactions with the environment is allowed here, so the agent does not have the luxury of exploration. % exploring its environment.
$\mathcal{D}$ is previously collected via actions taken according to some unknown \emph{behavior policy}  $\pi_b(a | s)$. In this work, we assume $\mathcal{D}$ consists of 1-step transition: $\{ (s_i, a_i, r_i, s'_i) \}_{i=1}^n$ where no further sample collection is permitted. %No further interaction with the environment is allowed to collect more samples. 
In particular, our method, like~\cite{wuBehaviorRegularizedOffline2019, wang2020critic}, only needs a dataset consisting of a single-step transitions and does not require complete episode trajectories.
This is valuable, for instance, whenever data privacy and sharing restrictions prevent the use of the latter \citep{Lange2012}. It is also useful when combining data from sources where the interaction is still in progress, e.g.\ from ongoing user interactions.

We aim to learn an \emph{optimal} policy $\pi^*$ that maximizes the expected return, denoting the corresponding Q-values for this policy as $Q^* = Q^{\pi^*}$.  $Q^*$ is the fixed point of the Bellman \emph{optimality} operator \cite{bellamn1957}: $\mathcal{T} Q^{*}(s,a) = r(s,a) + \gamma \EE_{s' \sim T(\cdot|s,a)} \left[\max_{a'} Q^{*}(s',a') \right] $.  One way to learn $\pi^*$ is via actor-critic methods~\cite{ACTsitsiklis1999}, with policy $\pi_\phi$ and Q-value $Q_\theta$, parametrized by $\phi$ and $\theta$ respectively.

Learning good policies becomes far more difficult in batch RL as it depends on the quality/quantity of available data. Moreover, for continuous control the set of possible actions is infinite, making it nontrivial to find the optimal action even for online RL.
One option is to approximate the maximization above by only considering finitely many actions sampled from some $\pi$. This leads to the Expected Max-Q (EMaQ) operator of \citet{ghasemipour2021emaq}:
\begin{align}
  \label{eq:emaq}
  \mathcal{\overline{T}} Q(s,a)  & := r(s,a)  +\gamma \EE_{s' \sim T(\cdot|s,a)} \Big[ \max_{ \{a_k'\} } Q(s',a_k') \Big]\ .
\end{align}
Here $a_k' \sim \pi_\phi(\cdot|s')$ for $k=1,...,N$, i.e.\ the candidate actions are drawn IID from the current (stochastic) policy rather than over all possible actions.
% Note that
When drawing only a single sample from $\pi_\phi$, this reduces to the standard Bellman operator (in expectation). Conversely, when $N \to \infty$ and $\pi_\phi$ has support over $A$,  this turns into the Bellman optimality operator. We learn $Q$ by minimizing the standard 1-step temporal difference (TD) error. That is, we update at iteration $t$
\begin{align}
  \label{eq:qupdate}
    \theta_{t} & \leftarrow \mathop{\mathrm{argmin}}_{\theta} \EE_{(s,a) \sim \mathcal{D}} \left[ \Big( Q_{\theta}(s, a) - \mathcal{\overline{T}} Q_{\theta_{t-1}}(s,a) \Big)^2\right]
\end{align}
Throughout, the notation $\E_{{(s,a)} \sim \mathcal{D}}$ denotes an \emph{empirical} expectation over dataset $\mathcal{D}$, whereas expectations with respect to $\pi$ are taken over the true underlying distribution corresponding to policy $\pi$.
%
%After the Q-update
Next, we update the policy by increasing the likelihood of actions with higher Q-values:
\begin{align}
  \label{eq:policyupdate}
     \phi_{t} \leftarrow \argmax_{\phi}  \EE_{ s\sim \mathcal{D} , \hat{a} \sim \pi_{\phi}(\cdot|s)} \Big[{Q}_{\theta_{t}}(s, \hat{a}) \Big]
\end{align}
using off-policy gradient-based updates \citep{silver14dpg}. 
Depending on the context, we omit $t$ from $Q_{\theta_t}$ and $\pi_{\phi_t}$.

\subsection{Extrapolation-Inflated Overestimation}\label{sec:overets}

When our Q-values are estimated via function approximation\footnote{While overestimation bias  has been mainly studied in regard to function approximation error~\cite{ThrunSchwartz1993, fujimoto18aTD3,fakoor2020ddpg,Lan2020Maxmin, vanhasselt2015deep}, \citet{DoubleQHasselt} shows that overestimation can also arise in tabular MDPs due to noise in the environment.} 
(with parameters  $\theta$), the Q-update can be erroneous and noisy~\cite{ThrunSchwartz1993}. 
Let $Q_{\theta_t}(s,a)$ denote the estimates of true underlying $Q^*(s,a)$ values at iteration $t$ of a batch RL algorithm that iterates steps (\ref{eq:qupdate}) and (\ref{eq:policyupdate}), with $\pi_{\phi_t}$ denoting the policy that maximizes $Q_{\theta_t}$. For a proper learning method, we might hope that the \emph{estimation error},  $\text{ER} := Q_{\theta_t}(s,a) - Q^*(s,a)$, has expected value $=0$ and variance $\sigma > 0$ for particular states/actions (the expectation here is over the sampling variability in the dataset $\mathcal{D}$ and stochastic updates in our batch RL algorithm). However even in this desirable scenario, Jensen's inequality nonetheless implies there will be \emph{overestimation error} $\text{OE} := \expectation [ \max_a Q_{\theta_t}(s,a) ] - \max_a Q^*(a,s) \ge 0$ 
 for the actions currently favored by $\pi_{\phi_t}$. Here the expectation is over the randomness of the underlying dataset $\mathcal{D}$ and the learning algorithm. OE will be strictly positive when the estimation errors are weakly correlated and will grow with the ER-variance  $\sigma$ \cite{ThrunSchwartz1993,lan2019maxmin}. Under the Bellman optimality or EMaQ operator, these inflated estimates are used as target values in the next Q-update in (\ref{eq:qupdate}), which thus produces a $Q_{\theta_{t+1}}(s,a)$ estimate that suffers from \emph{overestimation bias}, meaning it is expected to exceed the true Q value even if this was not the case for initial estimate $Q_{\theta_t}(s,a)$ ~\cite{ThrunSchwartz1993,fujimoto18aTD3,fakoor2020ddpg,Lan2020Maxmin, vanhasselt2015deep,DoubleQHasselt,kuznetsov20aTMC}. 

In continuous batch RL, ER may have far greater variance (larger $\sigma$) for OOD states/actions poorly represented in the dataset $\mathcal{D}$, as our function approximator $Q_{\theta_t}$ may wildly extrapolate in these data-scarce regions \cite{Lange2012,jin2020pessimism}. This in turn implies the updated policy $\pi_{\phi_{t}}$ will likely differ significantly from $\pi_b$ and favor some action $\hat{a} = \argmax_a Q_{\theta_t}(s,a)$ that is OOD  \cite{fujimotoOffPolicyDeepReinforcement2019}. The estimated value of this OOD action subsequently becomes the target in the Q-update (\ref{eq:qupdate}), and its OE will now be more severe due to the larger $\sigma$ \cite{buckman2020importance}. Even though we only apply these Q-updates to non-OOD $(s,a) \in \mathcal{D}$ whose ER may be initially smaller, the severely overestimated target values can induce increased overestimation bias in $Q_{\theta_{t+1}}(s,a)$ for  $(s,a) \in \mathcal{D}$. 
In a vicious cycle, the increase in $Q_{\theta_{t+1}}(s,a)$ for  $(s,a) \in \mathcal{D}$ can cause extrapolated $Q_{\theta_{t+1}}$ estimates to also grow for OOD actions (as there is no data to ground these OOD estimates), such that overestimation at $s,a \in \mathcal{D}$ is further amplified through additional temporal difference updates. After many iterative updates, this \emph{extra-overestimation} can eventually lead to the disturbing explosion of value estimates seen in \figref{fig:extrapolate_main}.

Several strategies address overestimation~\cite{Lan2020Maxmin,fujimoto18aTD3,fakoor2020ddpg,vanhasselt2015deep,DoubleQHasselt,kuznetsov20aTMC}. 
\citet{fujimotoOffPolicyDeepReinforcement2019} proposed a straightforward convex combination of the extremes of an estimated distribution over plausible Q values. Given a set of  estimates $Q_{\theta_j}$ for $j=1,...M$, they combine both the maximum and the minimum value for a given $(s,a)$ pair:
\begin{align}
  \label{eq:qcvx}
\overline{Q}_\theta(s, a) = \nu  \min_j Q_{\theta_j}  \hspace*{-.6mm} (s, a) + (1 - \nu)  \max_j Q_{\theta_j}  \hspace*{-.6mm} (s, a)
\end{align}
Here $\nu \in (0,1) $  determines how conservative we wish to be, and the $\min$/$\max$ are taken across $M$ Q-networks that only differ in their weight-initialization but are otherwise (independently) estimated. For larger  $\nu > 0.5$, $\overline{Q}$ may be viewed as a \emph{lower confidence bound} for $Q^*$ where the epistemic uncertainty in Q estimates is captured via an ensemble of deep Q-networks \cite{chen2017ucb}.

% \vspace*{-0.1em}
\section{Methods}\label{sec:approach}
% \vspace*{-0.5em}
Our previous discussion of extra-overestimation suggests two key sources of potential error in batch RL. Firstly, a policy learned by our algorithm might be too different from the behavior policy, which can lead to risky actions whose effects are impossible to glean from the limited data.
To address this, we propose to add an \emph{exploration-penalty} in policy updates that reduces the divergence between our learned policy $\pi_\phi$ and the policy $\pi_b$ that generated the data. Secondly, we must restrict 
% overly optimistic
overestimation in Q-values,
% to avoid extra-overestimation,
albeit only where it matters, that is, only when this leads to a policy exploiting overly optimistic estimates.
As such, we only need to penalize suspiciously large Q-values for actions potentially selected by our candidate policy $\pi_\phi$ (e.g.\ if their estimated  Q-value greatly exceeds the Q-value of actually observed actions).

\subsection{Q-Value Regularization}\label{sec:qvlaue}
While sequential interaction with the environment is a strong requirement that limits the practical applicability of online RL (and leads to other issues like exploration vs.\ exploitation), it has one critical benefit:
although unreliable extrapolation of Q-estimates beyond the previous observations happens during training, it is naturally corrected through further interaction with the environment. OOD state-actions with wildly overestimated values are in fact likely to be explored in subsequent updates, and their values then corrected after observing their actual effect.

In contrast,  extra-overestimation is a far more severe issue in batch RL, where we must be confident in the reliability of our learned policy before it is deployed. The issue can lead to completely useless Q-estimates. The policies corresponding to these wildly extrapolated Q-functions will perform poorly, pursuing risky actions whose true effects cannot be known based on the limited data in $\mathcal{D}$ (\figref{fig:extrapolate_main} shows an example of how extra-overestimation can lead to the disturbing explosion of Q-value estimates). 

To mitigate the key issue of extra-overestimation in $Q_\theta(s,a)$, we consider three particular aspects:\\[-1.80em]
\begin{itemize*} \setlength\itemsep{0.4em}
%\vspace*{-.20em}
\item An overall shift in Q-value is less important. %We don't care much about an overall shift.
    A change from, say $Q_\theta(s,a)$ to $Q_\theta(s,a) + c(s)$ changes nothing about which
   action we might want to pick. As such, we only penalize the
  \emph{relative} shift between $Q$-values.
\item An overestimation of $Q_\theta(s,\hat{a})$ which still satisfies
  $Q_\theta(s, \hat{a}) \ll Q_\theta(s,a)$ for well-established ${a,s \in \mathcal{D}}$ will not change behavior and does not require  penalization.
\item Lastly, overestimation only matters if our policy is capable of discovering and exploiting it. %\\[-2.7em]
\end{itemize*}
We use these three aspects to design a penalty for Q-value updates to be more pessimistic
\cite{buckman2020importance, jin2020pessimism}.
\begin{align}
  \label{eq:delta}
  \Delta(s,a) :=  \left[\max_{\hat{a} \in \{a_1, \ldots a_N\} \sim \pi_\phi(.|s)} \hspace*{-1.5mm} Q_{\theta}(s, \hat{a}) -  Q_{\theta}(s, a) \right]_+^{2}
\end{align}
where $s, a \in D$. We can see that the first requirement is easily satisfied, since we only compare differences $Q_\theta(s,\hat{a}) - Q_\theta(s,a)$ for different actions, given the same state $s$. The second aspect is addressed by taking the maximum between $0$ and $Q_\theta(s,\hat{a})-Q_\theta(s,a)$.
As such, we do not penalize optimism when it does not rise to the level where it would effect a change in behavior. Lastly, taking the maximum over actions drawn from the $\pi$ rather than from the maximum over all possible actions ensures that we only penalize when the overestimation would have observable consequences. As such, we limit ourselves to a rather narrow set of cases. As a result, we add this penalty to the Q-update:
\begin{equation}
\label{eq:qloss_delta}
\hspace*{-2.5mm} \theta_t \hspace*{-.3mm} \leftarrow \hspace*{-.3mm} \argmin_{\theta}   \EE_{(s,a) \sim \mathcal{D}} \hspace*{-1mm} \left[ \hspace*{-.5mm} \Big( \hspace*{-.5mm} Q_{\theta}(s, a) - \mathcal{\overline{T}} Q_{\theta_{t-1}}(s,a) \hspace*{-.5mm} \Big)^2 \hspace*{-2mm} + \eta \hspace*{-.3mm} \cdot \hspace*{-.3mm} \Delta(s,a) \hspace*{-.5mm} \right]
\end{equation}
\paragraph{Anatomy of the extra-overestimation penalty $\Delta$.}
Our proposed $\Delta$ penalty in \eqref{eq:delta} mitigates extra-overestimation bias by hindering the learned Q-value from wildly extrapolating large values for OOD state-actions. Estimated values of actions previously never seen in (known) state $s \in \mathcal{D}$ are instead encouraged to not significantly exceed the values of the actions $a$ whose effects we have seen at $s$. Note that the temporal difference update and the extra-overestimation penalty $\Delta$ in \eqref{eq:qloss_delta} are both framed on a common scale as a squared difference between two Q-functions.

How $\Delta$ affects $\theta$ becomes evident through its derivative:
\begin{equation}
 \label{eq:delta_grad}
  \boldsymbol{\nabla}_\theta \Delta(s,a) = \begin{cases}
    \Big( \boldsymbol{\nabla}_\theta Q_{\theta}(s, \hat{a}) - \boldsymbol{\nabla}_\theta Q_{\theta}(s, a)\Big) ~  \boldsymbol{\varepsilon}  & \hspace*{-1mm} \text{if } \boldsymbol{\varepsilon} > 0  \hfill \\
    0              & \hspace*{-3mm}  \text{otherwise} \\
\end{cases}
\end{equation}
Here $\hat{a} := \arg\max_{ \{\hat{a}_{k}\}_{k=1}^N} Q_{\theta}(s, \hat{a}_{k})$ again taken over $N$ actions sampled from our current policy $\pi$, and  ${\boldsymbol{\varepsilon} := Q_{\theta}(s, \hat{a}) -  Q_{\theta}(s, a)}$.

 $\Delta$ only affects certain temporal-differences where Q-values of (possibly OOD) state-actions have higher values than the $(s,a) \in \mathcal{D}$. In this case, $\Delta$ not only reduces $Q_{\theta}(s, \hat{a})$ by an amount proportional to $\boldsymbol{\varepsilon}$, but this penalty also increases the value of the previously-observed action $Q_{\theta}(s, a)$ to the same degree.
$\Delta$ thus results in a value network that favors previously observed actions.
We will generally want to choose a large conservative value of $\eta$ in applications where we know either: that the behavior policy was of high-quality (since its chosen actions should then be highly valued), or that only a tiny fraction of the possible state-action space is represented in $\mathcal{D}$, perhaps due to small sample-size  or a restricted behavior policy (since there may be severe extrapolation error). 
\vspace*{-0.1em}
\subsection{Policy Regularization}\label{sec:pc}
In batch RL, the available offline data $\mathcal{D}$ can have varying quality depending on the behavior policy $\pi_b$ used to collect the data.
Since trying out actions is not possible in batch settings, our policy network is instead updated to favor not only actions with the highest estimated Q-value but also the actions observed in $\mathcal{D}$ (whose effects we can be more certain of). Thus we introduce an \emph{exploration penalty} to regularize the policy update step:  $\phi \leftarrow \argmax_{\phi}  \EE_{ s\sim \mathcal{D} , \hat{a} \sim \pi_\phi(\cdot|s)} \Big[{Q}_{\theta}(s, \hat{a}) \Big] - \lambda \cdot {\mathbb{D}}(\pi_b, \pi_\phi)$.
In principle, various $f$-divergences~\cite{Csiszar1968fdiv} or Integral
Probability Metrics~\cite{MullerIPM1997} could employed in ${\mathbb{D}}(\cdot,\cdot)$. In practice, we limit our choice to quantities that do not require estimating the behavior
policy $\pi_b$. This leaves us with the reverse KL-divergence and IPMs
in Hilbert Space~\cite{altun2006unifying}. If we further restrict ourselves to distances that do not require sampling from $\pi_\phi$, then only the reverse KL-divergence remains.
We thus estimate
\begin{align}
 \label{eq:KL}
  \mathrm{KL}(\pi_b,\pi_\phi)  = \ & \mathbb{E}_{a \sim \pi_b(\cdot|s)}[\log \pi_b(a|s)] - \mathbb{E}_{a \sim \pi_b(\cdot|s)}[\log \pi_\phi(a|s)]  \\
   \propto  - & \mathbb{E}_{a \sim \pi_b(\cdot|s)}[\log \pi_\phi(a|s)] \approx - \frac{1}{m} \sum_{i=1}^m \log \pi_\phi(a_i|s) \label{eq:nopib}
\end{align}
whenever $a_i \sim \pi_b(\cdot|s)$. This is exactly what happens in batch RL where we have plenty of data drawn from the behavior policy, albeit no access to its explicit functional form.
Note the first entropy term in \eqref{eq:KL} can be ignored when we aim to minimize the estimated KL in terms of %$p$
$\pi_\phi$ (as will be done in our exploration penalty).
Using \eqref{eq:nopib}, we can efficiently minimize an estimated reverse KL divergence without having to know/estimate $\pi_b$ or sample from $\pi_\phi$.

\begin{lemma} 
\label{lem:kldirections}
$\displaystyle \argmax_{\pi_\phi} \ \mathbb{E}_{s, a \sim {\pi_\phi}}[Q_\theta(s,a)] - \lambda \cdot \mathbb{E}_{s} \big[{\mathbb{D}} \big(\pi_\phi(\cdot|s),\pi_b(\cdot|s) \big) \big]$ is given by 
  \begin{align*}
  & \pi_\phi(s|a) = \frac{\pi_b(a|s)}{Z} \exp\Big(\frac{Q_\theta(s,a)}{\lambda}\Big) 
  \quad \text{ if \ ${\mathbb{D}}$  is the forward KL divergence} =
  \mathrm{KL}(\pi_\phi(\cdot|s),\pi_b(\cdot|s))
  \\
  &  \pi_\phi(s|a) = \frac{\pi_b(a|s)}{Z - Q_\theta(s,a)/\lambda}
  \quad \text{ if \ ${\mathbb{D}}$  is the reverse KL divergence} =
  \mathrm{KL}(\pi_b(\cdot|s),\pi_\phi(\cdot|s))
  \end{align*}
\end{lemma}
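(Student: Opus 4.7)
The plan is to do a pointwise calculus-of-variations argument at each fixed state $s$, reducing the functional optimization over $\pi_\phi$ to a simple Lagrangian in the density $p(a) := \pi_\phi(a \mid s)$. Since the state marginal entering the outer expectation does not depend on $\pi_\phi$ (it comes from $\mathcal{D}$), maximizing the displayed objective is equivalent to solving, for each $s$ independently,
\[
\max_{p \ge 0,\ \int p=1} \ \int p(a)\, Q_\theta(s,a)\,da \ -\ \lambda\, \mathbb{D}\!\left(p,\,\pi_b(\cdot\mid s)\right).
\]
Form the Lagrangian $L(p,\mu) = \int p(a) Q_\theta(s,a)\,da - \lambda \mathbb{D}(p,\pi_b(\cdot\mid s)) - \mu\!\left(\int p(a)\,da - 1\right)$, take the variational derivative $\delta L/\delta p(a) = 0$, and solve for $p(a)$ in each of the two cases; finally absorb any constants into a single normalizer $Z$ determined by $\int p = 1$.

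For the \emph{forward} KL, $\mathbb{D}(p,\pi_b) = \int p(a) \log\!\big(p(a)/\pi_b(a\mid s)\big)\,da$, so the first-order condition reads
\[
Q_\theta(s,a) - \lambda\!\left(\log\tfrac{p(a)}{\pi_b(a\mid s)} + 1\right) - \mu \;=\; 0,
\]
which yields $p(a) \propto \pi_b(a\mid s)\exp(Q_\theta(s,a)/\lambda)$, matching the stated Gibbs form after absorbing $e^{-1-\mu/\lambda}$ into $Z$. For the \emph{reverse} KL, $\mathbb{D}(\pi_b,p) = \int \pi_b(a\mid s) \log\!\big(\pi_b(a\mid s)/p(a)\big)\,da$, the term involving $p$ is $+\lambda \int \pi_b(a\mid s)\log p(a)\,da$ (up to constants in $\pi_b$), whose variational derivative w.r.t.\ $p(a)$ is $\lambda\, \pi_b(a\mid s)/p(a)$. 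The first-order condition becomes
\[
Q_\theta(s,a) + \lambda\,\tfrac{\pi_b(a\mid s)}{p(a)} - \mu \;=\; 0,
\]
so $p(a) = \lambda \pi_b(a\mid s)/(\mu - Q_\theta(s,a))$; rewriting $\mu/\lambda$ as $Z$ gives exactly $\pi_\phi(a\mid s) = \pi_b(a\mid s)/(Z - Q_\theta(s,a)/\lambda)$, where $Z$ is chosen to enforce $\int p = 1$.

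To finish I would verify that the stationary points are genuine maximizers. In both cases the objective is concave in $p$ (the Q-value term is linear in $p$, and minus-KL is concave in its first argument for the forward case while $\int \pi_b \log p$ is concave in $p$ for the reverse case), so the unique interior critical point found by the KKT conditions is the global maximum. I would also note the implicit feasibility constraints: in the forward-KL case positivity of $p$ is automatic; in the reverse-KL case one needs $Z > Q_\theta(s,a)/\lambda$ for all $a$ in the support of $\pi_b(\cdot\mid s)$, which is where I expect the main technical hiccup, and it is satisfied whenever such a $Z$ exists (equivalently, whenever $\lambda$ is large enough relative to $\sup_a Q_\theta(s,a)$ on the support of $\pi_b$). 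This existence caveat is the only subtle point; the rest is mechanical variational calculus.

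Because the lemma is only asserting the \emph{form} of the optimizer and not addressing existence/uniqueness of $Z$, the above two derivations suffice, and I would present them as two short parallel calculations, being careful to track the sign of the derivative of the KL in its first versus second argument, which is the only place where the two cases genuinely differ.
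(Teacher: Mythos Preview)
The paper does not actually supply a proof of this lemma anywhere: it is stated in Section~3.2 and then used immediately, but neither the main text nor the appendix contains a derivation. So there is no ``paper's proof'' to compare against.

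Your argument is correct and is the standard one. Fixing $s$ and optimizing pointwise over $p(a)=\pi_\phi(a\mid s)$ is justified because, in the paper's setting, the outer expectation over $s$ comes from $\mathcal{D}$ and hence does not depend on $\pi_\phi$; you state this assumption explicitly, which is good since the lemma's notation $\mathbb{E}_{s,a\sim\pi_\phi}$ is ambiguous on its own. The Lagrangian computations for both the forward and reverse KL cases are carried out correctly, with the key distinction (derivative of KL in its first vs.\ second argument) handled properly. Your concavity check is valid in both cases, and you correctly flag the only nontrivial point: in the reverse-KL case the solution $p(a)=\pi_b(a\mid s)/(Z-Q_\theta(s,a)/\lambda)$ is a bona fide probability density only when a normalizing $Z$ with $Z>\sup_a Q_\theta(s,a)/\lambda$ exists, which the lemma as stated implicitly assumes. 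This is more care than the paper itself takes, so your derivation is entirely adequate as the missing proof.
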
 
where $Z \in \mathbb{R}$ is a normalizing constant in each case. Lemma \ref{lem:kldirections} shows that using  either forward % ~\eqref{eq:argminkl} 
or reverse KL-divergence as an objective, %~\eqref{eq:argmin-reversekl}, 
we recover $\pi_\phi = \pi_b$ in the limit of $\lambda \to \infty$. This is to be
expected. After all, in this case we use the distance in distributions
(thus policies) as our only criterion, and we prefer reverse KL to avoid having to estimate $\pi_b$.
\algname{} thus employs the following  policy-update (where the reverse KL is expressed as a log-likelihood as in \eqref{eq:nopib})
\begin{equation}\label{eq:piloss}
%\vspace*{-0.1em}
\phi \leftarrow \argmax_{\phi}  \EE_{ s\sim \mathcal{D} , \hat{a} \sim \pi_\phi(\cdot|s)} \Big[{Q}_{\theta}(s, \hat{a}) \Big] +  \lambda \cdot \EE_{(s,a) \sim \mathcal{D}} \big [ \log \pi_\phi(a|s)  \Big ]
\end{equation}
The exploration penalty helps ensure our learned $\pi_\phi$ is not significantly worse than $\pi_b$, which is far from  guaranteed in batch settings without ever testing an action.
If the data were collected by a fairly random (subpar) behavior policy, then this penalty (in expectation) acts similarly to a maximum-entropy term. The addition of such terms to similar policy-objectives has been shown to boost performance in RL methods like \emph{soft actor-critic} \cite{haarnoja2018soft}.

Note that our penalization of exploration stands in direct contrast to online RL methods that specifically incentivize exploration \cite{bellemare2017count, bellemare2016unifying}. In the batch RL, exploration is extremely dangerous as it will only take place during deployment when a policy is no longer being updated in response to the effect of its actions.
Constraining policy-updates around an existing data-generating policy has also been demonstrated as a reliable way to at least obtain an improved policy in both batch \cite{wuBehaviorRegularizedOffline2019, fujimotoOffPolicyDeepReinforcement2019} and online \cite{schulman2017proximal} settings.
Even moderate policy-improvement can often be extremely valuable (the optimal policy may be too much ask for with data of limited size or coverage of the possible state-actions). \emph{Reliable} improvement is crucial in batch settings as we cannot first test out our new policy.

\begin{remark}[Behavioral cloning occurs as $\lambda \rightarrow \infty$]
\vspace*{-0.5em}
Regularized policy updates with strong regularization (large $\lambda$)
is in the limit imitation learning.  In fact, this is the well-known
  likelihood based behavioral cloning algorithm used by
  \cite{pomerleau1991efficient}.
 \vspace*{-1.0em}
\end{remark}
If the original behavior policy $\pi_b^*$ was optimal (e.g.\ demonstration by a human-expert), then behavioral cloning should be utilized for learning from $\mathcal{D}$ \cite{osa2018algorithmic}.
However in practice, data are often collected from a subpar policy that we wish to improve upon via batch RL rather than simple imitation learning.

\subsection{\algname{} Algorithm}
\begin{wrapfigure}{r}{0.5\columnwidth}
 \vskip -3.0em
\resizebox{0.5\columnwidth}{!}{
%\begin{small}
\begin{minipage}{1.3\linewidth}
\begin{algorithm}[H]
\caption{Continuous Doubly Constrained Batch RL}
\label{alg:ours}
\begin{algorithmic}[1]
    \STATE Initialize policy $\pi_{\phi}$ and Qs: $\{Q_{\theta_j} \}_{j=1}^{M}$
   %  \STATE Initialize Policy network: $\pi_{\phi}$.
    \STATE Initialize Target Qs:  $\{Q_{\theta'_j}: \theta'_j \leftarrow \theta_j\}_{j=1}^{M}$
    \FOR{$t$ in \{1, \dots, T\}}
        %mini-batch
        \STATE Sample  $\mathcal{B} = \{(s, a, r, s')\} \sim \mathcal{D}$ \\
        \STATE For each $s, s' \in \mathcal{B}$:
        sample $N$ actions $\{\hat{a}_{k} \}_{k=1}^N \hspace*{-0.6mm} \sim \hspace*{-0.6mm} \pi_{\phi}(\cdot|s), \{{a}'_{k}\}_{k=1}^N \hspace*{-0.5mm} \sim\ $ $\pi_{\phi}(\cdot|s')$ \\[0.5em]
        \STATE \textbf{$Q_{\theta}$- value update:} \\ \alglinelabel{step:value}
        \vspace*{-.2in}
            \begin{align*}
                &y(s', r) := r + \gamma \max_{{a}'_{k}}  \Big[\overline{Q}_{\theta'}(s', {a}'_{k}) \Big] \ \
                 \text{($\overline{Q}$ given by Eq~\ref{eq:qcvx})} \\
                %& \text{\textbf{for} } j = 1,\dots, M:
                %\\
                %& \hspace*{5mm}
                & {\textcolor{color_alg}{ \Delta_j(s,a) :=  \Big(  \Big[  \max_{\hat{a}_{k}}  Q_{\theta_j}(s, \hat{a}_{k}) -  Q_{\theta_j}(s, a) \Big]_+ \Big)^2 } }
                \\
                %\textbf{for } j = 1,\dots, M:&\\
                %& \hspace*{5mm}
                & \theta_j \leftarrow \hspace*{0mm} \argmin_{\theta_j} \hspace*{-2mm} \sum_{(s, a, s') \in \mathcal{B}} \Big[ \Big( Q_{\theta_j}(s, a) - y(s', r) \Big )^2  \\[-0.78em]
                & \hspace*{30mm}
                 {\textcolor{color_alg}{ \ + \ \eta \cdot \Delta_j(s,a) }} \Big]
                % ~~\forall j \in M
                \ \text{ for } j = 1,...,M
                \\[-2em]
            \end{align*}
        \STATE \textbf{$\pi_{\phi}$ - policy update: } \alglinelabel{step:policy}
        \\[-1.7em]
        \begin{align*}
        \hspace*{0mm} \phi \leftarrow \argmax_{\phi} \hspace*{-2mm}
        & \sum_{(s,a) \in \mathcal{B}, \hat{a} \sim \pi_\phi(\cdot|s)}
        \hspace*{-7mm}
        \Big[
        \overline{Q}_{\theta}(s,  \hat{a})
           { \textcolor{color_alg}{ \ + \ \lambda \cdot \log \pi_\phi(a|s) }}  \Big]
        \end{align*}
        \STATE \textbf{Update Target Networks: }
              \\[0.5em]
              $~~~~~~~\theta'_j \leftarrow \tau \theta_j + (1 - \tau)\theta'_j~~\forall j \in M $
              \\[-0.5em]
    \ENDFOR
\end{algorithmic}
\label{algo:cdc}
\end{algorithm}
\end{minipage}
}
\vskip -1.2em
\end{wrapfigure}
Furnished with the tools for Q-value and policy regularization proposed in previous sections, we introduce \algname{} in \algref{algo:cdc}. \algname{} utilizes an actor-critic framework~\cite{ACTsitsiklis1999} for continuous actions with stochastic policy $\pi_\phi$ and Q-value $Q_\theta$, parameterized by $\phi$ and $\theta$ respectively.
Our major additions to that %the framework (highlighted in blue) include: the %\emph{extra-overestimation}  
$\Delta$ penalty that mitigates overestimation bias by reducing wild extrapolation in value estimates and the \emph{exploration penalty} ($\log \pi_\phi$) that discourages the estimated policy from straying to OOD state-actions very different from those whose effects we have  observed in $\mathcal{D}$.

Although the particular form of \algname{} presented in \algref{algo:cdc} optimizes a stochastic policy with the off-policy updates of \cite{silver14dpg} and temporal difference value-updates using \eqref{eq:qupdate}, we emphasize that the general idea behind \algname{} can be utilized with other forms of actor-critic updates such as those considered by \cite{haarnoja2018soft, fakoorp3o, fujimoto18aTD3}. In practice, \algname{} estimates expectations of quantities introduced throughout  via mini-batch estimates derived from samples taken from $\mathcal{D}$, and  each optimization is performed via a few stochastic gradient method iterates. 

To account for epistemic uncertainty due to the limited data, the value update in Step \ref{step:value} of \algref{alg:ours} uses  $\overline{Q}_\theta$ from \eqref{eq:qcvx} in place of $Q_\theta$.  In \algname{}, we can simply utilize the same moderately conservative value of $\nu = 0.75$ used by \cite{fujimotoOffPolicyDeepReinforcement2019}, since we are
 not purely relying on the lower confidence bound $\overline{Q}_\theta$ to correct all overestimation. For this reason, \algname{} is able to achieve  strong  performance with a small ensemble of $M=4$ Q-networks (used throughout this work), whereas \cite{ghasemipour2021emaq} require larger ensembles of 16 Q-networks and an extremely conservative $\nu=1$ in order to achieve good performance.
 
To correct extra-overestimation within each of the $M$ individual Q-networks,  \algref{alg:ours} actually applies a separate extra-overestimation penalty $\Delta_j$ specific to each Q-network. The steps of our proposed \algname{} method are detailed in \algref{algo:cdc}. In \textcolor{color_alg}{blue}, we highlight the only modifications \algname{} makes to a standard off-policy actor-critic framework that has been suitably adapted for continuous batch RL via the aforementioned techniques like EMaQ \cite{ghasemipour2021emaq} and lower-confidence bounds for Q-values  \cite{fujimotoOffPolicyDeepReinforcement2019}. Throughout, we use $\eta=0~\&~ \lambda=0$ to refer to this baseline framework (without our proposed penalties), and note that majority of modern batch RL methods like CQL \cite{kumarConservativeQLearningOffline2020}, BCQ \cite{fujimotoOffPolicyDeepReinforcement2019}, BEAR~\cite{kumarStabilizingOffPolicyQLearning2019}, BRAC~\cite{wuBehaviorRegularizedOffline2019} are built upon similar frameworks.

Although each of our proposed regularizers can be used independently and their implementation is modular, we emphasize that they complement each other: the Q-Value regularization mitigates extra-overestimation error while the policy regularizer ensures candidate policies do not stray too far from the offline data. Ablation studies show that the best performance is only achieved through simultaneous use of both regularizers  (\figref{fig:abl_all}, Table~\ref{tab:ablation}). Note that CDC is quite simple to implement: each penalty can be added to existing actor-critic RL frameworks with minimal extra code and the addition of both penalties involves no further complexity beyond the sum of the parts.

\begin{theorem} % [Contraction] % [Convergence]
\vspace*{-0.25em}
For $\overline{Q}_{\theta}$ in (\ref{eq:qcvx}), let   $\mathcal{T}_{\text{CDC}}: \overline{Q}_{\theta_t} \rightarrow \overline{Q}_{\theta_{t+1}}$ denote the operator corresponding to the $\overline{Q}_{\theta}$-updates resulting from the $t^{\text{th}}$ iteration of Steps \ref{step:value}-\ref{step:policy} of a\algref{alg:ours}. $\mathcal{T}_{\text{CDC}}$ is a $L_\infty$ contraction under standard conditions that suffice for the ordinary Bellman operator to be contractive \cite{bertsekas2004stochastic, busoniu2010reinforcement, szepesvari2001efficient, antos2007fitted}.
\label{thm:contract}
\vspace*{-0.25em}
\end{theorem}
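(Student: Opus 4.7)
The plan is to view one application of $\mathcal{T}_{\text{CDC}}$ as the composition of three elementary maps and verify that each is a contraction or non-expansion in $L_\infty$, so that the composite inherits the $\gamma$-contractivity of the ordinary Bellman backup. Concretely, I would decompose Steps \ref{step:value}--\ref{step:policy} into: (i) forming the bootstrap target $y(s,a) = r(s,a) + \gamma\, \mathbb{E}_{s'}[\max_{k} \overline{Q}(s',a'_k)]$; (ii) taking the ensemble mixture $\overline{Q} = \nu \min_j Q_{\theta_j} + (1-\nu)\max_j Q_{\theta_j}$ that enters this target; and (iii) solving the regularized regression $\min_\theta \sum_{(s,a)\in\mathcal{D}}[(Q_\theta(s,a)-y(s,a))^2 + \eta\, \Delta(s,a)]$ that produces $\overline{Q}_{\theta_{t+1}}$. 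Pieces (i) and (ii) follow familiar arguments: $\max_k$ over any fixed finite set is 1-Lipschitz in $L_\infty$, $\mathbb{E}_{s'}[\cdot]$ is 1-Lipschitz, and the $\min_j,\max_j$ over ensemble members together with the $\nu$-convex combination are 1-Lipschitz in the ensemble, so multiplying by $\gamma$ produces a $\gamma$-contractive map from $\overline{Q}$ to $y$. The EMaQ sampling $\{a'_k\}\sim\pi_\phi(\cdot|s')$ does not disturb this, since the samples can be coupled across two operator applications when taking the difference.

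The main obstacle will be piece (iii), because the $\Delta$ penalty couples outputs across the sampled action set and makes the inner minimization implicit in $Q^{\text{new}}$. I plan to analyze it through first-order conditions in the tabular/realizable limit used by the cited fitted-iteration references. At an in-distribution $(s,a)\in\mathcal{D}$, the FOC reduces to $Q^{\text{new}}(s,a) - y(s,a) = \eta\,[Q^{\text{new}}(s,\hat a)-Q^{\text{new}}(s,a)]_+$, while the FOC for a freely-varying OOD value $Q^{\text{new}}(s,\hat a)$ collapses $\sum_{a}[Q^{\text{new}}(s,\hat a)-Q^{\text{new}}(s,a)]_+ = 0$, forcing every positive part to vanish. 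Hence $Q^{\text{new}}(s,a) = y(s,a)$ on the support of $\mathcal{D}$, while OOD values are uniformly upper-capped by the corresponding in-distribution ones; taking the least-restrictive such assignment, $Q^{\text{new}}(s,\hat a) = \min_{a:(s,a)\in\mathcal{D}} y(s,a)$, is itself 1-Lipschitz in $y$. Thus $y\mapsto Q^{\text{new}}$ is an $L_\infty$ non-expansion, and the $\Delta$ penalty never amplifies errors through the update.

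Composing (i)--(iii) yields $\|\mathcal{T}_{\text{CDC}}\overline{Q}_1 - \mathcal{T}_{\text{CDC}}\overline{Q}_2\|_\infty \le \gamma\,\|\overline{Q}_1-\overline{Q}_2\|_\infty$, which is the claimed contraction. The ``standard conditions'' qualifier in the theorem statement then lets us invoke the fitted Q-iteration extensions of \cite{bertsekas2004stochastic, szepesvari2001efficient, antos2007fitted, busoniu2010reinforcement} to lift this abstract tabular argument to the sample-based, function-approximation regime of Algorithm \ref{alg:ours}, which completes the proof.
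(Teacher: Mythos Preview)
Your decomposition into (i) target formation, (ii) ensemble mixing, and (iii) regularized regression is a different route from the paper's proof, which instead proceeds by incrementally layering features onto the vanilla Bellman case: first $\eta=\lambda=0,\,M=1$ (ordinary Bellman optimality), then $M>1$ (convex combination preserves contraction via Jensen), then $\eta>0$ (the $\Delta$-shrinkage is declared non-expansive in one sentence), then $\lambda>0$ (Step~\ref{step:value} becomes Bellman \emph{evaluation} for the regularized-improvement policy). Your FOC analysis of piece~(iii) is actually more explicit than the paper's one-line ``reducing $\Delta_j$ is non-expansive'' claim, and in the tabular limit it does yield $Q^{\text{new}}(s,a)=y(s,a)$ on the support with OOD values capped from above; the non-uniqueness you patch with the ``least-restrictive assignment'' is a real wrinkle, but the paper does not address it either.

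There is, however, a genuine gap in your treatment of piece~(i). You write that the EMaQ sampling ``does not disturb this, since the samples can be coupled across two operator applications.'' This fails because the sampling distribution $\pi_\phi$ is itself a function of the input $\overline{Q}$: Step~\ref{step:policy} produces $\pi_\phi$ from $\overline{Q}_\theta$, and that policy is then used to draw the $\{a'_k\}$ entering the next target. When you compare $\mathcal{T}_{\text{CDC}}\overline{Q}_1$ with $\mathcal{T}_{\text{CDC}}\overline{Q}_2$, the two action sets $\{a'_k\}$ come from \emph{different} policies $\pi_{\phi}(\overline{Q}_1)$ and $\pi_{\phi}(\overline{Q}_2)$, so a coupling argument does not give you $\|\max_k \overline{Q}_1(s',a'_{k,1}) - \max_k \overline{Q}_2(s',a'_{k,2})\|_\infty \le \|\overline{Q}_1-\overline{Q}_2\|_\infty$ for free. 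The paper handles this by an explicit assumption you omit: it assumes $\pi_\phi$ is flexible enough that the sampled EMaQ max equals the full $\max_{a'}\overline{Q}(s',a')$, which collapses piece~(i) to the ordinary Bellman-optimality backup (independent of the particular $\pi_\phi$), and for $\lambda>0$ it reinterprets Step~\ref{step:value} as Bellman \emph{evaluation} with respect to the induced policy $\tilde\pi$, invoking the contraction of the evaluation operator. Without an assumption of this kind, your composition argument does not close, because the dependence of the sampling policy on $\overline{Q}$ makes the target-formation map more than a simple $\gamma\cdot\max$ followed by an expectation.
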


The proof and formal list of assumptions are in \appref{app:proof_contract}. Together with Banach's theorem, the contraction property established in \thmref{thm:contract} above guarantees that our \algname{} updates converge to a fixed point under commonly-assumed conditions that suffice for standard RL algorithms to converge \cite{lagoudakis2003least}.
Due to issues of (nonconvex) function approximation, it is difficult to guarantee this in practice or empirical optimality of the resulting estimates \cite{ddpg, matheron2019problem}. We do note that the addition of our two novel regularizers further enhances the contractive nature and stability of the CDC updates when $\eta, \lambda > 0$ by shrinking $Q$-values and policy action-probabilities toward the corresponding values estimated for the behavior policy (i.e.\ values computed for observations in $\mathcal{D}$). Our CDC penalties can thus not only lead to less wildly-extrapolated batch estimates, but also faster (and more stable) convergence of the learning process (as shown in \figref{fig:extrapolate_main}, where \emph{Standard actor-critic} refers to \algref{algo:cdc} where $\eta=\lambda=0$).

\begin{theorem} 
Let $\pi_\phi \in \Pi$ be the policy learned by \algname{}, $\gamma$ denote discount factor, and $n$ denote the sample size of dataset $\mathcal{D}$ generated from $\pi_b$. Also let $J(\pi)$ represent the true expected return  produced by deploying policy $\pi$ in the environment. 
Under mild assumptions listed in Appendix \ref{sec:proofs}, there exist constants $r^*, C_\lambda, V$ such that with high probability $\ge 1 - \delta$:  
\begin{align*}
    J(\pi_\phi) & \ge J(\pi_b) - \frac{ r^*}{(1-\gamma)^2} \sqrt{ C_\lambda + \sqrt{(V - \log \delta)/n} }
\end{align*}
\label{thm:reliable}
\vspace*{-1.9em}
\end{theorem}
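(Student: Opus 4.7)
The proof strategy is to chain together three types of bounds: a simulation-lemma-style inequality that converts a policy divergence into a return gap, a regularization-based bound that converts the KL penalty into control over $\mathrm{KL}(\pi_b, \pi_\phi)$, and a uniform-deviation bound that converts the empirical log-likelihood used in \eqref{eq:piloss} into the true KL with high probability.

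First, by a standard performance-difference/simulation lemma argument for infinite-horizon discounted MDPs with bounded rewards $|r(s,a)| \leq r^*$, one obtains
\begin{equation*}
  J(\pi_b) - J(\pi_\phi) \ \leq\ \frac{2 r^*}{(1-\gamma)^2} \cdot \mathbb{E}_{s \sim d_{\pi_b}}\bigl[ \mathrm{TV}(\pi_b(\cdot|s), \pi_\phi(\cdot|s))\bigr],
\end{equation*}
where $d_{\pi_b}$ denotes the discounted occupancy under $\pi_b$. Applying Pinsker's inequality and then Jensen yields
\begin{equation*}
  \mathbb{E}_s[\mathrm{TV}(\pi_b, \pi_\phi)] \ \leq\ \sqrt{\tfrac{1}{2}\, \mathbb{E}_s[\mathrm{KL}(\pi_b(\cdot|s), \pi_\phi(\cdot|s))]},
\end{equation*}
so everything reduces to controlling the expected reverse KL.

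Second, since $\pi_\phi$ is a (near-)maximizer of the regularized objective in \eqref{eq:piloss} and $\pi_b$ is itself feasible with zero KL from itself, comparing the population version of the objective at $\pi_\phi$ versus at $\pi_b$ gives
\begin{equation*}
  \lambda \cdot \mathbb{E}_s[\mathrm{KL}(\pi_b, \pi_\phi)] \ \leq\ \mathbb{E}_{s,\, a \sim \pi_\phi}[Q_\theta(s,a)] \ -\ \mathbb{E}_{s,\, a \sim \pi_b}[Q_\theta(s,a)] \ \leq\ 2\, \|Q_\theta\|_\infty \ \leq\ \frac{2 r^*}{1-\gamma},
\end{equation*}
where boundedness of $Q_\theta$ is ensured by the contraction property established in Theorem \ref{thm:contract} together with the bounded-reward assumption. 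Dividing by $\lambda$ produces a data-independent constant of the form $C_\lambda = \Theta\bigl(r^*/(\lambda(1-\gamma))\bigr)$.

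Third, the algorithm actually optimizes an \emph{empirical} surrogate based on $\tfrac{1}{n}\sum_i \log \pi_\phi(a_i|s_i)$ rather than the population expectation $\mathbb{E}_{a \sim \pi_b}[\log \pi_\phi]$, and $\pi_\phi$ itself is data-dependent. A uniform-deviation argument over the policy class $\Pi$ — using Rademacher complexity, a covering-number, or a pseudo-dimension bound captured by the capacity constant $V$ — combined with McDiarmid's inequality gives, with probability $\geq 1-\delta$,
\begin{equation*}
  \sup_{\pi \in \Pi} \bigl| \mathbb{E}_{s,a \sim \pi_b}[\log \pi(a|s)] - \tfrac{1}{n}\textstyle\sum_i \log \pi(a_i|s_i) \bigr| \ \lesssim\ \sqrt{(V - \log\delta)/n}.
\end{equation*}
Plugging this into the previous inequality upgrades the population bound to the empirical-objective bound $\mathbb{E}_s[\mathrm{KL}(\pi_b, \pi_\phi)] \leq C_\lambda + \sqrt{(V-\log\delta)/n}$. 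Substituting into the Pinsker inequality and then the simulation lemma, and absorbing universal constants into $r^*$, gives the stated bound
\begin{equation*}
  J(\pi_\phi) \ \geq\ J(\pi_b) - \frac{r^*}{(1-\gamma)^2} \sqrt{\, C_\lambda + \sqrt{(V-\log\delta)/n}\,}.
\end{equation*}

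The main obstacle is the third step: because $\pi_\phi$ is chosen by optimization over the data, a pointwise law-of-large-numbers bound on the empirical KL does not suffice and one must pay the capacity price $V$ for uniform convergence over $\Pi$. A secondary subtlety is that \eqref{eq:piloss} is solved only approximately (gradient ascent with finitely many action samples from $\pi_\phi$), so one needs a mild assumption — deferred to the appendix — that this optimization error is dominated by the statistical error, so that the comparison $\pi_\phi$-versus-$\pi_b$ inside the argmax remains valid up to lower-order terms. The function-approximation error in $Q_\theta$ and the stochastic contribution from the $\Delta$-penalty in \eqref{eq:qloss_delta} can be absorbed into the constant $r^*/(1-\gamma)$ via the boundedness guaranteed by Theorem \ref{thm:contract}.
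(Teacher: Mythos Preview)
Your three-step skeleton (performance-difference lemma $+$ Pinsker $+$ Jensen; regularization bound on the KL; uniform deviation over $\Pi$) is exactly the structure the paper uses. Steps~1 and~3 match the paper's proof essentially line for line: the paper also invokes a simulation-lemma inequality of the form $J(\pi_\phi)\ge J(\pi_b) - \frac{r^*}{(1-\gamma)^2}\mathbb{E}_s[\mathrm{TV}]$, applies Pinsker and Jensen to land on $\sqrt{\mathbb{E}_s[\mathrm{KL}(\pi_b,\pi_\phi)]}$, and then uses a Vapnik-style uniform bound on $\log\pi_\phi$ over the class $\Pi$ (their Lemma~2) to pass from empirical to population log-likelihood with the $\sqrt{(V-\log\delta)/n}$ term.

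The genuine difference is your step~2. The paper does \emph{not} compare the objective at $\pi_\phi$ versus at $\pi_b$; instead it rewrites the penalized policy update as a hard-constrained problem via Lagrangian duality, obtaining directly that $\mathbb{E}_{(s,a)\sim\mathcal{D}}[\log\pi_\phi(a|s)]\ge C_\lambda^*$ for some constant depending on $\lambda$ and the boundedness of $Q_\theta$, and then sets $C_\lambda = C_b - C_\lambda^*$ where $C_b$ is the (unknown but fixed) entropy of $\pi_b$. Your route yields an explicit $C_\lambda=\Theta(r^*/(\lambda(1-\gamma)))$, which is more informative, but it carries two extra requirements the paper avoids: (i) you need $\pi_b\in\Pi$ (or something close enough) for the comparison ``$\pi_b$ is itself feasible with zero KL'' to be admissible, and (ii) as written you compare the \emph{population} objective at $\pi_\phi$ versus $\pi_b$, but $\pi_\phi$ only maximizes the \emph{empirical} objective. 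The fix is easy---do the comparison at the empirical level first and then apply the uniform-deviation bound to both $\pi_\phi$ and $\pi_b$---but the current ordering of your steps~2 and~3 obscures this and should be cleaned up. Neither issue is fatal; with (i) added as an assumption and the empirical/population bookkeeping straightened out, your argument goes through and is arguably cleaner than the paper's Lagrangian detour.
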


\appref{app:proof_reliable} contains a proof and descriptions of the %constants/
assumptions in this result.  
Theorem \ref{thm:reliable} assures us of the reliability of the policy $\pi_\phi$ produced by CDC, guaranteeing that with high probability $\pi_\phi$ will not have much worse outcomes than the behavior policy $\pi_b$, where the probability here depends on the size of the dataset $\mathcal{D}$ and our choice of policy regularization penalty $\lambda$ (the constant $C_\lambda$ is a decreasing function of $\lambda$). 
In batch settings, expecting to learn the optimal policy is futile from limited data. Even ensuring \emph{any} improvement at all over an arbitrary $\pi_b$ is ambitious when we cannot ever test any policies in the environment, and reliability of the learned $\pi_\phi$ is thus a major concern. 

\begin{theorem} 
\label{thm:oe}
Let $\textnormal{OE}_{\textnormal{ag}} = \expectation[\max_{a}Q_{\theta}(s,a)] - \max_{a} Q^{*}(s,a)$ be the overestimation error in actions favored by an agent $\textnormal{ag}$. 
Here $Q_{\theta}$ denotes the estimate of true Q-value learned by $\textnormal{ag}$, which may either use \textbf{\algname{}} (with $\eta >0$) or a \textbf{baseline} version of \algref{alg:ours} with $\eta=0$ (with the same value of $\lambda$). Under the assumptions listed in \appref{app:proof_oe}, there co-exist constants $L_1$ and $L_2$ such that
\begin{align*}
\textnormal{OE}_{\textnormal{CDC}}\leq L_{1} - \eta L_{2} \leq \textnormal{OE}_{\textnormal{\regular}} 
\end{align*}
\vspace*{-2em}
\end{theorem}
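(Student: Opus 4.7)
I organize the argument around the choice $L_1 := \textnormal{OE}_{\textnormal{baseline}}$. With this definition the right-hand inequality $L_1 - \eta L_2 \leq \textnormal{OE}_{\textnormal{baseline}}$ collapses to $\eta L_2 \geq 0$, so it is trivially satisfied once I exhibit a non-negative $L_2$. All of the substantive work is therefore concentrated in the left-hand inequality $\textnormal{OE}_{\textnormal{CDC}} \leq \textnormal{OE}_{\textnormal{baseline}} - \eta L_2$ for a positive constant $L_2$; this is the quantitative form of the qualitative claim from Section~\ref{sec:qvlaue} that the $\Delta$ penalty suppresses policy-favored Q-values whenever they exceed those of dataset actions.

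The first step is a one-step pointwise comparison of the two Q-update operators. Let $\mathcal{T}_{\textnormal{base}}$ denote Step~\ref{step:value} with $\eta=0$ and $\mathcal{T}_{\textnormal{CDC}}$ the same step with $\eta>0$, holding $\lambda$ fixed so that the coupled policy updates agree. The first-order optimality condition for the CDC loss in~\eqref{eq:qloss_delta}, combined with the penalty gradient~\eqref{eq:delta_grad}, shows that at optimality the CDC minimizer shifts $Q_\theta(s,\hat a)$ below the pure TD target by an amount proportional to $\eta\boldsymbol{\varepsilon}$ whenever $\boldsymbol{\varepsilon}:=Q_\theta(s,\hat a)-Q_\theta(s,a)>0$. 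Using the appendix assumption that this active region carries strictly positive mass under the sampling and policy distributions (precisely the regime where extra-overestimation is worth penalizing), I obtain a pointwise inequality $(\mathcal{T}_{\textnormal{CDC}} Q)(s,\hat a) \leq (\mathcal{T}_{\textnormal{base}} Q)(s,\hat a) - \eta \cdot h(s,\hat a)$ with $h \geq 0$ and $\EE[h] > 0$ at the policy-favored action.

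I then propagate this one-step inequality to the fixed points. Theorem~\ref{thm:contract} furnishes an $L_\infty$ contraction with modulus $\gamma$ for $\mathcal{T}_{\textnormal{CDC}}$, and the same hypotheses yield the analogous statement for $\mathcal{T}_{\textnormal{base}}$. Banach's theorem gives unique fixed points $Q^*_{\textnormal{CDC}}$ and $Q^*_{\textnormal{base}}$, and iterating the pointwise operator inequality on $k$-th iterates, together with the standard geometric-series bound for contractive operators, produces a fixed-point gap of magnitude at least $\eta \cdot h_{\min}/(1-\gamma)$ at the policy-favored action, where $h_{\min}>0$ is a uniform lower bound on $\EE[h]$ along the iterate trajectory. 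Setting $L_2 := h_{\min}/(1-\gamma)$ and substituting into the definition of OE yields $\textnormal{OE}_{\textnormal{CDC}} \leq \textnormal{OE}_{\textnormal{baseline}} - \eta L_2$, which combined with the trivial choice of $L_1$ closes the two-sided chain.

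The main obstacle is the non-degeneracy step that guarantees $L_2>0$: one must rule out the trivial outcome in which $\Delta$ silently deactivates at the CDC fixed point (i.e.\ $Q_\theta(s,\hat a) \leq Q_\theta(s,a)$ almost surely), because in that case the pointwise operator inequality only yields the uninformative $\textnormal{OE}_{\textnormal{CDC}} \leq \textnormal{OE}_{\textnormal{baseline}}$. The appendix assumption that OOD actions with extrapolation-inflated Q-values retain positive probability under the candidate policy is exactly what prevents this collapse and is the formal counterpart to the physical intuition that overestimation is the very phenomenon motivating the theorem. A secondary subtlety is that the $\max_a$ inside the definition of OE is realized in Algorithm~\ref{alg:ours} through $N$ samples from $\pi_\phi$ rather than a true $\argmax$; this is bridged by the EMaQ-style sampled-max concentration argument of~\cite{ghasemipour2021emaq}, which lets me transfer the pointwise bound on $Q_\theta(s,\hat a)$ to a bound on $\EE[\max_a Q_\theta(s,a)]$ up to an absorbable error.
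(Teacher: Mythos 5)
Your reduction of the right-hand inequality (taking $L_1 := \textnormal{OE}_{\textnormal{\regular}}$, so that $L_1 - \eta L_2 \leq \textnormal{OE}_{\textnormal{\regular}}$ becomes $\eta L_2 \geq 0$) is formally admissible, but your route to the left-hand inequality has a gap that I believe is fatal, and it is instructive to contrast it with what the paper actually does. The paper's proof in Appendix~\ref{app:proof_oe} is a stylized Thrun--Schwartz-type calculation: a piecewise-constant function approximator over a partition of the action space with shared parameters $\langle \mathbf{q}, V, Q_{ID}\rangle$ (assumption A1), uniform noise on the pieces (A3), and---crucially---assumption (A4) that the $\Delta$ penalty enters as a \emph{single explicit gradient step} of step size $\mu$ performed before the TD maximization. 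The deflation of $\max_a Q_\theta(s,a)$ is then computed in closed form to be at least $\eta\mu\big(f(L_1,m,\alpha)-2\alpha L_1\big)$, which is exactly where the factor linear in $\eta$ comes from: the paper's $L_2 = \mu\big(f(L_1,m,\alpha)-2\alpha L_1\big)$ contains the step size. Your framework cannot produce this linear dependence. You work with \emph{exact} minimization of the penalized loss in \eqref{eq:qloss_delta}---you need this to speak of well-defined operators, invoke Theorem~\ref{thm:contract}, and apply Banach's theorem---but under exact minimization with a flexible approximator the OOD values $Q_\theta(s,\hat a)$ are anchored \emph{only} by the penalty (they never appear in the TD term, which is evaluated at $(s,a)\in\mathcal{D}$), so for \emph{any} $\eta>0$ the minimizer simply drives $\Delta$ to zero by capping $Q_\theta(s,\hat a)$ at $Q_\theta(s,a)$. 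The resulting fixed point is independent of the magnitude of $\eta$: you obtain an $\eta$-free cap, not a bound of the form $L_1 - \eta L_2$ with a fixed $L_2>0$.

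The same observation defeats your non-degeneracy device. You correctly identify that you need the active region $\{\boldsymbol{\varepsilon}>0\}$ to retain positive mass at the CDC fixed point so that $h_{\min}>0$, and you attribute this to an ``appendix assumption''---but no such assumption exists in the paper (its assumptions A1--A5 concern the stylized noise model and the one-step dynamics), and inside your own framework it is self-contradictory: if the penalized objective is exactly minimized, positive mass on $\{\boldsymbol{\varepsilon}>0\}$ contradicts first-order optimality, because lowering those OOD values strictly decreases the loss at no cost to the TD term. Two further issues are glossed over. First, the policy coupling: you assume the baseline and CDC policy updates ``agree'' because $\lambda$ is shared, but the two agents update their policies against \emph{different} Q-functions, so the policies diverge after the first iteration and the two value operators you wish to compare pointwise are not defined relative to a common policy. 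Second, without a concrete approximator structure like the paper's A1 (where deflating $V(s)$ moves \emph{all} actions in state $s$), it is not even well defined how penalizing an OOD action propagates to the rest of the Q-function, which is what your one-step pointwise inequality implicitly requires. The paper sidesteps all of this by abandoning fixed-point reasoning entirely and analyzing one update step in a model where every expectation is an explicit integral; note that even there the conclusion holds only for $\mu$ sufficiently small (relative to $1/\eta$) and $m$ sufficiently large, a tradeoff your proposal never surfaces.
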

%Similarly to the previous Theorems, we present the assumptions as well as a proof in the Appendix. 
This theorem (proved in \appref{app:proof_oe}) underscores the influence of the $\eta$ parameter in terms of containing the overestimation problem in offline  Q-learning. Mitigating this overestimation, which can be done using non-zero $\eta$, can ultimately lead into better returns as we show in the experimental section. In particular, CDC achieves lower overestimation by deliberately underestimating $Q$-values for non-observed state-actions (but it limits the degree of downward bias as described in Remark \ref{rem:limited}). 
\citet{buckman2020importance,jin2020pessimism} prove that some degree of pessimism is unavoidable to ensure non-catastrophic deployment of batch RL in practice, where it is unlikely there will ever be sufficient data for the agent to accurately estimate the consequences of all possible actions in all states.

\begin{remark}[Pessimism is limited in \algname{}]
\label{rem:limited}
Extreme pessimism leads to overly conservative policies with limited returns. The degree of pessimism in \algname{} remains limited (capped  once $\Delta_j = 0$), unlike lower-confidence bounds which can become arbitrarily pessimistic and hence limited in their return.
\end{remark}

\section{Related Work} % should be section if there's space
\vspace*{-0.5em}
Aiming for a practical framework to improve arbitrary existing policies, much research has studied batch RL~\citep{levineOfflineReinforcementLearning2020, Lange2012} and the issue of overestimation~\citep{ThrunSchwartz1993, DoubleQHasselt, vanhasselt2015deep}.  \cite{yu2020mopo, kidambi2020morel} consider model-based  approaches for batch RL, and \cite{agarwal2020optimistic} find  ensembles partly address some of the issues that arise in batch settings.
To remain suitably conservative, a popular class of approaches constrain the policy updates to remain in the vicinity of $\pi_b$ via, e.g., distributional matching~\cite{fujimotoOffPolicyDeepReinforcement2019}, support matching~\cite{kumarStabilizingOffPolicyQLearning2019,wuBehaviorRegularizedOffline2019}, imposition of a behavior-based prior \cite{siegel2020doing}, or implicit constraints via selective policy-updates
\citep{wang2020critic,pengAWR2019}. Similar to imitation learning in online setting~\cite{pomerleau1991efficient,RossGB11Imitation, HesterVPLSPSDOA17, osa2018algorithmic}, many of such methods need to explicitly estimate the behavior policy~\cite{fujimotoOffPolicyDeepReinforcement2019, kumarStabilizingOffPolicyQLearning2019, ghasemipour2021emaq}. Although methods like \citep{wang2020critic,pengAWR2019} do not have an explicit constraint on the policy update, they still can be categorized as a policy constrained-based approach as the policy update rule has been changed in a such a way that it selectively updates the policy utilizing information contained in the Q-values. Although these approaches show promising results, policy-constraint methods often work best for data collected from a high-quality (expert) behavior policy, and may struggle to significantly improve upon highly suboptimal $\pi_b$. Compared to the previous works, our \algname{} does not need to severely constrain candidate policies around $\pi_b$, which reduces achievable returns. Even with a strong policy constraint, the resulting policy is still affected by the learned Q-value, thus we still must correct Q-value issues. Instead of constraining policy updates, \cite{kumarConservativeQLearningOffline2020} advocate conservatively lower-bounding estimates of the value function. This allows for more flexibility to improve upon low-quality $\pi_b$. \cite{YaoPQI2020} considers a pessimistic and conservative approach to update Q-value by utilizing the marginalized state-action distribution of available data. Our proposed \algname{} algorithm is inspired by ideas from both the policy-constraint and value-constraint literature, demonstrating these address complementary issues of the batch RL problem and are both required in a performant solution. 

\section{Experiments}\label{sec:experiment}
In this section, we evaluate our \algname{} algorithm against existing methods on 32 tasks from the D4RL benchmark~\citep{fu2020d4rl}.
We also investigate the utility of individual \algname{} regularizers through ablation analyses, and demonstrate the broader applicability of our extra-overestimation penalty to off-policy evaluation in addition to batch RL.  Our training/evaluation setup exactly follows existing work \cite{kumarConservativeQLearningOffline2020, fu2020d4rl, fujimotoOffPolicyDeepReinforcement2019, kumarStabilizingOffPolicyQLearning2019}. See Appendices \ref{sec:app:d4rl},~\ref{sec:app:hyperparams}, and~\ref{sec:app:d4rl_detail} for a complete description of our experimental pipeline.

\paragraph{Setup. }
We compare \algname{} against existing batch RL methods: BEAR~\cite{kumarStabilizingOffPolicyQLearning2019}, BRAC-V/P~\cite{wuBehaviorRegularizedOffline2019}, BC~\cite{wuBehaviorRegularizedOffline2019}, CQL~\cite{kumarConservativeQLearningOffline2020}, BCQ~\cite{fujimotoOffPolicyDeepReinforcement2019}, 
RBVE\footnote{Comparison between \algname{} and the concurrently-proposed RBVE method  \cite{gulcehre2021regularized} is relegated to \secref{secapx:rbve-vs-cdc}.} 
\cite{gulcehre2021regularized},  and SAC~\cite{haarnoja2018soft}. This covers a rich set of strong batch RL methods ranging from behavioral cloning to value-constrained-based pessimistic methods, with the exception of SAC. SAC is a popular off-policy method that empirically performs quite well in online RL, and is included to investigate how online RL methods fare when applied to the batch setting.  Note that \algname{} was simply run on every task using the same network and the original rewards/actions provided in the task, without any manual task-specific reward-normalization/action-smoothing. Moreover, all these baseline methods also utilize an ensemble of Q networks as in~\eqref{eq:qcvx}. 

\begin{figure}[tb]
\centering\captionsetup[subfigure]{justification=centering}
\begin{subfigure}[t]{0.50\textwidth}
\centering
\includegraphics[width=\textwidth]{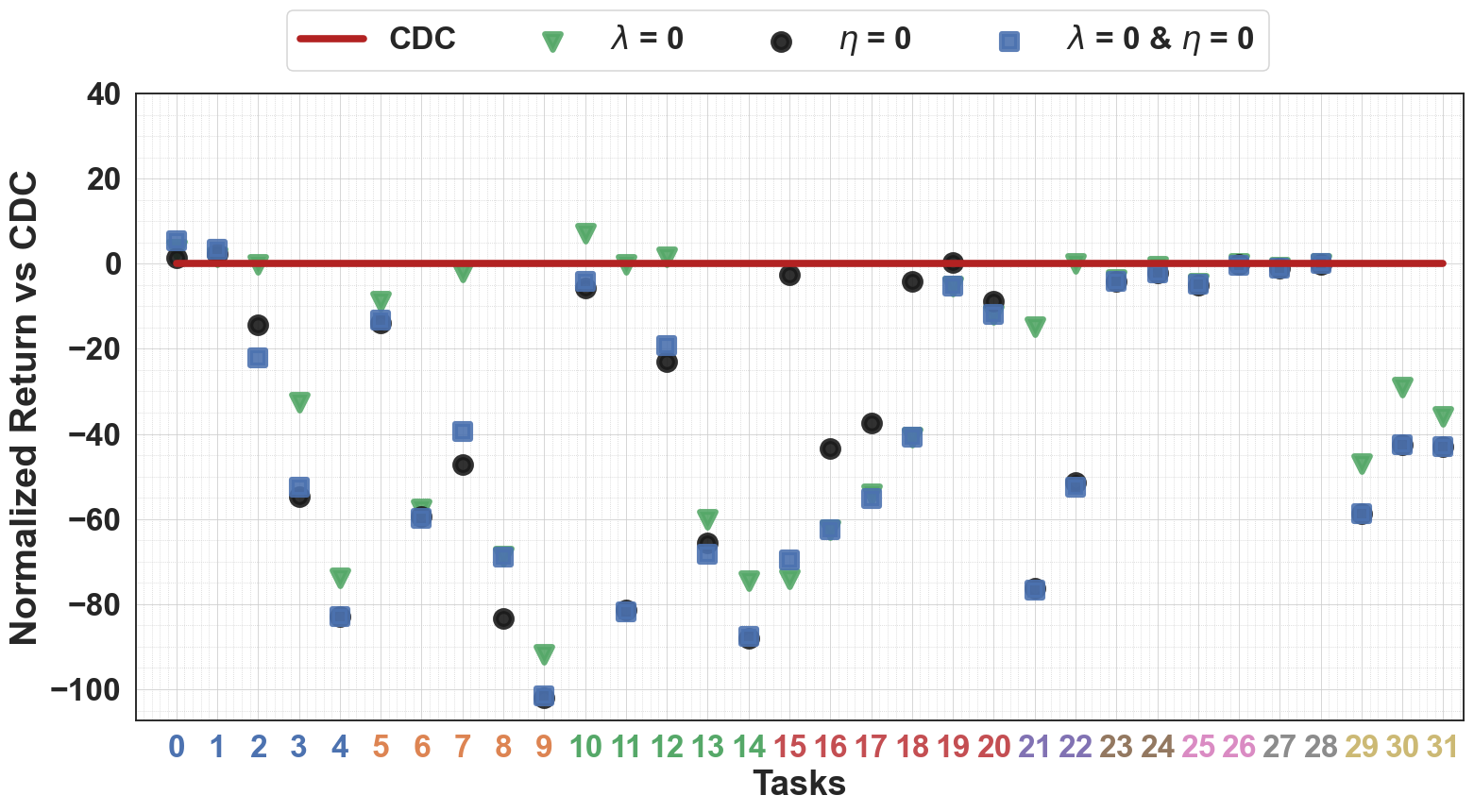}
\caption{}
\label{fig:abl_all}
\end{subfigure}%
\begin{subfigure}[t]{0.50\textwidth}
\centering
\includegraphics[width=\textwidth]{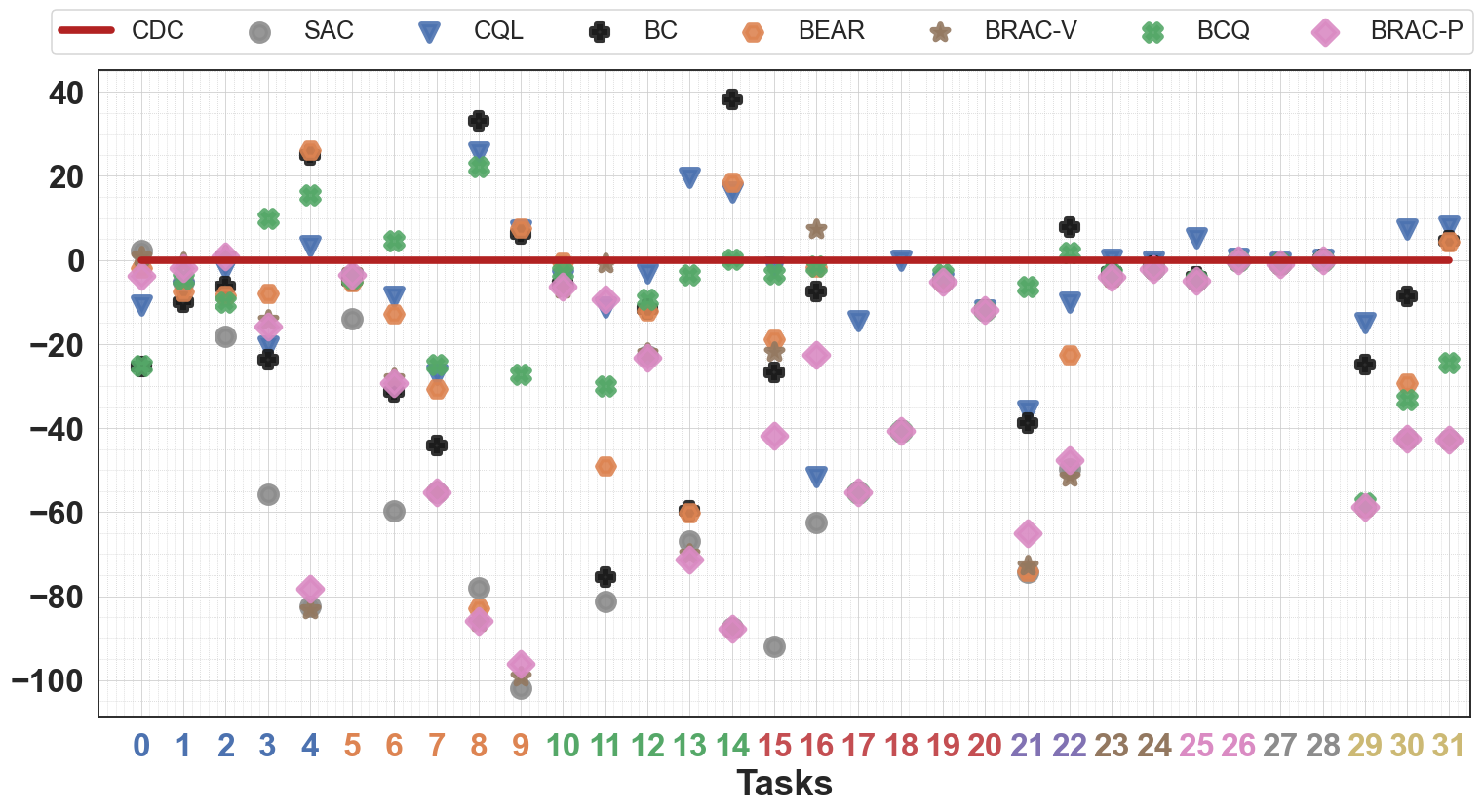}
\caption{}
\label{fig:main_all}
\end{subfigure}
\caption{
\textbf{
Difference in (normalized) 
return achieved by various algorithms vs \algname{} in 32 D4RL tasks}. X-axis colors indicate environments (see Table \ref{tab:full_d4rl}), and points below the line (\mytikzdot{}) indicate worse performance than \algname{}. 
 \textbf{\figref{fig:abl_all}} shows that 
fixing $\eta$ or $\lambda$ to zero (i.e.\ omitting our penalties) produces far worse returns than \algname{} (see also \tabref{tab:ablation}).
This ablation study proves that major performance gains for \algname{} stem from our novel pair of regularizers, as the \emph{only} difference between \algname{} and these ablated variants is either $\eta$ or $\lambda$ or both are set to zero in \algref{algo:cdc} (all other details are exactly the same). 
\textbf{\figref{fig:main_all}} compares \algname{} against existing batch RL algorithms, 
where  \algname{} overall compares favorably to each other method in head-to-head comparisons (see also \tabref{tab:full_d4rl}).
Note these figures can be compared to each other as well. 
}\end{figure}

\paragraph{Results.} 
\figref{fig:main_all} and  \tabref{tab:full_d4rl} illustrate that  \algname{} performs better than the majority of the other batch RL methods on the D4RL tasks.  Across all 32 tasks, \algname{} obtains normalized return of 1397, whereas the next-best method (CQL) achieves 1245. In head-to-head comparisons, \algname{} generates statistically significantly greater overall returns (\tabref{tab:full_d4rl}).
Unsurprisingly, behavioral-cloning (BC) works well on tasks with data generated by an expert $\pi_b$, while the online RL method, SAC, fares poorly in many tasks.
\algname{} remains reasonably competitive across all tasks, regardless of the environment or the quality of $\pi_b$ (i.e.\ random vs.\ expert).

Next we perform a comprehensive set of ablation studies to gauge the contribution of our proposed penalties in \algname{}. Here we run additional variants of \algref{alg:ours} without our penalties (i.e.\ $\eta = \lambda = 0$) %which is a variant of standard  off-policy actor-critic,
, with only our extra-overestimation penalty ($\lambda=0$), and with only our exploration penalty ($\eta=0$). \figref{fig:abl_all} and Tables~\ref{tab:ablation} show that both penalties are critical for the strong performance of \algname{}, with the extra-overestimation penalty $\Delta$ being of greater importance than exploration (see also~\figref{fig:abl_all}).  Note that \emph{all} our ablation variants still employ the lower confidence bound from \eqref{eq:qcvx}, which alone clearly does not suffice to correct extra-overestimation.

\begin{table*}[tp]
 \centering 
\resizebox{\columnwidth}{!}{
 \begin{tabular}{c|l||c|c|c|c|c|c|c||c|c} 
 \hline 
 \textbf{Index} &  \textbf{Task Name}  & \textbf{ SAC } & \textbf{ BC } & \textbf{ BRAC-P } & \textbf{ BRAC-V } & \textbf{ BEAR } & \textbf{ BCQ } & \textbf{ CQL\footnotemark[1] } & \textbf{ $\lambda$ = 0 \& $\eta$ = 0 } & \textbf{ CDC } \\ \hline   
\textbf{\textcolor{color_0}{0}} & \small{halfcheetah-random}  & $29.6$ & $2.1$ & $23.5$ & $28.1$ & $25.5$ & $2.25$ & $16.71$ & $\mathbf{32.8}$ & $27.36$\\
\textbf{\textcolor{color_0}{1}} & \small{halfcheetah-medium}  & $40.97$ & $36.1$ & $44.0$ & $45.5$ & $38.6$ & $41.48$ & $38.97$ & $\mathbf{49.51}$ & $46.05$\\
\textbf{\textcolor{color_0}{2}} & \small{halfcheetah-medium-replay}  & $26.47$ & $38.4$ & $45.6$ & $\mathbf{45.9}$ & $36.2$ & $34.79$ & $42.77$ & $22.72$ & $44.74$\\
\textbf{\textcolor{color_0}{3}} & \small{halfcheetah-medium-expert}  & $3.78$ & $35.8$ & $43.8$ & $45.3$ & $51.7$ & $\mathbf{69.64}$ & $39.18$ & $7.12$ & $59.64$\\
\textbf{\textcolor{color_0}{4}} & \small{halfcheetah-expert}  & $-0.41$ & $107.0$ & $3.8$ & $-1.1$ & $\mathbf{108.2}$ & $97.44$ & $85.49$ & $-0.95$ & $82.05$\\
\textbf{\textcolor{color_1}{5}} & \small{hopper-random}  & $0.8$ & $9.8$ & $11.1$ & $12.0$ & $9.5$ & $10.6$ & $10.37$ & $1.58$ & $\mathbf{14.76}$\\
\textbf{\textcolor{color_1}{6}} & \small{hopper-medium}  & $0.81$ & $29.0$ & $31.2$ & $32.3$ & $47.6$ & $\mathbf{65.07}$ & $51.79$ & $0.58$ & $60.39$\\
\textbf{\textcolor{color_1}{7}} & \small{hopper-medium-replay}  & $0.59$ & $11.8$ & $0.7$ & $0.8$ & $25.3$ & $31.05$ & $28.67$ & $16.4$ & $\mathbf{55.89}$\\
\textbf{\textcolor{color_1}{8}} & \small{hopper-medium-expert}  & $8.96$ & $\mathbf{119.9}$ & $1.1$ & $0.8$ & $4.0$ & $109.1$ & $112.46$ & $18.07$ & $86.9$\\
\textbf{\textcolor{color_1}{9}} & \small{hopper-expert}  & $0.8$ & $109.0$ & $6.6$ & $3.7$ & $\mathbf{110.3}$ & $75.52$ & $109.97$ & $1.27$ & $102.75$\\
\textbf{\textcolor{color_2}{10}} & \small{walker2d-random}  & $1.3$ & $1.6$ & $0.8$ & $0.5$ & $6.7$ & $4.31$ & $2.77$ & $2.96$ & $\mathbf{7.22}$\\
\textbf{\textcolor{color_2}{11}} & \small{walker2d-medium}  & $0.81$ & $6.6$ & $72.7$ & $81.3$ & $33.2$ & $52.03$ & $71.03$ & $0.33$ & $\mathbf{82.13}$\\
\textbf{\textcolor{color_2}{12}} & \small{walker2d-medium-replay}  & $0.04$ & $11.3$ & $-0.3$ & $0.9$ & $10.8$ & $13.67$ & $19.95$ & $3.81$ & $\mathbf{22.96}$\\
\textbf{\textcolor{color_2}{13}} & \small{walker2d-medium-expert}  & $4.09$ & $11.3$ & $-0.3$ & $0.9$ & $10.8$ & $67.26$ & $\mathbf{90.55}$ & $2.65$ & $70.91$\\
\textbf{\textcolor{color_2}{14}} & \small{walker2d-expert}  & $0.05$ & $\mathbf{125.7}$ & $-0.2$ & $-0.0$ & $106.1$ & $87.59$ & $103.6$ & $-0.1$ & $87.54$\\
\textbf{\textcolor{color_3}{15}} & \small{antmaze-umaze}  & $0.0$ & $65$ & $50$ & $70$ & $73$ & $88.52$ & $90.12$ & $22.22$ & $\mathbf{91.85}$\\
\textbf{\textcolor{color_3}{16}} & \small{antmaze-umaze-diverse}  & $0.0$ & $55$ & $40$ & $\mathbf{70}$ & $61$ & $61.11$ & $11.11$ & $0.0$ & $62.59$\\
\textbf{\textcolor{color_3}{17}} & \small{antmaze-medium-play}  & $0.0$ & $0$ & $0$ & $0$ & $0$ & $0.0$ & $40.74$ & $0.0$ & $\mathbf{55.19}$\\
\textbf{\textcolor{color_3}{18}} & \small{antmaze-medium-diverse}  & $0.0$ & $0$ & $0$ & $0$ & $0$ & $0.0$ & $40.74$ & $0.0$ & $\mathbf{40.74}$\\
\textbf{\textcolor{color_3}{19}} & \small{antmaze-large-play}  & $0.0$ & $0$ & $0$ & $0$ & $0$ & $1.85$ & $0.0$ & $0.0$ & $\mathbf{5.19}$\\
\textbf{\textcolor{color_3}{20}} & \small{antmaze-large-diverse}  & $0.0$ & $0$ & $0$ & $0$ & $0$ & $0.0$ & $0.0$ & $0.0$ & $\mathbf{11.85}$\\
\textbf{\textcolor{color_4}{21}} & \small{pen-human}  & $-1.15$ & $34.4$ & $8.1$ & $0.6$ & $-1.0$ & $66.88$ & $37.5$ & $-3.43$ & $\mathbf{73.19}$\\
\textbf{\textcolor{color_4}{22}} & \small{pen-cloned}  & $-0.64$ & $\mathbf{56.9}$ & $1.6$ & $-2.5$ & $26.5$ & $50.86$ & $39.2$ & $-3.4$ & $49.18$\\
\textbf{\textcolor{color_5}{23}} & \small{hammer-human}  & $0.26$ & $1.5$ & $0.3$ & $0.2$ & $0.3$ & $0.91$ & $\mathbf{4.4}$ & $0.26$ & $4.34$\\
\textbf{\textcolor{color_5}{24}} & \small{hammer-cloned}  & $0.27$ & $0.8$ & $0.3$ & $0.3$ & $0.3$ & $0.38$ & $2.1$ & $0.26$ & $\mathbf{2.37}$\\
\textbf{\textcolor{color_6}{25}} & \small{door-human}  & $-0.34$ & $0.5$ & $-0.3$ & $-0.3$ & $-0.3$ & $-0.05$ & $\mathbf{9.9}$ & $-0.16$ & $4.62$\\
\textbf{\textcolor{color_6}{26}} & \small{door-cloned}  & $-0.34$ & $-0.1$ & $-0.1$ & $-0.1$ & $-0.1$ & $0.01$ & $\mathbf{0.4}$ & $-0.36$ & $0.01$\\
\textbf{\textcolor{color_7}{27}} & \small{relocate-human}  & $-0.31$ & $0$ & $-0.3$ & $-0.3$ & $-0.3$ & $-0.04$ & $0.2$ & $-0.31$ & $\mathbf{0.73}$\\
\textbf{\textcolor{color_7}{28}} & \small{relocate-cloned}  & $-0.11$ & $\mathbf{-0.1}$ & $-0.3$ & $-0.3$ & $-0.2$ & $-0.28$ & $-0.1$ & $-0.15$ & $-0.24$\\
\textbf{\textcolor{color_8}{29}} & \small{kitchen-complete}  & $0.0$ & $33.8$ & $0$ & $0$ & $0$ & $0.83$ & $43.8$ & $0.0$ & $\mathbf{58.7}$\\
\textbf{\textcolor{color_8}{30}} & \small{kitchen-partial}  & $0.0$ & $33.8$ & $0$ & $0$ & $13.1$ & $9.26$ & $\mathbf{49.8}$ & $0.0$ & $42.5$\\
\textbf{\textcolor{color_8}{31}} & \small{kitchen-mixed}  & $0.0$ & $47.5$ & $0$ & $0$ & $47.2$ & $18.43$ & $\mathbf{51}$ & $0.0$ & $42.87$\\
 \hline \hline 
 & Total Score & $116.28$ & $984.4$ & $383.4$ & $434.5$ & $844.0$ & $1060.46$ & $1245.2$ & $173.67$ & $\mathbf{1396.99}$ \\ 
 \hline \hline 
 & $\mathbf{p}$\textbf{-value vs.\ CDC} & 7.0e-07 & 1.6e-03 & 5.3e-07 & 3.9e-06 & 1.1e-04 & 6.1e-04 & 3.6e-02 & 1.8e-06 &  -  \\ 
 \hline
 \end{tabular} }
 \caption{{ \textbf{Return achieved in deployment of policies learned via different batch RL methods.} The return in each environment here is normalized using \eqref{eq:score_norm} as originally advocated by \cite{fu2020d4rl}.}
 For each method: we perform a head-to-head comparison against \algname{} across the D4RL tasks, reporting the $p$-value of a (one-sided) Wilcoxon signed rank test~\cite{Wilcoxon1945} that compares this method's return against that of \algname{} (over the 32 tasks). Here $\lambda = 0~\&~\eta = 0$ is variant of \algref{alg:ours} without our penalties where it proves that major performance gains for CDC stem from our novel pair of regularizers.} %\normalsize
 \label{tab:full_d4rl}
 \end{table*}
\footnotetext[1]{
The results for CQL are taken from the official author-provided codes [\url{https://github.com/aviralkumar2907/CQL}] of \citesi{kumarConservativeQLearningOffline2020}. The published CQL codes are used to produce results for all but Adroit and FrankaKitchen where the codes are not available. For these latter domains, we simply use the CQL results reported in the paper of \citesi{kumarConservativeQLearningOffline2020}. }

\subsection{Offline Policy Evaluation}
\label{sec:exppolicyevaluation}
\begin{wrapfigure}{r}{0.5\textwidth}
\vskip -5em
\centering
\includegraphics[width=0.5\textwidth]{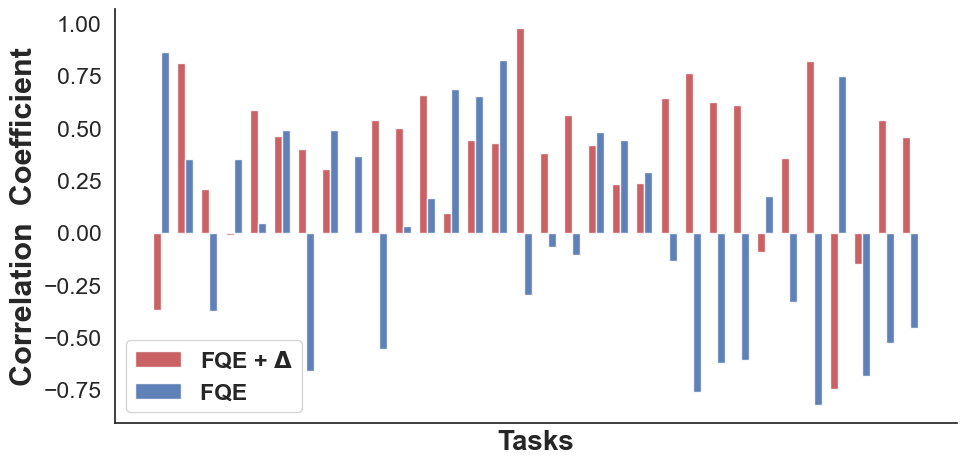}
\vspace*{-2em}
\caption{\textbf{How well OPE estimates correlate with actual return} achieved by 20 different policies for each D4RL task. Due to unmitigated overestimation, FQE estimates correlate \emph{negatively} with true returns in 15 of 32 tasks (using  $\Delta$ in FQE reduces this to 4). 
}
\label{fig:fqe_pearson}
\vskip -0.9em
\end{wrapfigure}
The true practical applicability of batch RL remains however hampered without the ability to do proper algorithm/hyperparameter selection. Table \ref{tab:full_d4rl} shows that no algorithm universally dominates all others across all  environments or behavior-policies. In practice, it is difficult to know which technique will perform best, unless one can do proper offline policy evaluation (OPE) of different candidate policies before their actual online deployment~\cite{paine2020hyperparameter, fu2021benchmarks}.

OPE aims to estimate the performance of a given policy under the same setting considered here, with offline data collected by an unknown behavior policy~\cite{paine2020hyperparameter,  Hoangfqe19}.
Beyond algorithm/hyperparameter comparison, OPE is often employed for critical policy-making decisions where environment interaction is no longer an option, e.g., sensitive healthcare applications~\cite{opegottesman20a}.
One practical OPE method for data of the form in $\mathcal{D}$ is Fitted Q Evaluation (FQE) \cite{Hoangfqe19}. To score a given policy $\pi$, FQE iterates temporal difference updates of the form \eqref{eq:qupdate} using the standard Bellman operator from \eqref{eq:realq} in place of EMaQ. After learning an estimate $\hat{Q}^\pi$, FQE simply estimates the return of $\pi$ via the expectation of $\hat{Q}^\pi(s,a)$ over the initial state distribution and actions sampled from $\pi$.

However, like batch RL, OPE also relies on limited data and thus
can still suffer from severe Q-value estimation errors.
To contain the overestimation bias, we can regularize the FQE temporal difference updates with our $\Delta$ penalty, in a similar manner to \eqref{eq:qloss_delta}. \figref{fig:fqe_pearson} compares the performance of $\Delta$-penalization of FQE (with $\eta=1$ throughout) against the standard unregularized FQE. Here we use both OPE methods to score 20 different policies (learned via different settings) and gauge OPE-quality via the Pearson correlation between OPE estimated returns and the actual return (over our $20$ policies). We observe higher correlation for FQE + $\Delta$ (0.37 on average) over FQE (0.01 on average) in the majority of tasks, demonstrating the usefulness of our regularizers. The usefulness of our regularizers thus extend beyond batch RL and carry to the off-policy evaluation setting.

\section{Discussion}
Here we propose a simple and effective algorithm for batch RL by introducing a simple pair of regularizers that abate the challenge of learning how to act from limited data. The first constrains the value update to mitigate extra-overestimation error, while the latter constrains the policy update to ensure candidate policies do not stray too far from the offline data. Unlike previous work, this paper highlights the utility of simultaneous policy and value regularization in batch RL. One can envision other combinations of alternative policy and value regularizers that may perform even better than the particular policy/value penalties used in \algname{}. That said, our \algname{} penalties are particularly simple to incorporate into arbitrary actor-critic RL frameworks, and operate synergistically as illustrated in the ablation studies. Comprehensive experiments on standard offline  continuous-control benchmarks suggest that CDC compares favorably with state-of-the-art methods for batch RL, and our proposed penalties are also useful to improve offline policy evaluation. The broader impact of this work will hopefully be to improve batch RL performance in offline applications, but we caution that unobserved confounding remains another key challenge in real-world data that was not addressed in this work.

\clearpage
\bibliography{references}
\bibliographystyle{abbrvnat}

\clearpage
\clearpage \newpage
\beginsupplement
\onecolumn
\appendix
 
\begin{center}
{\LARGE  \bf Appendix: \ 
Continuous Doubly Constrained Batch \\
\hspace*{25.5mm} Reinforcement Learning  
}
\end{center}

\section{Experiment Details}\label{sec:app:d4rl}

\begin{figure}
\centering\captionsetup[subfigure]{justification=centering}
\begin{subfigure}[t]{ 0.2\textwidth} 
\centering 
\includegraphics[width=\textwidth]{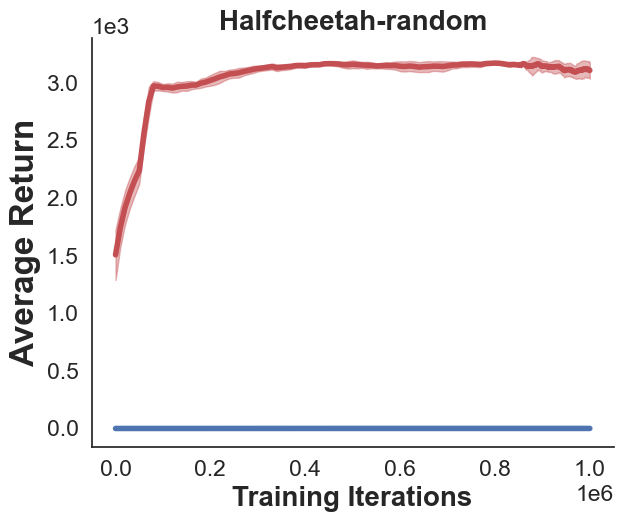} 
\label{fig:halfcheetah-random-v0_3bcqcd.png} 
\end{subfigure}% 
~ 
\begin{subfigure}[t]{ 0.2\textwidth} 
\centering 
\includegraphics[width=\textwidth]{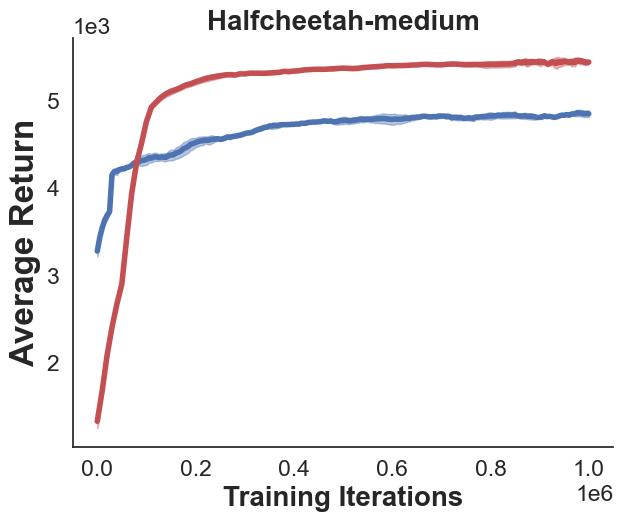} 
\label{fig:halfcheetah-medium-v0_3bcqcd.png} 
\end{subfigure}% 
~ 
\begin{subfigure}[t]{ 0.2\textwidth} 
\centering 
\includegraphics[width=\textwidth]{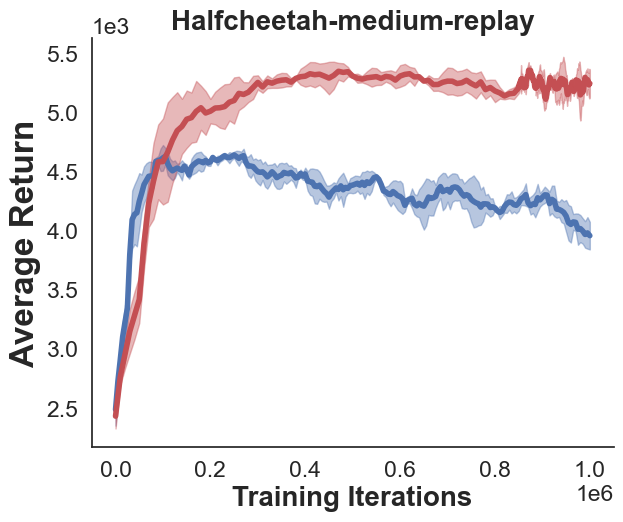} 
\label{fig:halfcheetah-medium-replay-v0_3bcqcd.png} 
\end{subfigure}% 
~ 
\begin{subfigure}[t]{ 0.2\textwidth} 
\centering 
\includegraphics[width=\textwidth]{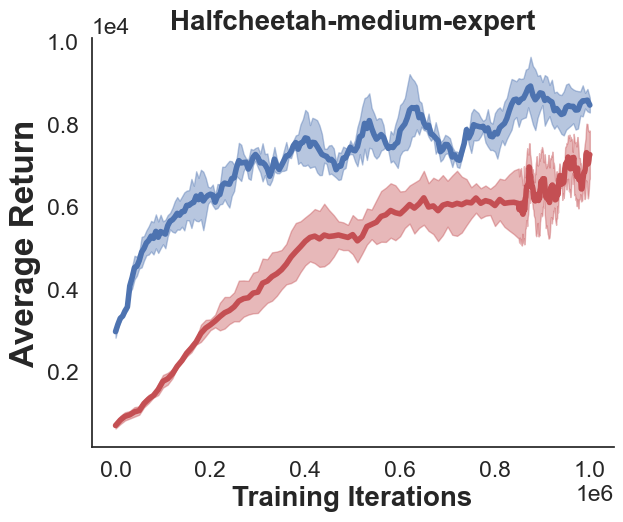} 
\label{fig:halfcheetah-medium-expert-v0_3bcqcd.png} 
\end{subfigure}% 

\begin{subfigure}[t]{ 0.2\textwidth} 
\centering 
\includegraphics[width=\textwidth]{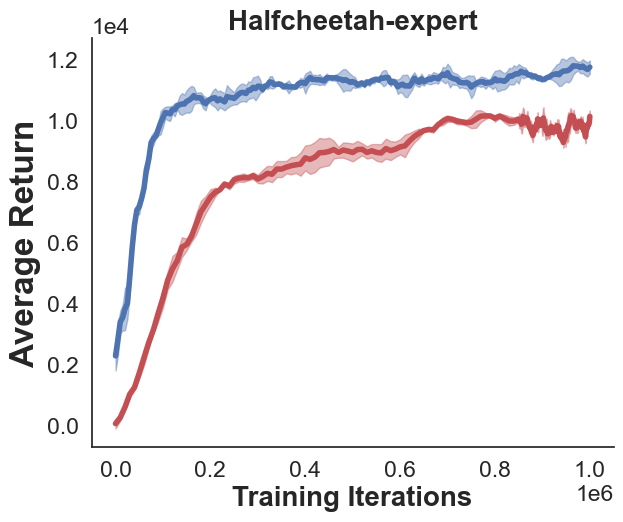} 
\label{fig:halfcheetah-expert-v0_3bcqcd.png} 
\end{subfigure}% 
~ 
\begin{subfigure}[t]{ 0.2\textwidth} 
\centering 
\includegraphics[width=\textwidth]{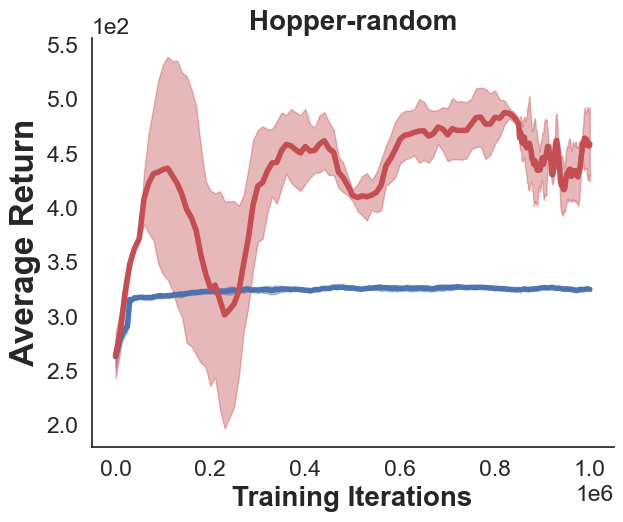} 
\label{fig:hopper-random-v0_3bcqcd.png} 
\end{subfigure}% 
~ 
\begin{subfigure}[t]{ 0.2\textwidth} 
\centering 
\includegraphics[width=\textwidth]{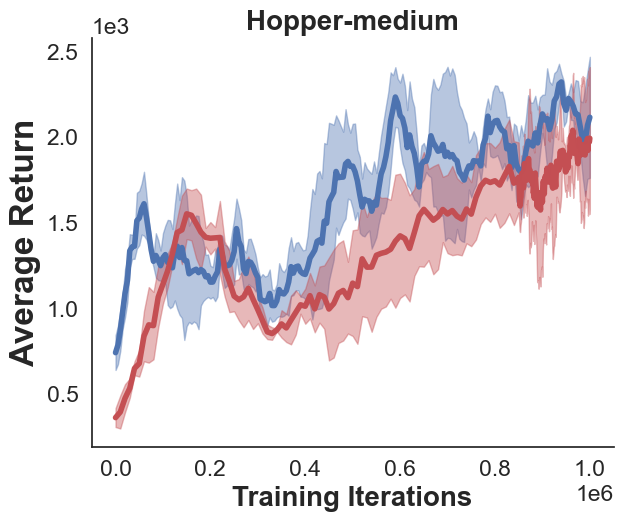} 
\label{fig:hopper-medium-v0_3bcqcd.png} 
\end{subfigure}% 
~ 
\begin{subfigure}[t]{ 0.2\textwidth} 
\centering 
\includegraphics[width=\textwidth]{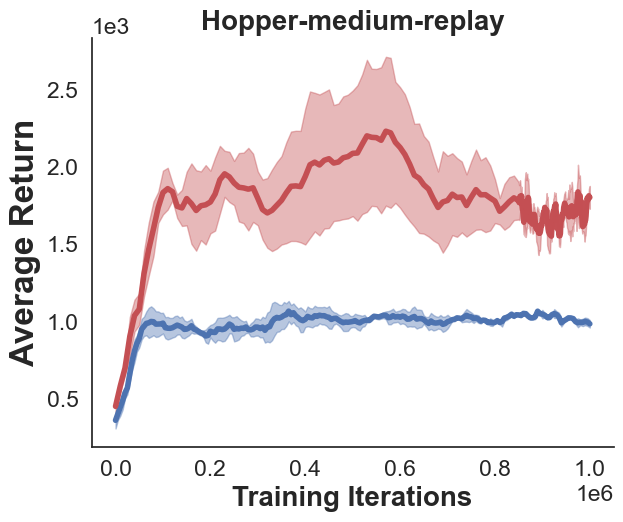} 
\label{fig:hopper-medium-replay-v0_3bcqcd.png} 
\end{subfigure}% 

\begin{subfigure}[t]{ 0.2\textwidth} 
\centering 
\includegraphics[width=\textwidth]{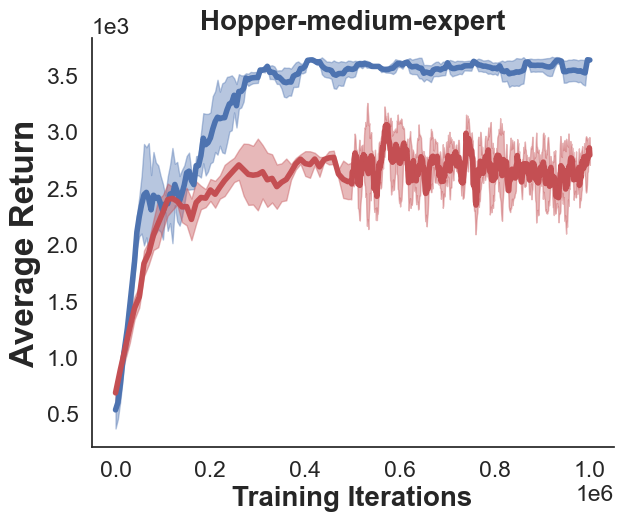} 
\label{fig:hopper-medium-expert-v0_3bcqcd.png} 
\end{subfigure}% 
~ 
\begin{subfigure}[t]{ 0.2\textwidth} 
\centering 
\includegraphics[width=\textwidth]{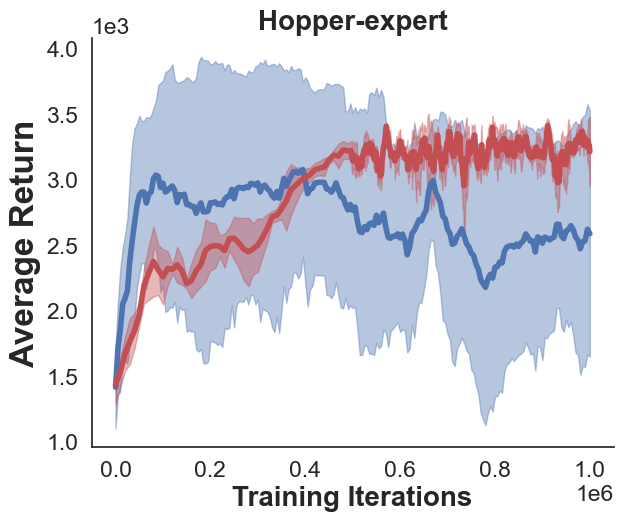} 
\label{fig:hopper-expert-v0_3bcqcd.png} 
\end{subfigure}% 
~ 
\begin{subfigure}[t]{ 0.2\textwidth} 
\centering 
\includegraphics[width=\textwidth]{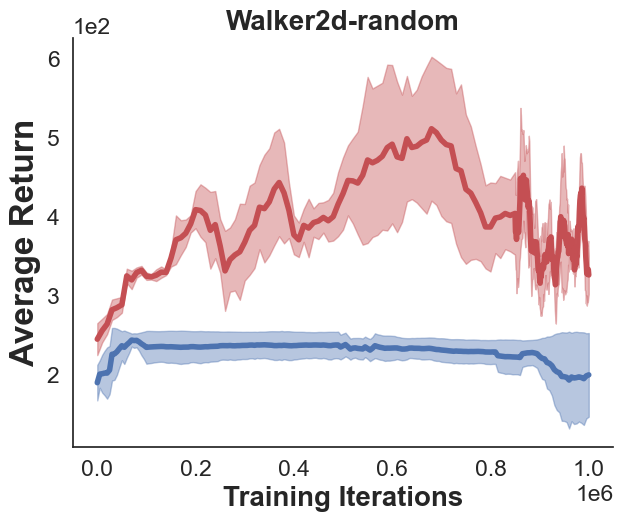} 
\label{fig:walker2d-random-v0_3bcqcd.png} 
\end{subfigure}% 
~ 
\begin{subfigure}[t]{ 0.2\textwidth} 
\centering 
\includegraphics[width=\textwidth]{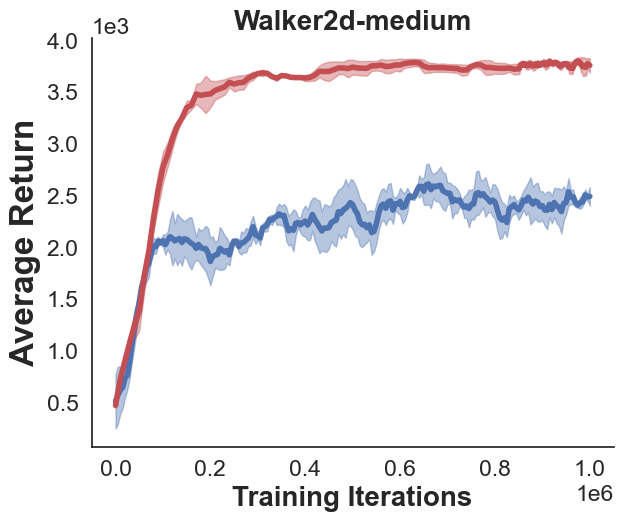} 
\label{fig:walker2d-medium-v0_3bcqcd.png} 
\end{subfigure}% 

\begin{subfigure}[t]{ 0.2\textwidth} 
\centering 
\includegraphics[width=\textwidth]{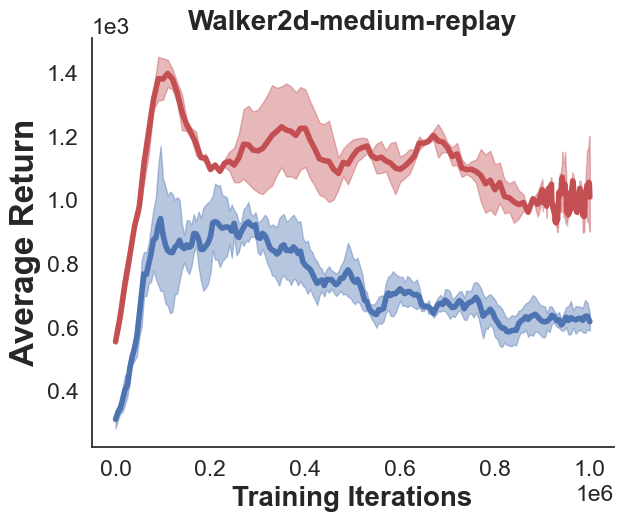} 
\label{fig:walker2d-medium-replay-v0_3bcqcd.png} 
\end{subfigure}% 
~ 
\begin{subfigure}[t]{ 0.2\textwidth} 
\centering 
\includegraphics[width=\textwidth]{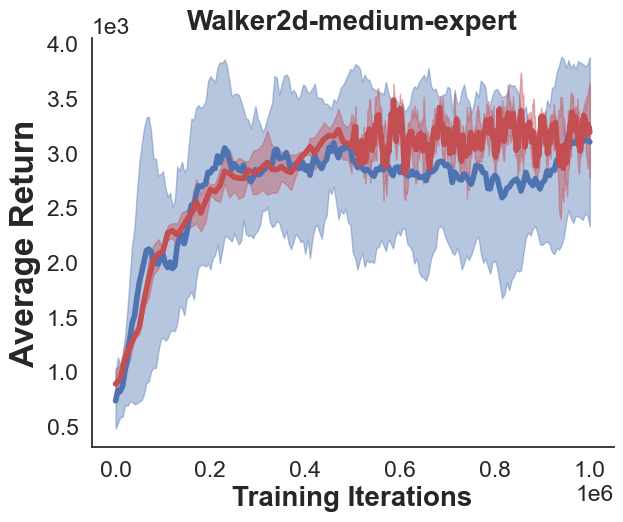} 
\label{fig:walker2d-medium-expert-v0_3bcqcd.png} 
\end{subfigure}% 
~ 
\begin{subfigure}[t]{ 0.2\textwidth} 
\centering 
\includegraphics[width=\textwidth]{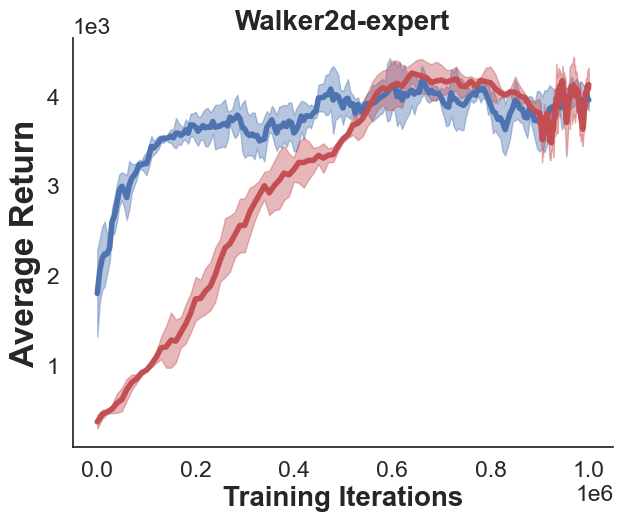} 
\label{fig:walker2d-expert-v0_3bcqcd.png} 
\end{subfigure}% 
~ 
\begin{subfigure}[t]{ 0.2\textwidth} 
\centering 
\includegraphics[width=\textwidth]{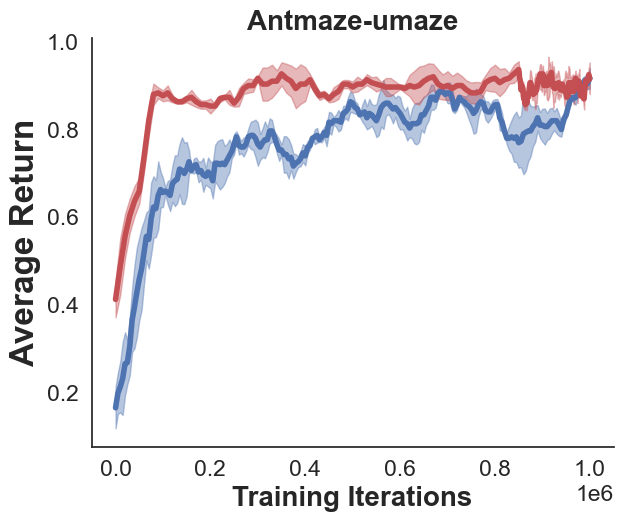} 
\label{fig:antmaze-umaze-v0_3bcqcd.png} 
\end{subfigure}% 

\begin{subfigure}[t]{ 0.2\textwidth} 
\centering 
\includegraphics[width=\textwidth]{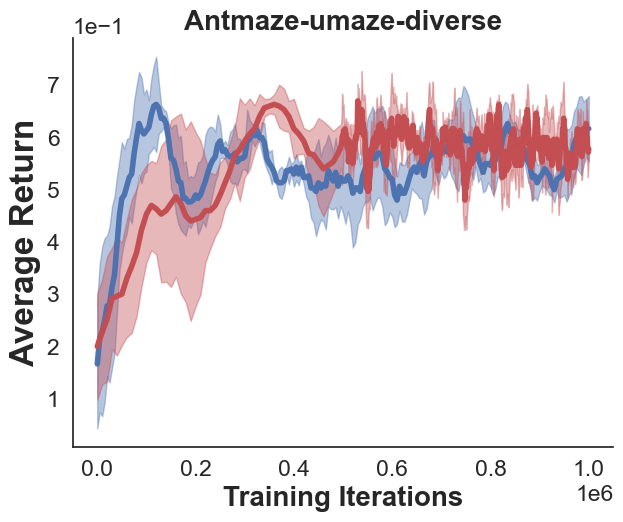} 
\label{fig:antmaze-umaze-diverse-v0_3bcqcd.png} 
\end{subfigure}% 
~ 
\begin{subfigure}[t]{ 0.2\textwidth} 
\centering 
\includegraphics[width=\textwidth]{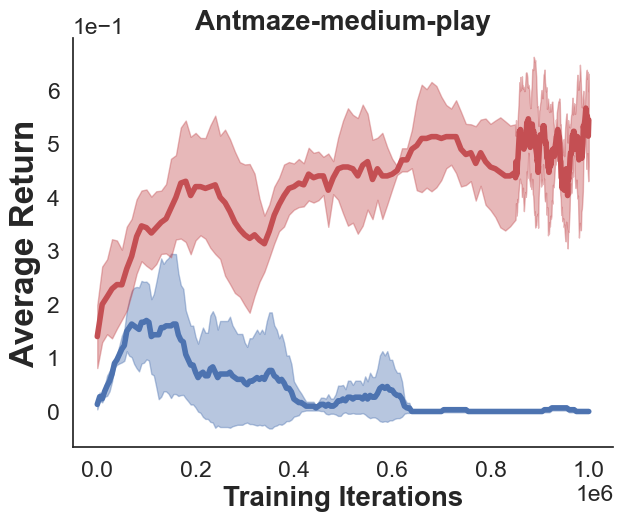} 
\label{fig:antmaze-medium-play-v0_3bcqcd.png} 
\end{subfigure}% 
~ 
\begin{subfigure}[t]{ 0.2\textwidth} 
\centering 
\includegraphics[width=\textwidth]{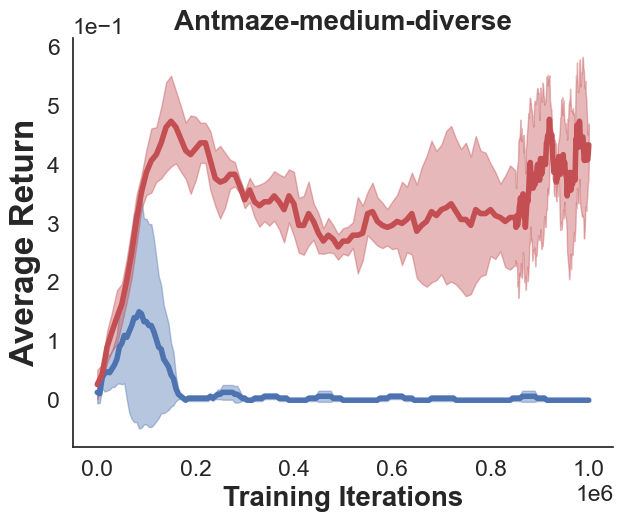} 
\label{fig:antmaze-medium-diverse-v0_3bcqcd.png} 
\end{subfigure}% 
~ 
\begin{subfigure}[t]{ 0.2\textwidth} 
\centering 
\includegraphics[width=\textwidth]{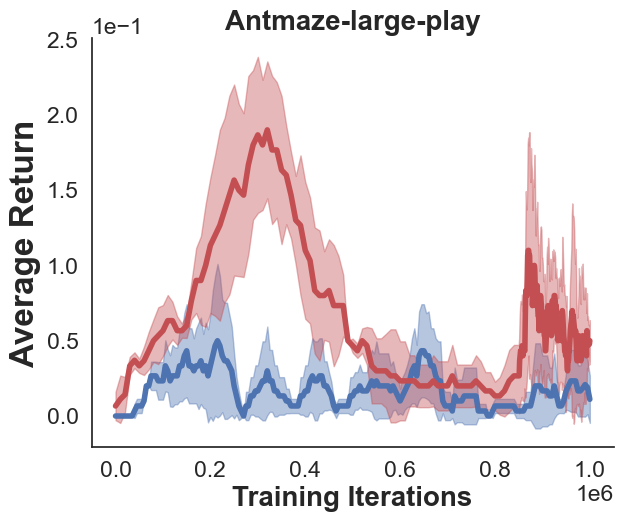} 
\label{fig:antmaze-large-play-v0_3bcqcd.png} 
\end{subfigure}% 

\begin{subfigure}[t]{ 0.2\textwidth} 
\centering 
\includegraphics[width=\textwidth]{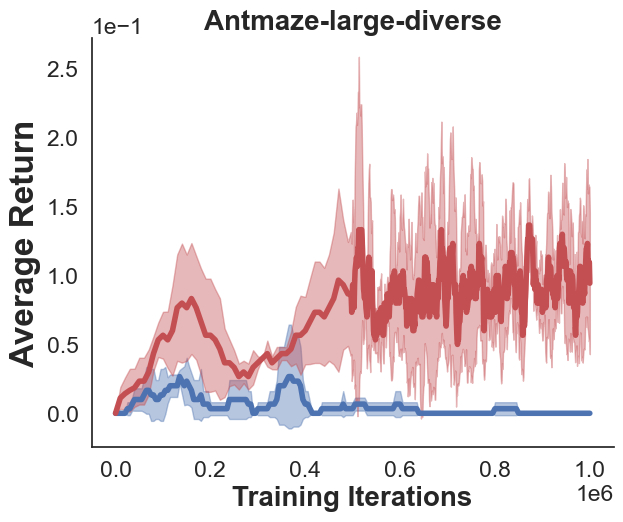} 
\label{fig:antmaze-large-diverse-v0_3bcqcd.png} 
\end{subfigure}% 
~ 
\begin{subfigure}[t]{ 0.2\textwidth} 
\centering 
\includegraphics[width=\textwidth]{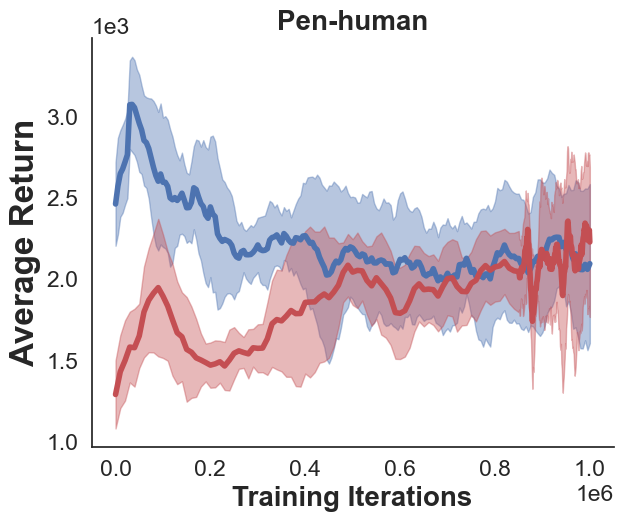} 
\label{fig:pen-human-v0_3bcqcd.png} 
\end{subfigure}% 
~ 
\begin{subfigure}[t]{ 0.2\textwidth} 
\centering 
\includegraphics[width=\textwidth]{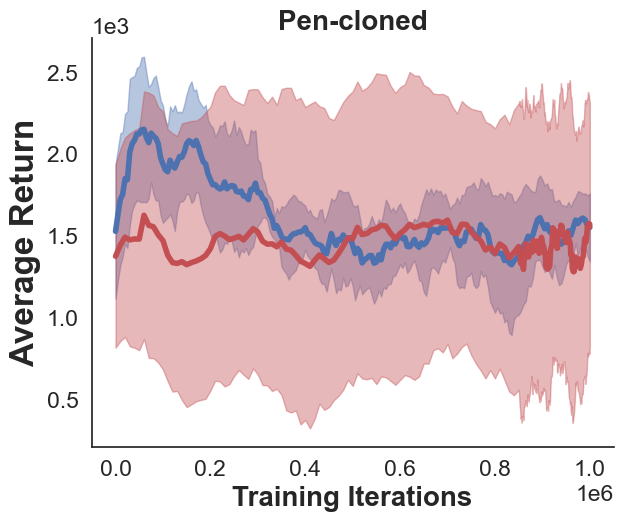} 
\label{fig:pen-cloned-v0_3bcqcd.png} 
\end{subfigure}% 
~ 
\begin{subfigure}[t]{ 0.2\textwidth} 
\centering 
\includegraphics[width=\textwidth]{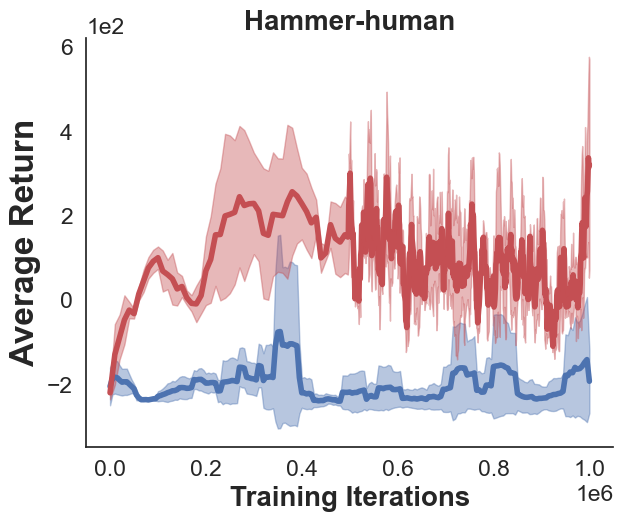} 
\label{fig:hammer-human-v0_3bcqcd.png} 
\end{subfigure}% 

\begin{subfigure}[t]{ 0.2\textwidth} 
\centering 
\includegraphics[width=\textwidth]{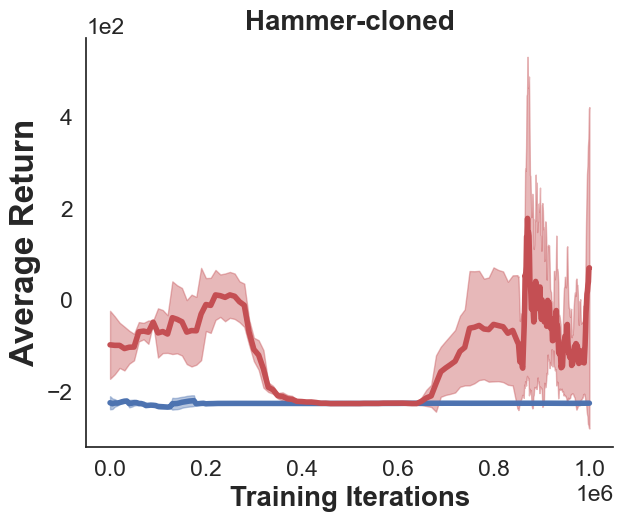} 
\label{fig:hammer-cloned-v0_3bcqcd.png} 
\end{subfigure}% 
~ 
\begin{subfigure}[t]{ 0.2\textwidth} 
\centering 
\includegraphics[width=\textwidth]{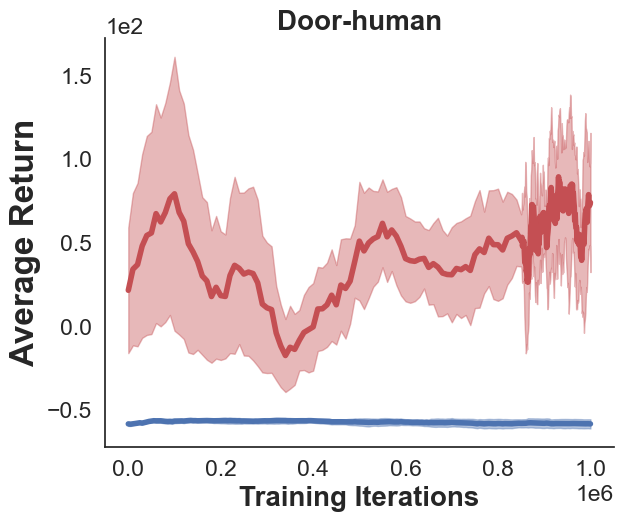} 
\label{fig:door-human-v0_3bcqcd.png} 
\end{subfigure}% 
~ 
\begin{subfigure}[t]{ 0.2\textwidth} 
\centering 
\includegraphics[width=\textwidth]{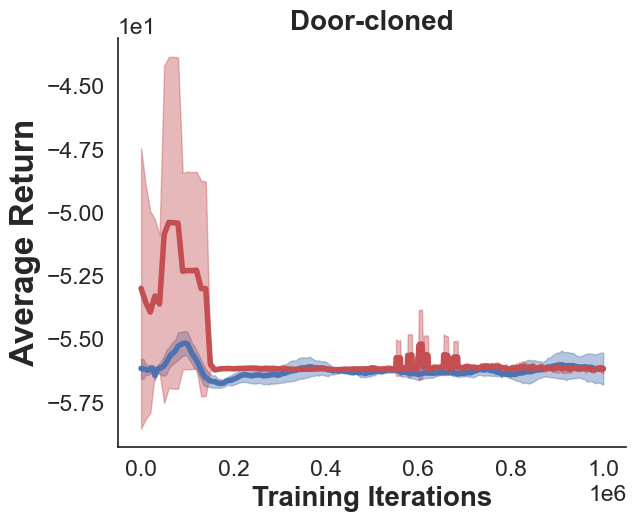} 
\label{fig:door-cloned-v0_3bcqcd.png} 
\end{subfigure}% 
~ 
\begin{subfigure}[t]{ 0.2\textwidth} 
\centering 
\includegraphics[width=\textwidth]{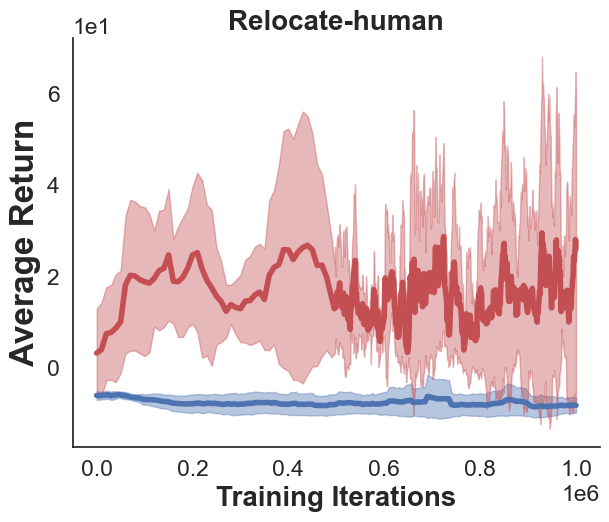} 
\label{fig:relocate-human-v0_3bcqcd.png} 
\end{subfigure}% 

\begin{subfigure}[t]{ 0.2\textwidth} 
\centering 
\includegraphics[width=\textwidth]{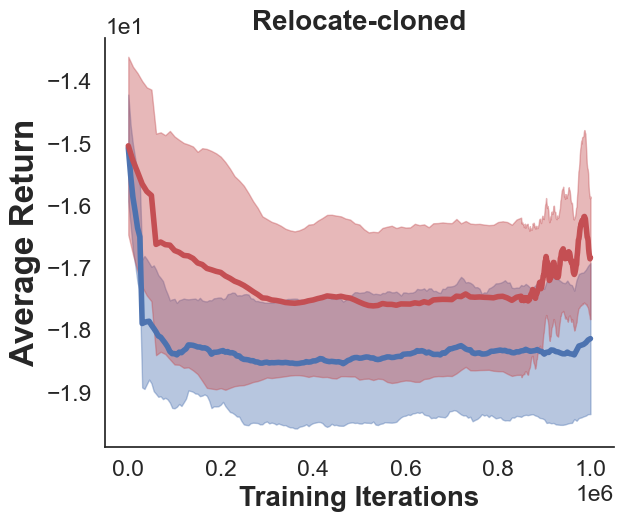} 
\label{fig:relocate-cloned-v0_3bcqcd.png} 
\end{subfigure}% 
~ 
\begin{subfigure}[t]{ 0.2\textwidth} 
\centering 
\includegraphics[width=\textwidth]{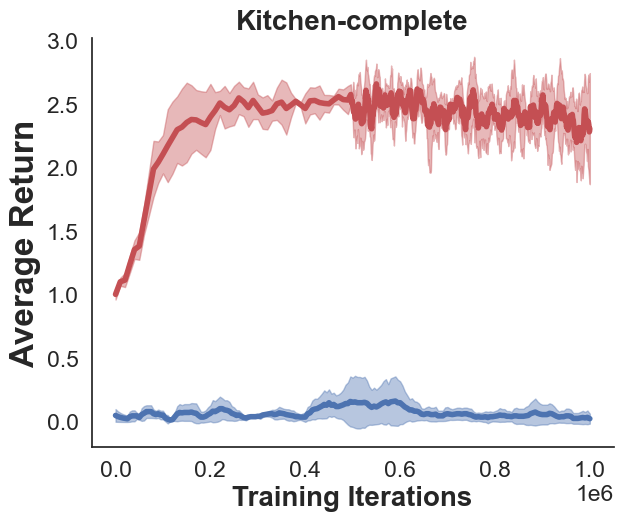} 
\label{fig:kitchen-complete-v0_3bcqcd.png} 
\end{subfigure}% 
~ 
\begin{subfigure}[t]{ 0.2\textwidth} 
\centering 
\includegraphics[width=\textwidth]{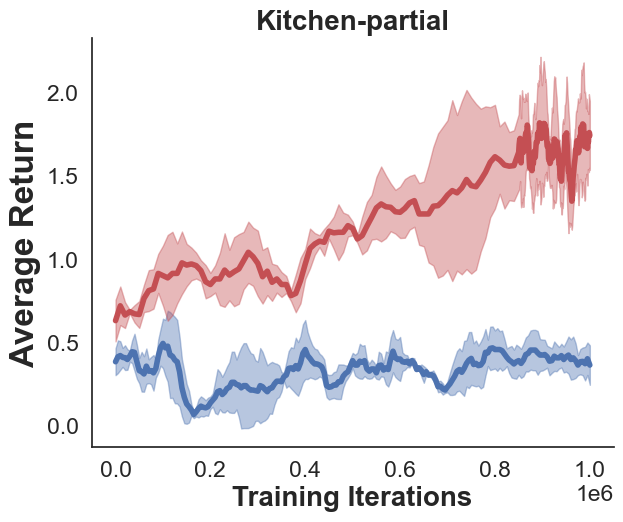} 
\label{fig:kitchen-partial-v0_3bcqcd.png} 
\end{subfigure}% 
~ 
\begin{subfigure}[t]{ 0.2\textwidth} 
\centering 
\includegraphics[width=\textwidth]{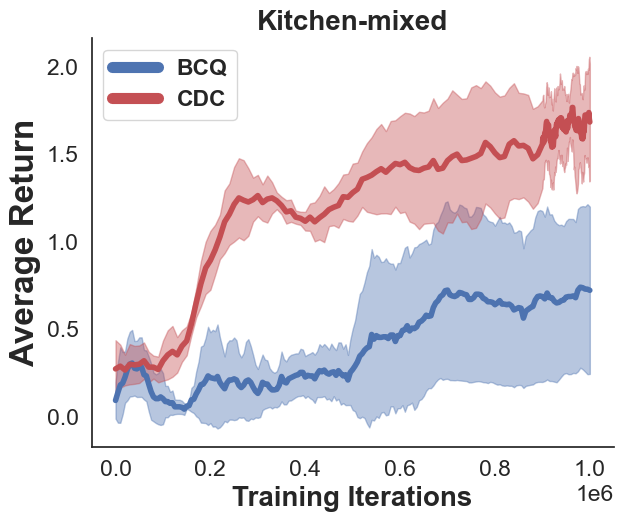} 
\label{fig:kitchen-mixed-v0_3bcqcd.png} 
\end{subfigure}% 

\caption{Learning curves of CDC (red) and BCQ (blue) on all 32 D4RL environments. Curves are averaged over 3 seeds, with the shaded area showing the standard deviation across seeds.} 
\label{fig:learning_cureve_bcq_cdc} 
\end{figure} 

\paragraph{Evaluation Procedure.} We measure performance in each task using the rewards collected by the learned policy when actually deployed in the environment. To report more stable results, we follow \citesi{kumarConservativeQLearningOffline2020} and average returns achieved by each of the policies arising during the last 10k gradient steps of batch RL (done for all batch RL methods). To further improve stability, we also re-run all batch RL methods with 3 different random seeds and take another average across the resulting performance. 
As suggested by \citesi{fu2020d4rl}, we report returns for each task that have been  normalized as follows:
\begin{equation}
\label{eq:score_norm}
    \text{score} = 100 * \frac{\text{score} -\text{random score}}{\text{expert score} - \text{random score} }
%\vspace*{-3.5em}
\end{equation}
where \textit{random score} and \textit{expert score} are provided for each task by \citesi{fu2020d4rl} in the D4RL paper \href{https://github.com/rail-berkeley/d4rl/blob/master/d4rl/infos.py}{GitHub  repository}\footnote{\url{https://github.com/rail-berkeley/d4rl/blob/master/d4rl/infos.py}}. The same procedure is also used in previous works~\citesi{kumarConservativeQLearningOffline2020, fu2020d4rl} to report results and compare various batch RL  methods.

\paragraph{Baselines.} 
We compare \algname{} against standard baselines and state-of-the-art batch RL methods: BEAR~\citesi{kumarStabilizingOffPolicyQLearning2019}, BRAC-V and BRAC-P~\citesi{wuBehaviorRegularizedOffline2019}, BC~\citesi{wuBehaviorRegularizedOffline2019}, CQL~\citesi{kumarConservativeQLearningOffline2020}, BCQ~\citesi{fujimotoOffPolicyDeepReinforcement2019}, and SAC~\citesi{haarnoja2018soft}. We obtained the numbers for BEAR, BC, BRAC-V, and BRAC-P from published numbers by ~\citesi{kumarConservativeQLearningOffline2020}. However, numbers for BCQ and SAC are from our runs for all tasks. Also, we run published CQL codes\footnote{ \url{https://github.com/aviralkumar2907/CQL}.} with their hyperparameters to produce results for all but Adroit and FrankaKitchen where the codes are not available.  For these latter domains, we simply use the CQL results reported in the paper of \citesi{kumarConservativeQLearningOffline2020}. 
In head-to-head comparisons against each of these other batch RL methods, \algname{} generates greater overall returns (see \tabref{tab:full_d4rl}). To verify these results are statistically significant, we report the $p$-value of a (one-sided) Wilcoxon signed rank test~\citesi{Wilcoxon1945} comparing the returns of another method vs the returns of CDC across all 32 datsets (see $\mathbf{p}$\textbf{-value vs.\ CDC} row in \tabref{tab:full_d4rl}). 

To provide a better picture of our method, we also include the learning curves in \figref{fig:learning_cureve_bcq_cdc} for our algorithm vs BCQ for each environment considered in our benchmark. These plots show that, in the vast majority of environments,  \algname{} exhibits consistently better performance across different seeds/iterations.

 \begin{table*}[ht!]
 \centering 
 \begin{tabular}{c|l||c|c|c||c} 
 \hline 
 \textbf{Index} &  \textbf{Task Name}  & \textbf{ $\lambda$ = 0 \& $\eta$ = 0  } & \textbf{ $\eta$ = 0  } & \textbf{ $\lambda$ = 0  } & \textbf{ CDC  } \\ \hline   
\textbf{\textcolor{color_0}{0}} & halfcheetah-random & $\mathbf{32.8}$ & $28.78$ & $30.66$ & $27.36$\\
\textbf{\textcolor{color_0}{1}} & halfcheetah-medium & $\mathbf{49.51}$ & $48.25$ & $47.61$ & $46.05$\\
\textbf{\textcolor{color_0}{2}} & halfcheetah-medium-replay & $22.72$ & $30.47$ & $44.62$ & $\mathbf{44.74}$\\
\textbf{\textcolor{color_0}{3}} & halfcheetah-medium-expert & $7.12$ & $4.98$ & $26.99$ & $\mathbf{59.64}$\\
\textbf{\textcolor{color_0}{4}} & halfcheetah-expert & $-0.95$ & $-0.96$ & $8.21$ & $\mathbf{82.05}$\\
\textbf{\textcolor{color_1}{5}} & hopper-random & $1.58$ & $0.84$ & $5.97$ & $\mathbf{14.76}$\\
\textbf{\textcolor{color_1}{6}} & hopper-medium & $0.58$ & $1.05$ & $2.71$ & $\mathbf{60.39}$\\
\textbf{\textcolor{color_1}{7}} & hopper-medium-replay & $16.4$ & $8.8$ & $54.01$ & $\mathbf{55.89}$\\
\textbf{\textcolor{color_1}{8}} & hopper-medium-expert & $18.07$ & $3.64$ & $18.22$ & $\mathbf{86.9}$\\
\textbf{\textcolor{color_1}{9}} & hopper-expert & $1.27$ & $0.8$ & $10.89$ & $\mathbf{102.75}$\\
\textbf{\textcolor{color_2}{10}} & walker2d-random & $2.96$ & $1.55$ & $\mathbf{14.37}$ & $7.22$\\
\textbf{\textcolor{color_2}{11}} & walker2d-medium & $0.33$ & $0.85$ & $81.93$ & $\mathbf{82.13}$\\
\textbf{\textcolor{color_2}{12}} & walker2d-medium-replay & $3.81$ & $-0.14$ & $\mathbf{24.48}$ & $22.96$\\
\textbf{\textcolor{color_2}{13}} & walker2d-medium-expert & $2.65$ & $5.22$ & $10.94$ & $\mathbf{70.91}$\\
\textbf{\textcolor{color_2}{14}} & walker2d-expert & $-0.1$ & $-0.42$ & $13.03$ & $\mathbf{87.54}$\\
\textbf{\textcolor{color_3}{15}} & antmaze-umaze & $22.22$ & $89.26$ & $17.78$ & $\mathbf{91.85}$\\
\textbf{\textcolor{color_3}{16}} & antmaze-umaze-diverse & $0.0$ & $19.26$ & $0.0$ & $\mathbf{62.59}$\\
\textbf{\textcolor{color_3}{17}} & antmaze-medium-play & $0.0$ & $17.78$ & $1.11$ & $\mathbf{55.19}$\\
\textbf{\textcolor{color_3}{18}} & antmaze-medium-diverse & $0.0$ & $36.67$ & $0.0$ & $\mathbf{40.74}$\\
\textbf{\textcolor{color_3}{19}} & antmaze-large-play & $0.0$ & $\mathbf{5.56}$ & $0.0$ & $5.19$\\
\textbf{\textcolor{color_3}{20}} & antmaze-large-diverse & $0.0$ & $2.96$ & $0.0$ & $\mathbf{11.85}$\\
\textbf{\textcolor{color_4}{21}} & pen-human & $-3.43$ & $-3.07$ & $58.33$ & $\mathbf{73.19}$\\
\textbf{\textcolor{color_4}{22}} & pen-cloned & $-3.4$ & $-2.29$ & $\mathbf{49.31}$ & $49.18$\\
\textbf{\textcolor{color_5}{23}} & hammer-human & $0.26$ & $0.26$ & $0.66$ & $\mathbf{4.34}$\\
\textbf{\textcolor{color_5}{24}} & hammer-cloned & $0.26$ & $0.28$ & $1.79$ & $\mathbf{2.37}$\\
\textbf{\textcolor{color_6}{25}} & door-human & $-0.16$ & $-0.34$ & $0.01$ & $\mathbf{4.62}$\\
\textbf{\textcolor{color_6}{26}} & door-cloned & $-0.36$ & $-0.13$ & $\mathbf{0.14}$ & $0.01$\\
\textbf{\textcolor{color_7}{27}} & relocate-human & $-0.31$ & $-0.31$ & $0.0$ & $\mathbf{0.73}$\\
\textbf{\textcolor{color_7}{28}} & relocate-cloned & $\mathbf{-0.15}$ & $-0.34$ & $-0.25$ & $-0.24$\\
\textbf{\textcolor{color_8}{29}} & kitchen-complete & $0.0$ & $0.0$ & $11.76$ & $\mathbf{58.7}$\\
\textbf{\textcolor{color_8}{30}} & kitchen-partial & $0.0$ & $0.0$ & $13.52$ & $\mathbf{42.5}$\\
\textbf{\textcolor{color_8}{31}} & kitchen-mixed & $0.0$ & $0.0$ & $7.04$ & $\mathbf{42.87}$\\
 \hline \hline 
 & Total Score & $173.67$ & $299.25$ & $555.84$ & $\mathbf{1396.99}$ \\ 
 \hline
 \end{tabular}
 \caption{\textbf{Ablation study of components used in  \algname{}.} Listed is the return in each environment (normalized using \eqref{eq:score_norm} as in \cite{fu2020d4rl}) achieved by ablated variants of our algorithm. Fixing $\eta$ or $\lambda$ to zero (i.e.\ omitting our penalties) produces far worse returns than \algname{}, demonstrating the utility of both of our proposed penalties. Note that the \textbf{only} difference between \algname{} and these variants (i.e. $\lambda= 0~\&~\eta = 0$, $\eta = 0$, $\lambda=0$) in these experiments is either $\eta$ or $\lambda$ or both are set to zero in \algref{algo:cdc} and all other details are \textbf{exactly} the same.}
 \normalsize
 \label{tab:ablation}
 \end{table*}
 
\begin{table*}[h!]
 \centering 
 \begin{tabular}{l||c|c|c||c} 
 \hline 
 \textbf{Task Name}  & \textbf{ $\lambda$ = 0 \& $\eta$ = 0  } & \textbf{ $\eta$ = 0  } & \textbf{ $\lambda$ = 0  } & \textbf{ CDC  } \\ \hline   
halfcheetah-random & $\mathbf{3791.65}$ & $3293.31$ & $3526.06$ & $3117.23$\\
halfcheetah-medium & $\mathbf{5865.97}$ & $5710.67$ & $5631.1$ & $5437.01$\\
halfcheetah-medium-replay & $2540.07$ & $3503.24$ & $5259.18$ & $\mathbf{5274.51}$\\
halfcheetah-medium-expert & $604.05$ & $337.88$ & $3070.63$ & $\mathbf{7124.4}$\\
halfcheetah-expert & $-398.72$ & $-399.66$ & $739.54$ & $\mathbf{9906.71}$\\
hopper-random & $31.08$ & $7.19$ & $174.13$ & $\mathbf{459.99}$\\
hopper-medium & $-1.48$ & $13.77$ & $68.07$ & $\mathbf{1945.29}$\\
hopper-medium-replay & $513.53$ & $266.05$ & $1737.61$ & $\mathbf{1798.66}$\\
hopper-medium-expert & $567.85$ & $98.11$ & $572.74$ & $\mathbf{2808.05}$\\
hopper-expert & $20.98$ & $5.92$ & $334.25$ & $\mathbf{3323.93}$\\
walker2d-random & $137.71$ & $72.86$ & $\mathbf{661.34}$ & $333.2$\\
walker2d-medium & $16.82$ & $40.53$ & $3762.54$ & $\mathbf{3771.93}$\\
walker2d-medium-replay & $176.51$ & $-4.86$ & $\mathbf{1125.45}$ & $1055.62$\\
walker2d-medium-expert & $123.48$ & $241.43$ & $504.05$ & $\mathbf{3257.06}$\\
walker2d-expert & $-3.01$ & $-17.79$ & $599.6$ & $\mathbf{4020.49}$\\
antmaze-umaze & $0.22$ & $0.89$ & $0.18$ & $\mathbf{0.92}$\\
antmaze-umaze-diverse & $0.0$ & $0.19$ & $0.0$ & $\mathbf{0.63}$\\
antmaze-medium-play & $0.0$ & $0.18$ & $0.01$ & $\mathbf{0.55}$\\
antmaze-medium-diverse & $0.0$ & $0.37$ & $0.0$ & $\mathbf{0.41}$\\
antmaze-large-play & $0.0$ & $\mathbf{0.06}$ & $0.0$ & $0.05$\\
antmaze-large-diverse & $0.0$ & $0.03$ & $0.0$ & $\mathbf{0.12}$\\
pen-human & $-5.87$ & $4.84$ & $1834.73$ & $\mathbf{2277.75}$\\
pen-cloned & $-5.04$ & $28.09$ & $\mathbf{1565.86}$ & $1562.21$\\
hammer-human & $-241.31$ & $-241.06$ & $-188.27$ & $\mathbf{292.21}$\\
hammer-cloned & $-241.4$ & $-238.34$ & $-41.52$ & $\mathbf{34.45}$\\
door-human & $-61.11$ & $-66.61$ & $-56.14$ & $\mathbf{79.05}$\\
door-cloned & $-67.09$ & $-60.23$ & $\mathbf{-52.37}$ & $-56.15$\\
relocate-human & $-19.51$ & $-19.72$ & $-6.37$ & $\mathbf{24.36}$\\
relocate-cloned & $\mathbf{-12.77}$ & $-20.95$ & $-17.05$ & $-16.63$\\
kitchen-complete & $0.0$ & $0.0$ & $0.47$ & $\mathbf{2.35}$\\
kitchen-partial & $0.0$ & $0.0$ & $0.54$ & $\mathbf{1.7}$\\
kitchen-mixed & $0.0$ & $0.0$ & $0.28$ & $\mathbf{1.71}$\\
 \hline \hline 
Total Score & $13332.61$ & $12556.37$ & $30806.65$ & $\mathbf{57839.77}$ \\ 
 \hline
 \end{tabular}
 \caption{\textbf{Ablation study of components used in  \algname{}.} Same as \tabref{tab:ablation} but 
  the returns here are not normalized, and we instead report raw returns achieved in each task. }
 \normalsize
 \label{tab:raw_ablation}
 \end{table*}

\begin{figure}[!t]
\centering\captionsetup[subfigure]{justification=centering}
\begin{subfigure}[t]{0.25\textwidth}
\centering
\includegraphics[width=\textwidth]{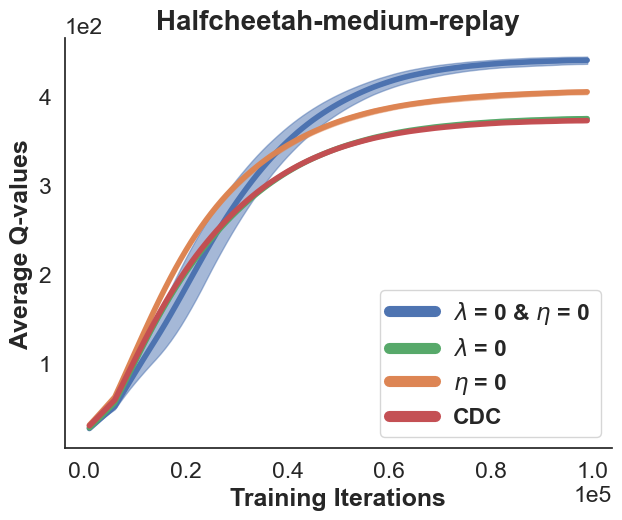}
\label{fig:halfcheetah-medium-replay_avg_qs}
\end{subfigure}%
~
\begin{subfigure}[t]{0.25\textwidth}
\centering
\includegraphics[width=\textwidth]{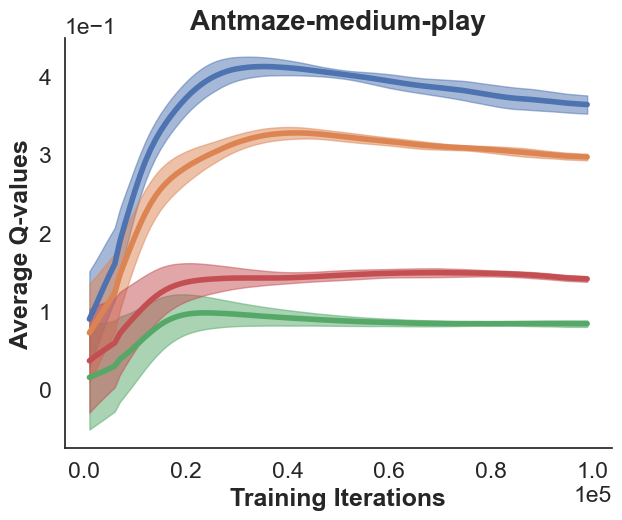}
\label{fig:antmaze-medium-play_avg_qs}
\end{subfigure}%
~
\begin{subfigure}[t]{0.25\textwidth}
\centering
\includegraphics[width=\textwidth]{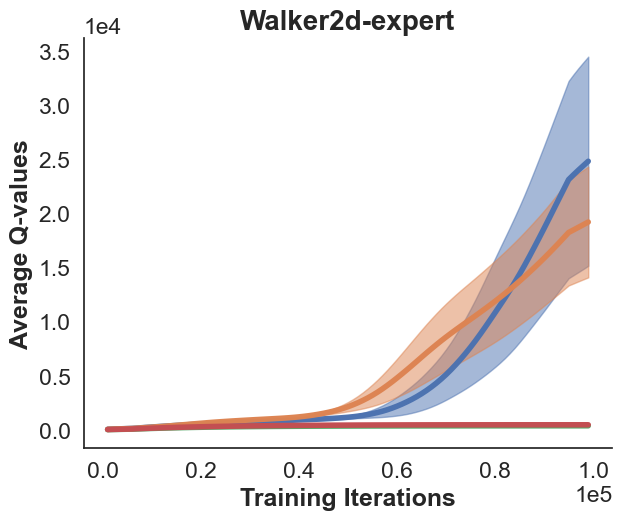}
\label{fig:walker2d-expert_avg_qs}
\end{subfigure}%

\begin{subfigure}[t]{0.25\textwidth}
\centering
\includegraphics[width=\textwidth]{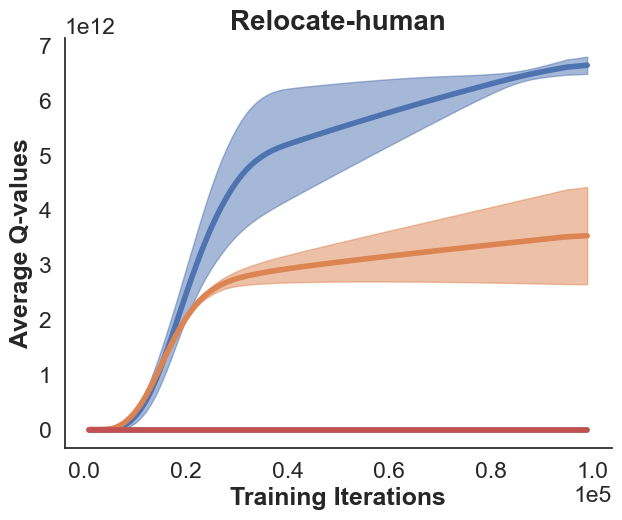}
\label{fig:relocate-human_avg_qs}
\end{subfigure}%
~
\begin{subfigure}[t]{0.25\textwidth}
\centering
\includegraphics[width=\textwidth]{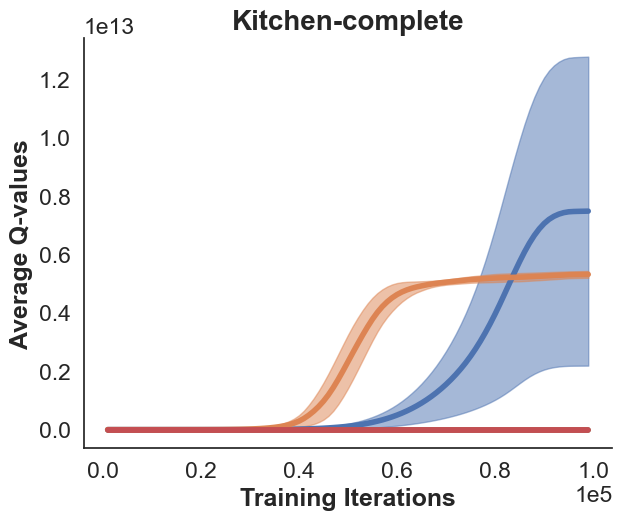}
\label{fig:kitchen-complete_avg_qs}
\end{subfigure}%
~
\begin{subfigure}[t]{0.25\textwidth}
\centering
\includegraphics[width=\textwidth]{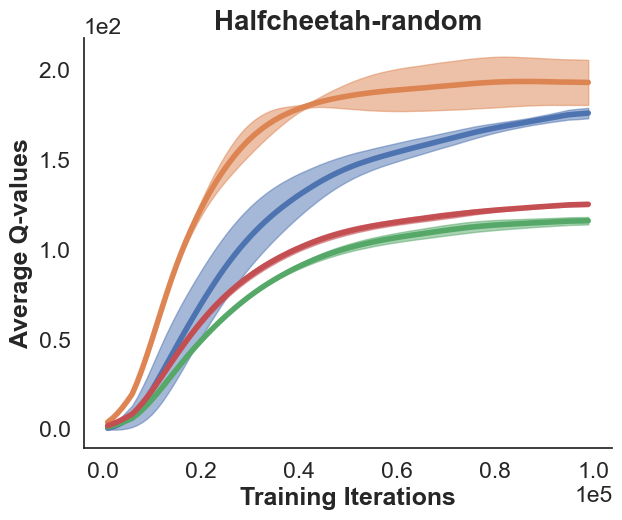}
\label{fig:halfcheetah-random-v0_avg_qs}
\end{subfigure}%
\vspace*{-1.5em}
\caption{\textbf{Effect of our penalties on Q-values.}
These figures show evaluation of averaged Q values across 4 Q during training time for 6 different tasks.  This result shows that \algname{}'s Q-estimate is well controlled especially compared with $\eta=0$.}
\label{fig:avg_qs}
\end{figure}

\paragraph{Ablation studies.}
We conduct a series of ablation studies to comprehensively analyze  the different components of \algname{}. We use all 32 D4RL datasets for this purpose. \tabref{tab:ablation} and \tabref{tab:raw_ablation} show that 
both penalties introduced in our paper are critical for the strong performance of \algname{}, with the extra-overestimation penalty $\Delta$ being of greater importance than the exploration-penalty $\log \pi$. Moreover, \figref{fig:avg_qs} shows how estimated Q values evolve over training for each of the above ablation variants. Here it is again evident that both penalties may be required to successfully prevent  extra-overestimation and subsequent explosion of Q-estimates, with $\Delta$ being the more effective of the two for mitigating extra-overestimation.

\subsection{Comparing CDC with RBVE (Gulcehre et al.~\cite{gulcehre2021regularized}) }\label{secapx:rbve-vs-cdc}

Concurrent to our work, \citetsi{gulcehre2021regularized} propose a value regularization term for batch RL that is similar to the Q-value regularizer used by \algname{}.  Unlike our work, \citesi{gulcehre2021regularized} only considers discrete actions under a DQN~\citesi{mnih2015human} framework rather than the actor-critic RL framework employed in \algname{}. \citesi{gulcehre2021regularized} also does not consider explicit policy regularization, which forms a critical component of \algname{} to supplement its value regularization.  
That said, \cite{gulcehre2021regularized} do also acknowledge the importance of ensuring the learned policy does not stray too far from the behavior policy, but their proposal to ensure this involves restricting the learner to apply only a single step of policy-iteration to the estimated value function. However in continuous action spaces with a policy-network, even a single policy-iteration step can lead to large deviations from the behavior policy without explicit policy regularization as imposed by CDC. Finally, we note that the RBVE methodology of \citesi{gulcehre2021regularized} requires a dataset that contains observations $(s, a, r, s', a')$, i.e.\ more complete subtrajectories of episodes, whereas CDC merely requires a dataset that contains observations of the form $(s, a, r, s')$. The former setting is less widely applicable, but is somewhat easier due to the availability of the subsequent action $a'$ for temporal-difference learning.

In this section, we apply the RBVE method of~\citesi{gulcehre2021regularized} on the D4RL benchmark~\citesi{fu2020d4rl}, after first minorly adapting it to our setting. Key  differences in our setting are: we have continuous actions, and  $a'$ is not contained in the dataset. In our adaptation of RBVE, we approximate the $\max_a$ required by \citesi{gulcehre2021regularized} (but which is difficult for continuous actions) by sampling many actions and taking the empirical maximum. Our adaptation accounts for the fact that $a'$ is not present in the dataset by first estimating the behavior policy $\pi_b$ via behavior-cloning (i.e.\ via maximum likelihood training of our same policy network) and then drawing $a' \sim \pi_b( . | s')$ for use in RBVE. Furthermore, we considered two different variants of RBVE in our experiments. In the first variant (called RBVE-A), the soft filtering weights of \citesi{gulcehre2021regularized}, $\omega(s,a)$, are implemented according to Eq 6 in their paper. Although closely following the recommendations of \citesi{gulcehre2021regularized}, RBVE-A did not perform well in our D4RL environments, and thus we considered a second variant (called RBVE-C), where we treat $\omega$ as a hyperparameter and we use a fixed value per environment. 
\tabref{tab:rbve_d4rl} illustrates that CDC outperforms both variants of RBVE. 

\begin{table*}[tp]
 \centering 
 \begin{tabular}{c|l||c|c|c|c} 
 \hline 
 \textbf{Index} &  \textbf{Task Name}  & \textbf{ RBVE-C } & \textbf{ RBVE-A } & \textbf{ CDC } \\ \hline   
\textbf{\textcolor{color_0}{0}} & \small{halfcheetah-random}  & 18.89 & -0.01 & $\mathbf{27.36}$\\
\textbf{\textcolor{color_0}{1}} & \small{halfcheetah-medium}  & 43.98 & 24.27 & $\mathbf{46.05}$\\
\textbf{\textcolor{color_0}{2}} & \small{halfcheetah-medium-replay}  & 37.24 & 7.55 & $\mathbf{44.74}$\\
\textbf{\textcolor{color_0}{3}} & \small{halfcheetah-medium-expert}  & 32.11 & 8.12 & $\mathbf{59.64}$\\
\textbf{\textcolor{color_0}{4}} & \small{halfcheetah-expert}  & 36.57 & 3.14 & $\mathbf{82.05}$\\
\textbf{\textcolor{color_1}{5}} & \small{hopper-random}  & 11.46 & 4.69 & $\mathbf{14.76}$\\
\textbf{\textcolor{color_1}{6}} & \small{hopper-medium}  & 17.16 & 1.23 & $\mathbf{60.39}$\\
\textbf{\textcolor{color_1}{7}} & \small{hopper-medium-replay}  & 28.15 & 3.31 & $\mathbf{55.89}$\\
\textbf{\textcolor{color_1}{8}} & \small{hopper-medium-expert}  & $\mathbf{88.72}$ & 3.62 & 86.9\\
\textbf{\textcolor{color_1}{9}} & \small{hopper-expert}  & 94.32 & 1.2 & $\mathbf{102.75}$\\
\textbf{\textcolor{color_2}{10}} & \small{walker2d-random}  & 0.36 & 2.4 & $\mathbf{7.22}$\\
\textbf{\textcolor{color_2}{11}} & \small{walker2d-medium}  & 80.19 & 2.81 & $\mathbf{82.13}$\\
\textbf{\textcolor{color_2}{12}} & \small{walker2d-medium-replay}  & 6.7 & 2.41 & $\mathbf{22.96}$\\
\textbf{\textcolor{color_2}{13}} & \small{walker2d-medium-expert}  & $\mathbf{77.79}$ & 1.83 & 70.91\\
\textbf{\textcolor{color_2}{14}} & \small{walker2d-expert}  & 60.3 & 0.69 & $\mathbf{87.54}$\\
\textbf{\textcolor{color_3}{15}} & \small{antmaze-umaze}  & 0.0 & 0.0 & $\mathbf{91.85}$\\
\textbf{\textcolor{color_3}{16}} & \small{antmaze-umaze-diverse}  & 2.96 & 0.0 & $\mathbf{62.59}$\\
\textbf{\textcolor{color_3}{17}} & \small{antmaze-medium-play}  & 0.0 & 0.0 & $\mathbf{55.19}$\\
\textbf{\textcolor{color_3}{18}} & \small{antmaze-medium-diverse}  & 0.0 & 0.0 & $\mathbf{40.74}$\\
\textbf{\textcolor{color_3}{19}} & \small{antmaze-large-play}  & 0.0 & 0.0 & $\mathbf{5.19}$\\
\textbf{\textcolor{color_3}{20}} & \small{antmaze-large-diverse}  & 0.0 & 0.0 & $\mathbf{11.85}$\\
\textbf{\textcolor{color_4}{21}} & \small{pen-human}  & 24.04 & 34.93 & $\mathbf{73.19}$\\
\textbf{\textcolor{color_4}{22}} & \small{pen-cloned}  & 42.86 & -0.39 & $\mathbf{49.18}$\\
\textbf{\textcolor{color_5}{23}} & \small{hammer-human}  & 0.59 & 0.0 & $\mathbf{4.34}$\\
\textbf{\textcolor{color_5}{24}} & \small{hammer-cloned}  & 0.34 & 0.17 & $\mathbf{2.37}$\\
\textbf{\textcolor{color_6}{25}} & \small{door-human}  & $\mathbf{9.15}$ & -0.0 & 4.62\\
\textbf{\textcolor{color_6}{26}} & \small{door-cloned}  & $\mathbf{0.04}$ & 0.03 & 0.01\\
\textbf{\textcolor{color_7}{27}} & \small{relocate-human}  & 0.29 & 0.01 & $\mathbf{0.73}$\\
\textbf{\textcolor{color_7}{28}} & \small{relocate-cloned}  & -0.23 & $\mathbf{0.01}$ & -0.24\\
\textbf{\textcolor{color_8}{29}} & \small{kitchen-complete}  & 18.61 & 0.83 & $\mathbf{58.7}$\\
\textbf{\textcolor{color_8}{30}} & \small{kitchen-partial}  & 8.7 & 0.83 & $\mathbf{42.5}$\\
\textbf{\textcolor{color_8}{31}} & \small{kitchen-mixed}  & 5.93 & 1.11 & $\mathbf{42.87}$\\
 \hline \hline 
 & Total Score & $747.21$ & $104.79$ & $\mathbf{1396.99}$ \\ 
 \hline \hline 
 & $\mathbf{p}$\textbf{-value vs.\ CDC} & 6.6e-06 & 5.3e-07 &  -  \\ 
 \hline
 \end{tabular}
 \caption{{ \textbf{Return achieved in deployment of policies learned via \algname{} and RBVE~\cite{gulcehre2021regularized}.} RBVE-C and RBVE-A are two variants of \cite{gulcehre2021regularized} detailed in  \secref{secapx:rbve-vs-cdc}, and we use the exact same setup for \algname{} as before. The return in each environment here is normalized using \eqref{eq:score_norm} as originally advocated by \cite{fu2020d4rl}.}
 For each method: we perform a head-to-head comparison against \algname{} across the D4RL tasks, reporting the $p$-value of a (one-sided) Wilcoxon signed rank test~\cite{Wilcoxon1945} that compares this method's return against that of \algname{} (over the 32 tasks).  }
  \label{tab:rbve_d4rl}
 \end{table*}

%\clearpage
\section{Details of our Methods}\label{sec:app:hyperparams}
\paragraph{Implementation Details.}
\tabref{tab:table_exp_hypers} and \tabref{tab:table_compute_hypers} show hyper-parameters, computing infrastructure, and libraries used for the experiments in this paper for all 32 continuous-control tasks. Like most other batch-RL baselines in our comparisons and following Sec 8 in \citesi{agarwal2020optimistic}, we did a minimal random search to tune our hyperparams $\eta, \lambda$. Note that \algname{} was simply run on every task using the same network architecture and the original rewards/actions provided in the task, without any task-specific reward-normalization/action-smoothing required by some of the other batch RL methods.

\paragraph{Using \algname{} Policy During Deployment.} \algref{alg:ours} in the main text only describes the training process used in \algname{}, \algref{alg:ours_test} here details how a batch RL is deployed the resulting learned policy/values to select actions in the actual environment.
After the batch RL training is complete, \algref{alg:ours_test} is used to select actions at  evaluation (test) time, as also done by \citesi{fujimotoOffPolicyDeepReinforcement2019,wuBehaviorRegularizedOffline2019, kumarConservativeQLearningOffline2020,ghasemipour2021emaq, kumarStabilizingOffPolicyQLearning2019}.  
All \algname{} returns mentioned throughout the paper (and other baseline methods returns i.e. BCQ, CQL, BEAR, BRAC-V/P, etc.) were produced by selecting actions in this manner, which facilitates fair comparison against the existing literature.

\resizebox{\columnwidth}{!}{
\begin{minipage}[t]{0.5\linewidth}
\begin{algorithm}[H]
%\small
\caption{ FQE + $\Delta$}
%\label{algo:fqe_cdc}
\begin{algorithmic}[1]
    \STATE \textbf{Input}: policy $\pi$ to evaluate 
    \STATE Initialize Q networks: $\{Q_{\theta_j} \}_{j=1}^{M}$
    \STATE Initialize Target Q-networks:  $\{Q_{\theta'_j}: \theta'_j \leftarrow \theta_j\}_{j=1}^{M}$   \FOR{$t$ in \{1, \dots, T\}}
        \STATE Sample mini-batch $\mathcal{B} = \{(s, a, r, s')\} \sim \mathcal{D}$ \\
        \STATE For each $s, s' \in \mathcal{B}$:
        sample $N$ actions $\{\hat{a}_{k} \}_{k=1}^N \sim \pi(\cdot|s)$ and   $\{{a}'_{k}\}_{k=1}^N \sim \pi(\cdot|s')$ \\[0.5em]
        \STATE \textbf{$Q_{\theta}$- value update:} \\ %\alglinelabel{step:value}
        \vspace*{-.2in}
            \begin{align*}
                %& obtain a' from \algref{alg:ours_test}
                &y(s', r) := r + \frac{\gamma}{N} \sum_{a'_k}^N \Big[\overline{Q}_{\theta'}(s', a'_k) \Big] \ \
                 \text{where $\overline{Q}$ given by Eq~\ref{eq:qcvx}} \\
                %& \text{and $a'$ given by \algref{alg:ours_test} for each $s'$.} \\
                & {\textcolor{color_alg}{ \Delta_j(s,a) :=  \Big(  \Big[  \max_{\hat{a}_{k}}  Q_{\theta_j}(s, \hat{a}_{k}) -  Q_{\theta_j}(s, a) \Big]_+ \Big)^2 } }
                \\
                %\textbf{for } j = 1,\dots, M:&\\
                %& \hspace*{5mm}
                & \theta_j \leftarrow \hspace*{0mm} \argmin_{\theta_j} \hspace*{-2mm} \sum_{(s, a, s') \in \mathcal{B}} \Big[ \Big( Q_{\theta_j}(s, a) - y(s',r) \Big )^2  \\[-0.78em]
                & \hspace*{30mm}
                 {\textcolor{color_alg}{ \ + \ \eta \cdot \Delta_j(s,a) }} \Big]
                % ~~\forall j \in M
                \ \text{ for } j = 1,...,M
                \\[-2em]
            \end{align*}
        \STATE \textbf{Update Target Networks: }
              \\[0.5em]
              $~~~~~~~\theta'_j \leftarrow \tau \theta_j + (1 - \tau)\theta'_j~~\forall j \in M $
              \\[-0.5em]
    \ENDFOR
\end{algorithmic}
\label{algo:fqe_cdc}
\end{algorithm}
\end{minipage}
\begin{minipage}[t]{0.5\linewidth}
\begin{algorithm}[H]
%\small
\caption{Bacth RL Action Selection at Evaluation Time (used for \algname{} as well as other baseline methods)}
%\label{alg:ours_test}
\begin{algorithmic}[1]
    \STATE \textbf{Input}: state $s \in S$, trained policy network $\pi_\phi$ and Q networks: $\{Q_{\theta_j} \}_{j=1}^{M}$.
    \STATE Sample $N$ actions $\{a_{k} \}_{k=1}^N \sim \pi_\phi(\cdot|s)$
     \STATE Identify optimal action:
      \begin{align*}
         a \leftarrow  \arg\max_{{a}_{k}}  &\Big[\overline{Q}_{\theta}(s, {a}_{k}) \Big]&
         % \\ \text{where $\overline{Q}$ given by Eq~\ref{eq:qcvx} as follow:}  \\
         % \overline{Q}_\theta(s, a)  &= \nu \cdot \min_{j} Q_{\theta_j} (s, a) + \\
         % &(1 - \nu) \cdot \max_{j} ~ Q_{\theta_j}(s, a) ~~\forall j \in M  \\[-2.3em]
     \end{align*}
    Here $\overline{Q}$ given by Eq~\ref{eq:qcvx} (Note this similar to BCQ, BRAC-V/P, BEAR, EMaQ).   
    %\todo[and BCQ, or $\overline{Q}$ is simply a single $Q_{\theta_j}$ (or their average over $j$) for other baseline methods (we follow their published strategy). \\ TODO: Describe $\overline{Q}$ more for other baseline methods]
    \STATE \textbf{Return} $a$
\end{algorithmic}
\label{alg:ours_test}
\end{algorithm}
\end{minipage}
}

\subsection{Fitted Q Evaluation Details}
For off-policy evaluation, \algref{algo:fqe_cdc} describes the steps of fitted Q-evaluation (FQE)~\citesi{Hoangfqe19}, when additionally leveraging our  extra-overestimation penalty $\Delta$. The goal of FQE is to
estimate the values for a given policy, i.e. $\hat{Q}^\pi$, with offline data collected by an unknown behavior policy. After learning an estimate $\hat{Q}^\pi$, the resulting Q-values will be used to score a policy $\pi$ via the expectation of $\hat{Q}^\pi$ over the initial state distribution and actions proposed by this policy, i.e.\ the estimated expected return, which is given by $\hat{v}(\pi) = \mathbb{E}_{s \sim \mathcal{D}} \mathbb{E}_{a \sim \pi(\cdot | s)} [\hat{Q}^\pi(s,a) ]$ \citesi{Hoangfqe19}. 
Applying Q-learning to limited data, FQE is also prone to suffer from wild extrapolation, which we attempt to mitigate by introducing our $\Delta$ penalty (highlighted blue terms in \algref{algo:fqe_cdc} are our modifications to FQE).

In \figref{fig:fqe_pearson} of \secref{sec:exppolicyevaluation} in the main text, we compare the performance of $\Delta$-penalization of FQE (with $\eta=1$ throughout in \algref{algo:fqe_cdc}) against the standard unregularized FQE ($\eta=0$  in \algref{algo:fqe_cdc}). Here we use both methods to score $20$ different policies, learned under CDC with different random hyperparameter settings.  
When scoring each CDC-policy, $a \sim \pi$ in the definition of $\hat{v}$ is obtained using \algref{alg:ours_test} for each $s$, as the operations of \algref{alg:ours_test} entail the actual policy considered for deployment.  

Here we assess the quality of FQE policy evaluation via the Pearson correlation between estimated returns, i.e., $\hat{v}(\pi)$, and the actual return (over our $20$ policies under consideration). 
The higher correlations observed for FQE + $\Delta$ (0.37 on average across our 32 tasks) over FQE (0.01 on average) in the majority of tasks demonstrates how the inclusion of our $\Delta$ penalty can lead to more reliable off policy evaluation estimates. Our strategies for mitigating overestimation/extrapolation are thus not only useful for  batch RL but also related tasks like off-policy evaluation.

\begin{table*}
    \centering
    \begin{tabular}{|l|c|}
        \hline
        \multicolumn{2}{|c|}{\textbf{Computing Infrastructure}} \\ \hline
        Machine Type & AWS EC2 - p2.16xlarge\\
        CUDA Version & $10.2$ \\
        NVIDIA-Driver & $440.33.01$ \\
        GPU Family & Tesla K80 \\
        CPU Family & Intel Xeon 2.30GHz \\ \hline
       \multicolumn{2}{|c|}{\textbf{Library Version}} \\ \hline
        Pytorch & 1.6.0 \\
        Gym & 0.17.2 \\
        Python & 3.7.7 \\
        Numpy &  1.19.1 \\
        \hline
   \end{tabular}
    \caption{Computing infrastructure and software libraries used in this paper.}
    \label{tab:table_compute_hypers}
\end{table*}

\begin{table*}[t!]
    \centering
    \begin{tabular}{|l|c|}
        \hline
        Hyper-parameters & value \\
        \hline
        Random Seeds & $\{0,1,2\}$ \\
        Overestimation bias coef ($\nu$) & 0.75 \\
        Batch Size & 256 \\
        Number of Updates & 1e+6 \\
        Number of $Q$ Functions & 4 \\
        Number of hidden layers (Q) & 4 layers \\
        Number of hidden layers ($\pi$) & 4 layers \\
        Number of hidden units per layer &  256 \\
        Number of sampled actions ($N$) &  15 \\
        Nonlinearity & \textit{ReLU}\\
        Discount factor ($\gamma$) & 0.99 \\
        Target network ($\theta'$) update rate ($\tau$) &  0.005 \\
        Actor learning rate & 3e-4\\
        Critic learning rate & 7e-4\\
        Optimizer & Adam\\
        Policy constraint coef ($\lambda$) & $\{0.1, 0.5, 1, 2\}$\\
        Extra-overestimation coef ($\eta$) & \small{$\{.1, .2, .5,.6, .8, 1, 5, 20, 25, 50, 100, 200\}$} \\
        Number of episodes to evaluate & 10 \\ \hline
   \end{tabular}
    \caption{Hyper-parameters used for \algname{} for all 32 continuous-control tasks in the D4RL benchmark. All results reported in our paper are averages over repeated runs initialized with \emph{each} of the random seeds listed above and run for the listed number of episodes.
    }
    \label{tab:table_exp_hypers}
\end{table*}
\section{D4RL Benchmark}\label{sec:app:d4rl_detail}
D4RL is a large-scale benchmark for evaluating batch RL algorithms~\cite{fu2020d4rl}. It contains  many diverse tasks with different levels of complexity in which miscellaneous behavior policies (ranging from random actions to expert demonstrations) have been used to collect data. For each task, batch RL agents are trained on a large offline dataset $\mathcal{D}$ (without environment interaction), and these agents are scored based on how much return they produce when subsequently deployed into the live environment.
Since the benchmark contains multiple tasks from a single environment (with different $\pi_b$), we can observe how well batch RL methods are able to learn from behavior policies of different quality.

We consider four different domains from the D4RL benchmark~\citesi{fu2020d4rl} from which  $\mathbf{32}$ different datasets (i.e.\ tasks) are available. 
Each dataset here corresponds to a single batch RL task, where we treat the provided data as $\mathcal{D}$, learn a policy $\pi$ using only $\mathcal{D}$, and finally evaluate this policy when it is deployed in the actual environment.  
In many cases, we have two different datasets taken from the same environment, but collected by behavior policies of varying quality. For example, from the MuJoCo HalfCheetah environment, we have one dataset (HalfCheetah-random) generated under a behavior policy that randomly selects actions and another dataset (HalfCheetah-expert) generated under an expert behavior policy that generates strong returns. Note that our batch RL agents do not have information about the quality of $\pi_b$ since this is often unknown in practice. 
% in total with behavior policies of varying quality in different tasks to evaluate our proposed method and to compare with others. 

The \textit{Gym-MuJoCo} domain consists of four environments (Hopper, HalfCheetah, Walker2d) from which we have 15 datasets  built by mixing different behavior policies. Here \citesi{fu2020d4rl} wanted to examine the effectiveness of a given batch RL method for learning under heterogeneous $\pi_b$.
% and the goal for these tasks is to examine the effectiveness of a given method for heterogeneous  policies~\citesi{fu2020d4rl}. 
The \textit{FrankaKitchen} domain is based on a 9-degree-of-freedom (DoF) Franka robot in a kitchen environment containing various household items. There are $3$ datasets from this environment designed to evaluate the generalization of a given algorithm to unseen states~\citesi{fu2020d4rl}. 
The \textit{Adroit} domain is based on a 24-DoF simulated robot hand with goals such as: hammering a nail, opening a door, twirling a pen, or picking up and moving a ball. 
\citesi{fu2020d4rl} provide $8$ datasets from this domain, hoping to study batch RL in settings with small amounts of expert data (human demonstrations) in a high-dimensional robotic manipulation task. 
Finally, \textit{AntMaze} is a navigation domain based on an 8-DoF Ant quadruped robot,  from which the benchmark contains $6$ datasets. Here \citesi{fu2020d4rl} aim to test how well batch RL agents are able to stitch together pieces of existing observed trajectories to solve a given task (rather than requiring generalization beyond  $\mathcal{D}$). 
% the stitching property (i.e. an agent puts together existing trajectories to solve a given task rather than relying on generalization outside of the dataset~\citesi{fu2020d4rl}). 
\tabref{tab:dataset_info} shows more details about the datasets in our benchmark.

 \begin{table*}[h!] % D4RL dataset summary
 \centering
 \resizebox{0.8\columnwidth}{!}{
 \begin{tabular}{c|l|r|r|r}
 \hline
 \textbf{Domain}  &  \textbf{Task Name}  & \textbf{$\#$Samples } & \textbf{Obs Dims} & \textbf{Action Dims}\\ \hline
\multirow{6}{*}{\textbf{AntMaze}}
& antmaze-umaze-v0 & $998573$ & $29$ & $8$ \\
& antmaze-umaze-diverse-v0 & $998882$ & $29$ & $8$ \\
& antmaze-medium-play-v0 & $999092$ & $29$ & $8$ \\
& antmaze-medium-diverse-v0 & $999035$ & $29$ & $8$ \\
& antmaze-large-play-v0 & $999059$ & $29$ & $8$ \\
& antmaze-large-diverse-v0 & $999048$ & $29$ & $8$ \\
\hline
\multirow{8}{*}{\textbf{Adroit}}
& pen-human-v0 & $4950$ & $45$ & $24$ \\
& hammer-human-v0 & $11264$ & $46$ & $26$ \\
& door-human-v0 & $6703$ & $39$ & $28$ \\
& relocate-human-v0 & $9906$ & $39$ & $30$ \\
& pen-cloned-v0 & $495071$ & $45$ & $24$ \\
& hammer-cloned-v0 & $995511$ & $46$ & $26$ \\
& door-cloned-v0 & $995643$ & $39$ & $28$ \\
& relocate-cloned-v0 & $995739$ & $39$ & $30$ \\
\hline
\multirow{3}{*}{\textbf{FrankaKitchen}}
& kitchen-complete-v0 & $3679$ & $60$ & $9$ \\
& kitchen-partial-v0 & $136937$ & $60$ & $9$ \\
& kitchen-mixed-v0 & $136937$ & $60$ & $9$ \\
\hline
\multirow{15}{*}{\textbf{Gym-MuJoCo}}
& halfcheetah-random-v0 & $998999$ & $17$ & $6$ \\
& hopper-random-v0 & $999999$ & $11$ & $3$ \\
& walker2d-random-v0 & $999999$ & $17$ & $6$ \\
& halfcheetah-medium-v0 & $998999$ & $17$ & $6$ \\
& walker2d-medium-v0 & $999874$ & $17$ & $6$ \\
& hopper-medium-v0 & $999981$ & $11$ & $3$ \\
& halfcheetah-medium-expert-v0 & $1997998$ & $17$ & $6$ \\
& walker2d-medium-expert-v0 & $1999179$ & $17$ & $6$ \\
& hopper-medium-expert-v0 & $1199953$ & $11$ & $3$ \\
& halfcheetah-medium-replay-v0 & $100899$ & $17$ & $6$ \\
& walker2d-medium-replay-v0 & $100929$ & $17$ & $6$ \\
& hopper-medium-replay-v0 & $200918$ & $11$ & $3$ \\
& halfcheetah-expert-v0 & $998999$ & $17$ & $6$ \\
& hopper-expert-v0 & $999034$ & $11$ & $3$ \\
& walker2d-expert-v0 & $999304$ & $17$ & $6$ \\
  \hline
 \end{tabular}}
 \caption{ \textbf{Overview of D4RL tasks}. 
 Summary of 32 datasets (i.e.\ tasks, environments) considered in this work, listing the: domain each dataset stems from, name of each task, number of samples (i.e.\ transitions) in each dataset, and the dimensionality of the state (\textbf{Obs Dims}) and action space (\textbf{Action Dims}).
 %This table shows how many transitions/samples are in each dataset, and the dimensionality of the state (\textbf{Obs Dims}) and action space (\textbf{Action Dims}).
To get the numbers listed here, a few samples were omitted from the original datasets using the timeout flag suggested by \cite{fu2020d4rl} (\href{https://github.com/rail-berkeley/d4rl/blob/master/d4rl/__init__.py}{Click here for details}).}  
 \normalsize
 \label{tab:dataset_info}
 \end{table*}

\clearpage
\section{Proofs and Additional Theory}
% \section{Proofs} % backup plan if no more theory
\label{sec:proofs}
This section contains proofs and details of the constants/assumptions of theories mentioned in the main text. 
\subsection{Proof of Theorem \ref{thm:contract}. }\label{app:proof_contract}
\textbf{Theorem \ref{thm:contract}.}
For $\overline{Q}_{\theta}$ in (\ref{eq:qcvx}), let   $\mathcal{T}_{\text{CDC}}: \overline{Q}_{\theta_t} \rightarrow \overline{Q}_{\theta_{t+1}}$ denote the operator corresponding to the $\overline{Q}_{\theta}$-updates resulting from the $t^{\text{th}}$ iteration of Steps \ref{step:value}-\ref{step:policy} of \algref{alg:ours}. $\mathcal{T}_{\text{CDC}}$ is a $L_\infty$ contraction under standard conditions that suffice for the ordinary Bellman operator to be contractive \citesi{bertsekas2004stochastic, busoniu2010reinforcement, szepesvari2001efficient, antos2007fitted}. That is, for any $\overline{Q}_1, \overline{Q}_2$:
\begin{align*}
 \sup_{s,a} \left| \mathcal{T}_{\text{CDC}}(\overline{Q}_1(s,a) ) - \mathcal{T}_{\text{CDC}}(\overline{Q}_2(s,a)) \right|
\le   \
\gamma \cdot \sup_{s,a} \left| \overline{Q}_1(s,a) - \overline{Q}_2(s,a) \right|  \end{align*}

In this theorem, we also assume that: $\pi_\phi$ is sufficiently flexible to produce $\arg\max_{\hat{a}} \overline{Q}(s, \hat{a})$ for all $s
\in \mathcal{D}$ and the optimization subproblems in Steps \ref{step:value}-\ref{step:policy} of \algref{alg:ours} are solved exactly (ignoring all issues related to  function approximation). More formally, we adopt assumptions A1-A9 of \citetsi{antos2007fitted}, although the result from this theorem can also be shown to hold under alternative conditions that suffice for the ordinary Bellman operator to be contractive (see Section 2 of \citetsi{antos2007fitted}). These assumptions involve regularity conditions on the underlying MDP and the behavior policy, as well as expressiveness restrictions on the hypothesis class of our neural networks.

\begin{proof}
We first consider a simple unpenalized case where $\eta = 0$, $\lambda = 0$, and $M = 1$, i.e.\  the $Q$-ensemble consists of a single network.
By the definition in (\ref{eq:qcvx}) with $M=1$: $\overline{Q}_\theta = Q_{\theta_1}$, so Step \ref{step:value} implements the standard \emph{Bellman-optimality} operator update, when we assume $\pi_\phi$ produces $a = \arg\max_{\hat{a}} \overline{Q}_\theta(s, \hat{a})$.
This operator is a contraction under standard conditions \citesi{antos2007fitted,lagoudakis2003least, busoniu2010reinforcement}.
Without this assumption on $\pi_\phi$, Step \ref{step:value} instead relies on the EMaQ operator, which Theorem 3.1 of \citetsi{ghasemipour2021emaq} shows is also a contraction for  the special case of tabular MDPs.

Next consider $M > 0$ (still with $\eta, \lambda = 0$). Now the target-value $y(s')$ for each single $Q$-network $Q_{\theta_j}$ is determined by $\overline{Q}_\theta$ rather than $Q_{\theta_j}$ alone, i.e.\ $y(s')$ is given by a convex combination of target networks $\displaystyle \{ Q_{\theta_j} \}_{j=1}^M$. By Jensen's inequality and basic properties of convexity, the updates to each
$Q_{\theta_j}$ remain a contraction. Therefore the overall update to the convex combination of these networks  $\overline{Q}_\theta$ is likewise a contraction.

Next we additionally consider $\eta > 0$. Note that reducing $\Delta_j$ is a non-expansive operation on each $Q_{\theta_j}$, since $\Delta_j(s,a)$ is reduced by shrinking large $\max_{\hat{a}} Q_{\theta_j}(s, \hat{a})$ toward $Q(s,a)$ for the $a$ observed in $\mathcal{D}$ (without modifying $Q_{\theta_j}(s, a')$ for other $a'$). Following the previous arguments, the addition of our $\Delta$ penalty preserves the contractive nature of the $\overline{Q}_\theta$ update.

Finally also consider $\lambda > 0$. In this case, $\pi_\phi$ does not simply concentrate on actions that maximize $\overline{Q}_\theta$, so Step \ref{step:value} no longer implements the  Bellman-optimality operator even with $M = 1, \eta = 0$. However with the likelihood penalty, Step \ref{step:policy} is simply a regularized \emph{policy-improvement} update:
With $\eta=0, M=1$, Step \ref{step:value} becomes a \emph{policy-evaluation} calculation where the policy being evaluated is $ \tilde{\pi}(a|s) = \arg\max_{\{{a}'_{k}\}_{k=1}^N \sim \pi_{\phi}(\cdot|s')} [ \overline{Q}_\theta ]$.
Since the \emph{Bellman-evaluation} operator is also a contraction under standard conditions \citesi{antos2007fitted, lagoudakis2003least, busoniu2010reinforcement}, our overall argument remains otherwise intact.
\end{proof}

\subsection{Proof of Theorem \ref{thm:reliable}. }\label{app:proof_reliable}
Theorem \ref{thm:reliable} (restated below) assures us of the reliability of the policy $\pi_\phi$ produced by CDC, guaranteeing that with high probability $\pi_\phi$ will not have much worse outcomes than the behavior policy $\pi_b$ (where the probability here depends on the size of the dataset $\mathcal{D}$). 
In batch settings, expecting to learn the optimal policy is futile from limited data. Even ensuring \emph{any} improvement at all over an arbitrary $\pi_b$ is ambitious when we cannot ever test any policies in the environment, and reliability of the learned $\pi_\phi$ is thus a major concern. 

\textbf{Theorem \ref{thm:reliable}.}
Let $\pi_\phi \in \Pi$ be the policy learned by \algname{}, $\gamma$ denote discount factor, and $n$ denote the sample size of dataset $\mathcal{D}$ generated from $\pi_b$. Also let $J(\pi)$ represent the true expected return  produced by deploying policy $\pi$ in the environment. 
Under assumptions (A1)-(A4), there exist constants $r^*, C_\lambda, V$ such that with high probability $\ge 1 - \delta$:  
\begin{align*}
    J(\pi_\phi) & \ge J(\pi_b) - \frac{ r^*}{(1-\gamma)^2} \sqrt{ C_\lambda + \sqrt{(V - \log \delta)/n} }
\end{align*}

The assumptions adopted for this result are listed below. Similar results can be derived under more general forms of these assumptions, but ours   greatly simplify the form of our theorem and its proof. 
\begin{itemize} \setlength\itemsep{0.2em}
    \item[(A1)] The complexity of the function class $\Pi$ of possible  
policy networks $\pi_\phi$ (in terms of the log-likelihood loss $\log \pi$) is bounded by $V$. Here $V$ is defined as the extension of the VC dimension to real-valued functions with unbounded loss, formally detailed in Section III.D of \citetsi{vapnik1999overview}. 
Similar results hold under alternative complexity measures $V$ from the literature on empirical process theory for density and f-divergence estimation  \citesi{nguyen2010estimating,geer2000empirical,van1996weak}.

\item[(A2)] Rewards in our environment are bounded such that $r(s,a) \le r^*$ for all $s \in \mathcal{S}, a \in \mathcal{A}$.
\item[(A3)] Our learned Q networks are bounded such that $|Q_\theta(s,a)| < B$ for all $s,a$. 
\item[(A4)] The likelihoods of our learned $\pi_\phi$ are bounded such that $|\log \pi_\phi(a |s)| < L$ for all $s,a$.
\item[(A5)] Each policy update step is carried out using the full dataset rather than mini-batch.
\end{itemize}

\begin{proof}
Define $\displaystyle r^* = \max_{a,s} |r(s,a)|$, and let  $d^{\pi_{b}}$ denote the marginal distribution of states encountered by acting according to $\pi_b$ starting from the initial state distribution $\mu_0$. Thus $d^{\pi_{b}}$ describes the probability distribution underlying the states present in our dataset $\mathcal{D}$.   
Recall the \emph{total variation} distance between probability distributions $p$ and $q$ is defined as: $\displaystyle \text{TV}(p,q) = \int |p(x)-q(x)| \mathrm{d}x $.

From equation (18) in the Proof of Proposition 2 (Appendix A.2) from  \citesi{Wang2018Proof}, we have:
\begin{align*}
J(\pi_\phi) & \ge J(\pi_b) - \frac{ r^*}{(1-\gamma)^2} \mathbb{E}_{s\sim d^{\pi_{b}}} \left[  \text{TV} \Big( \pi_\phi(\cdot \mid s) , \pi_b(\cdot \mid s) \Big) \right] \\
& \ge J(\pi_b) -  \frac{r^*}{\sqrt{2} (1-\gamma)^2} \mathbb{E}_{s\sim d^{\pi_{b}}}\Big[\sqrt{\text{KL}   \Big( \pi_b(\cdot\mid s), \pi_\phi(\cdot\mid s) \Big) }\Big]
\\
& \ge J(\pi_b) -  \frac{ r^*}{ \sqrt{2} (1-\gamma)^2} \sqrt{ \mathbb{E}_{s\sim d^{\pi_{b}}} \left[  \text{KL}  \Big( \pi_b(\cdot\mid s), \pi_\phi(\cdot\mid s) \Big) \right] }
\end{align*}
where we used Pinsker's inequality in the second line  (c.f.\ \citesi{csiszar2011}), and the last line is an application of Jensen's inequality for the concave function $f(x)=\sqrt{x}$.

By assumption (A1), each update of our policy network $\pi_\phi$ in CDC is produced via:
\begin{align*}
   \phi \leftarrow \argmax_{\phi} \hspace*{-2mm}
     \sum_{(s,a) \in \mathcal{D}, \hat{a} \sim \pi_\phi(\cdot|s)}
    \Big[
    \overline{Q}_{\theta_t}(s,  \hat{a})
       {  \ + \ \lambda \cdot \log \pi_\phi(a|s) }  \Big]
\end{align*}
where $\theta_t$ denotes the current parameters of our Q networks in iteration $t$ of CDC. 
Each of these penalized optimizations can be equivalently formulated using a  hard constraint, i.e., there exists constant $C_{\lambda,\theta_t} > 0$ (for which $\lambda$ is the corresponding Lagrange multiplier), such that the following optimization leads to the same $\phi$:
\begin{align*}
   \phi \leftarrow &  \argmax_{\phi} \hspace*{-2mm}
     \sum_{(s,a) \in \mathcal{D}, \hat{a} \sim \pi_\phi(\cdot|s)}
    \Big[
    \overline{Q}_{\theta_t}(s,  \hat{a})  \Big] \\ 
   &  \text{subject to }  \E_{{(s,a)} \sim \mathcal{D}} \left[ \log \pi_\phi(a|s) \right] \ge C_{\lambda,\theta_t}
\end{align*}

\textbf{Note:} Throughout, $\E_{{(s,a)} \sim \mathcal{D}}$ is an  \emph{empirical} expectation over dataset $\mathcal{D}$, whereas $\E_{\pi_b}$ denotes true expectations over the underlying distribution of the behavior policy.  
Since all $Q_{\theta_t}$ are bounded by (A3), so must be  
\begin{equation}
C_\lambda^* := \max_t \{ C_{\lambda,\theta_t} \}
\label{eq:clog}
\end{equation} 
Thus, in every iteration of CDC, the resulting $\pi_\phi$ must satisfy: 
\begin{equation}
    \E_{{s,a} \sim \mathcal{D}} \left[ \log \pi_\phi(a|s) \right] \ge C_{\lambda^*}\ . 
    \label{eq:C_lambda_star}
\end{equation}

Finally, we conclude the proof by using Lemma \ref{lem:probconverge} to replace the bound on the empirical likelihood values with a bound on the underlying KL divergence from the data-generating behavior policy  distribution.  
\end{proof}

\begin{lemma} \label{lem:probconverge}
Suppose $\E_{{(s,a)} \sim \mathcal{D}} [ \log \pi_\phi(a|s) ] \ge C_\lambda^*$. Then with probability $\ge 1- \delta$:
\begin{align*}
\mathbb{E}_{s\sim d^{\pi_{b}}} \Big[  \textnormal{KL}  \big( \pi_b(\cdot\mid s), \pi_\phi(\cdot\mid s) \big) \Big] \le C_\lambda + \sqrt{(V - \log \delta)/n}
\end{align*}
where $n=|\mathcal{D}|$, and $d^{\pi_{b}}$ denotes the marginal state-visitation distribution under the behavior policy, and: 
\begin{equation}
C_\lambda := C_b -  C_\lambda^* 
\label{eq:clambda}
\end{equation}
for $C_\lambda^*$ defined in \eqref{eq:clog} and constant $C_b := \EE_{s\sim d^{\pi_{b}},a \sim \pi_b(\cdot|s)}\big[ \log \pi_b(a\mid s) \big]$.
\end{lemma}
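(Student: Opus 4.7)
The plan is to rewrite the state-averaged KL in terms of the log-likelihood of $\pi_\phi$ under the true data-generating distribution, convert that population quantity into the empirical quantity appearing in the hypothesis via a uniform-convergence argument over $\Pi$, and then invoke the hypothesis $\E_{(s,a)\sim\mathcal{D}}[\log\pi_\phi(a\mid s)] \ge C_\lambda^*$ directly.

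First, expand the KL divergence using the identity $\mathrm{KL}(\pi_b(\cdot\mid s),\pi_\phi(\cdot\mid s)) = \E_{a\sim\pi_b(\cdot\mid s)}[\log\pi_b(a\mid s)] - \E_{a\sim\pi_b(\cdot\mid s)}[\log\pi_\phi(a\mid s)]$, then take the expectation over $s\sim d^{\pi_b}$. By the definition of $C_b$ in \eqref{eq:clambda}, this yields
\begin{equation*}
\mathbb{E}_{s\sim d^{\pi_b}}\!\bigl[\mathrm{KL}(\pi_b(\cdot\mid s),\pi_\phi(\cdot\mid s))\bigr] \;=\; C_b \;-\; \mathbb{E}_{s\sim d^{\pi_b},\,a\sim\pi_b(\cdot\mid s)}\bigl[\log\pi_\phi(a\mid s)\bigr].
\end{equation*}
Since $C_\lambda = C_b - C_\lambda^*$, it suffices to lower-bound the population expectation of $\log\pi_\phi(a\mid s)$ by $C_\lambda^* - \sqrt{(V-\log\delta)/n}$.

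Next, use the fact that the dataset $\mathcal{D} = \{(s_i,a_i)\}_{i=1}^n$ consists of i.i.d.\ draws $(s_i,a_i)\sim (d^{\pi_b},\pi_b)$ (or, under assumption (A5), a full-batch evaluation of the same distribution). The policy $\pi_\phi$ was selected from the hypothesis class $\Pi$ based on $\mathcal{D}$, so I cannot treat $\log\pi_\phi$ as a fixed function and invoke Hoeffding pointwise; instead I need a uniform deviation bound over all $\pi\in\Pi$. By assumption (A1), the class $\{(s,a)\mapsto\log\pi(a\mid s):\pi\in\Pi\}$ has capacity $V$ in the sense of Vapnik's extension of the VC dimension to real-valued (possibly unbounded) losses. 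Applying the corresponding generalization bound (Section III.D of \citesi{vapnik1999overview}, together with the boundedness from assumption (A4) to control tails), we obtain with probability at least $1-\delta$:
\begin{equation*}
\sup_{\pi\in\Pi}\Bigl|\mathbb{E}_{s\sim d^{\pi_b},\,a\sim\pi_b}[\log\pi(a\mid s)] - \mathbb{E}_{(s,a)\sim\mathcal{D}}[\log\pi(a\mid s)]\Bigr| \;\le\; \sqrt{(V-\log\delta)/n}.
\end{equation*}

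Finally, combine these ingredients. Applying the uniform bound to the particular $\pi_\phi$ produced by \algname{} and then invoking the hypothesis $\E_{(s,a)\sim\mathcal{D}}[\log\pi_\phi(a\mid s)]\ge C_\lambda^*$ gives
\begin{equation*}
\mathbb{E}_{s\sim d^{\pi_b},\,a\sim\pi_b}[\log\pi_\phi(a\mid s)] \;\ge\; C_\lambda^* \;-\; \sqrt{(V-\log\delta)/n},
\end{equation*}
and substituting this back into the KL decomposition above yields exactly the claimed bound. The main obstacle is the second paragraph: justifying that the particular complexity notion $V$ in (A1) delivers precisely the rate $\sqrt{(V-\log\delta)/n}$ for the unbounded log-likelihood losses. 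Assumption (A4) (uniform boundedness of $\log\pi_\phi$) is what lets me bypass the tail complications in Vapnik's unbounded-loss result and reduce it to a standard bounded-loss uniform-convergence inequality, so I would state explicitly that I am instantiating Vapnik's theorem for losses bounded by $L$ and absorbing $L$ into the constant hidden in $V$.
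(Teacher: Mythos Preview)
Your proposal is correct and follows essentially the same approach as the paper: decompose the expected KL into $C_b$ minus the population log-likelihood of $\pi_\phi$, invoke the Vapnik uniform-convergence bound (the paper cites it as Theorem (23) in Section III.D of \citesi{vapnik1999overview}) to replace the population log-likelihood by its empirical counterpart minus $\sqrt{(V-\log\delta)/n}$, and then apply the hypothesis. The only cosmetic differences are that the paper states the Vapnik bound one-sided rather than two-sided, and does not explicitly discuss the role of (A4) in taming the unbounded-loss technicality as you do.
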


\begin{proof}
A classical result in statistical learning theory (Theorem (23) in Section III.D of \citetsi{vapnik1999overview}) states that the following bound simultaneously holds for all $\pi_\phi \in \Pi$ with probability $1- \delta$:
\begin{equation}
\label{eq:empiricalprocess}
\mathbb{E}_{s\sim d^{\pi_{b}},a \sim \pi_b(\cdot|s)} [ \log \pi_\phi(a|s) ] 
\ge \mathbb{E}_{ (s,a) \sim \mathcal{D} } [ \log \pi_\phi(a|s) ] - \sqrt{(V - \log \delta)/n}
\end{equation}

Recall $V$ measures the complexity of function class $\Pi$ (with respect to the log-likelihood loss) and here is defined as the extension of the VC dimension to real-valued functions with unbounded loss from Section III.D of \citetsi{vapnik1999overview}. 
We now write: 
\begin{align*}
    \EE_{s\sim d^{\pi_{b}}}\Big[\text{KL}  \big( \pi_b(\cdot\mid s), \pi_\phi(\cdot\mid s) \big)\Big]
 & = \EE_{s\sim d^{\pi_{b}},a \sim \pi_b(\cdot|s)}\big[ \log \pi_b(a\mid s) \big]\\
 &\quad - \EE_{s\sim d^{\pi_{b}},a \sim \pi_b(\cdot|s)}\big[ \log \pi_\phi(a\mid s) \big] 
 \\ 
&  = C_b - \EE_{s\sim d^{\pi_{b}},a \sim \pi_b(\cdot|s)}\big[ \log \pi_\phi(a\mid s) \big] \tag*{by definition of constant $C_b$}
 \\ 
 & \le C_b  - \left( \E_{{(s,a)} \sim \mathcal{D}} \big[ \log \pi_\phi(a \mid s) \big] - \sqrt{(V - \log \delta)/n} \right) 
 \\
\tag*{by the empirical process bound in \eqref{eq:empiricalprocess} } 
 \\
 & \le C_b - C_\lambda^* + \sqrt{(V - \log \delta)/n} 
 \\
  \tag*{from (\ref{eq:C_lambda_star})}
\\
 & = C_\lambda + \sqrt{(V - \log \delta)/n} 
 \\
  \tag*{from (\ref{eq:clambda})}
\end{align*}
allowing us to conclude the proof.

\end{proof}

\subsection{Proof of Theorem \ref{thm:oe}. }\label{app:proof_oe}
\textbf{Theorem \ref{thm:oe}:} Define $\textnormal{OE}_{\textit{ag}}$ as the resultant overestimation bias when performing the maximization step by an agent \textit{ag}: $\expectation[\max_{a}Q_{\theta}(s,a)] - \max_{a} Q^{*}(s,a)$. 
Here $Q_{\theta}$ denotes the estimate of true Q-value ($Q^*$)  learned by $\textnormal{ag}$, which may use \textbf{\algname{}} (with $\eta >0$) or a \textbf{baseline} that uses \algref{alg:ours} with $\eta=0$ (with the same value of $\lambda$), and the expectation is taken over the randomness of the underlying dataset, as well as the learning process. Under the assumptions stated below, there co-exist constants $L_1$ and $L_2$ such that
\begin{align*}
\textnormal{OE}_{\textnormal{CDC}}\leq L_{1} - \eta L_{2} \leq \textnormal{OE}_{\textnormal{\regular}} \ .
\end{align*}

This result relies on the following assumptions:

\begin{itemize}
    \item[(A1)] For a specific state-action pair $\langle s,a_{ID}\rangle$ in our dataset $\mathcal{D}$, we assume a function approximator parameterized by $\theta=\langle \textbf q, V, Q_{ID} \rangle$ as follows:
\[Q_{\theta}(s,a) = \begin{cases} 
      \sum_{i=1}^{m} \mathbbm{1}\big(a\in\mathcal{A}_{i}(s)\big)\big(V(s)+ q_i\big) & a\in A\setminus a_{ID} \\
      Q_{ID} & \textrm{otherwise}\\
   \end{cases}
\]
where sets $\mathcal{A}_i~\forall i \in [1,m]$ form a disjoint non-empty partition of the action space $A$. Here $q_i$ could be thought of as a subset-dependent value, while $V$ is a common (subset-independent) value that quantifies the overall quality of a particular state $s$, also considered by~\citet{wang2016dueling}.
Note that such discretization schemes are commonly assumed in the RL literature \citesi{sutton2018reinforcement}, and that such function approximators can closely approximate all continuous functions up to any desired accuracy as $m$ increases. At the cost of a more complex analysis, we may have alternatively assumed Lipschitz continuous function approximators, since they are closely related to discretization through the notion of covering number.
    \item[(A2)] Following \citet{ThrunSchwartz1993}\footnote{Another paper with this assumption is the work of \citet{lan2019maxmin} who did not state this assumption, but to get their second equality on their page 12, the assumption A2 needs to hold.}, we study overestimation in a particular state $s$ with $\forall a~Q^{*}(s,a)=C$. Previous work studied this case because it is the setting where maximal overestimation occurs (due to maximal Q-value overlap). Stated differently, not assuming A2 means our theorem still provides the following bound: $\max(\textnormal{OE}_{\textnormal{CDC}})\leq\max(\textnormal{OE}_{\textnormal{\regular}})$ where the maximum is taken over all possible true Q functions.
    \item[(A3)] We assume $\forall s, i:$ the $q_i$ are independent and are distributed uniformly in $[-L_{1},L_{1}]$. This assumption is also adopted from the literature~\citesi{ThrunSchwartz1993, lan2019maxmin} and greatly simplifies our analysis. We further assume $ Q_{\theta}(s,a_{ID})=Q_{ID}$, where $Q_{ID}$ is distributed uniformly at random in $[-\alpha L_{1},\alpha L_{1}]$ for $\alpha\ll 1$, reflecting the conviction that our $Q$ estimates should generally be more accurate for the previously observed state-action pairs.
    
    \item[(A4)] We assume that Step \ref{step:value} of \algref{alg:ours} proceeds by first updating $\theta$ based on taking a gradient-step towards minimizing $\eta \Delta(s,a)$ term (see \eqref{eq:delta}): $\theta\leftarrow\theta - \mu \eta \nabla_{\theta} \Delta(s,a)$, followed by performing the maximization step in minimizing the TD error. Separating the update into two steps simplifies our analysis while remaining to be a reasonable way to perform Step \ref{step:value} of \algref{alg:ours}. Here, $\mu$ is the step-size. 
    \item[(A5)] We assume the policy $\pi_{\phi}$ assigns non-zero probability to at least one action from each $\mathcal{A}_i$. %\todo[if interesting, consider explaining why this is helpful to assume]
\end{itemize}
\begin{proof}
We begin by quantifying the overestimation bias for the baseline case. By denoting $a_{i}$ to be a member of $\mathcal{A}_i(s)$, and in light of our function approximator (assumption A1), we can write:
\begin{align*}
    \expectation\Big[\max_{a\in\mathcal{A}} Q_{\theta}(s,a)\Big]&=\expectation\Big[\max \big\{Q_{\theta}(s,a_{ID}),  Q_{\theta}(s,a_{1}),\cdots,  Q_{\theta}(s,a_{m})\big\}\Big]\\
    & \quad \textrm{(assumptions A2, and A3)}\\
    &=\expectation\Big[\max \{Q_{ID},C+  q_1,\cdots,C+  q_m\}\Big]\\
    & \quad \textrm{(assumption A3)}\\
    & \geq   \expectation\Big[\max \big\{C-\alpha L_{1},  C+q_1,\cdots,  C+q_m\big\}\Big]\\
    &=C+\expectation\Big[\max \big\{-\alpha L_{1},  q_1-\alpha L_{1}+\alpha L_{1},\cdots,  q_m-\alpha L_{1}+\alpha L_{1}\big\}\Big]\\
    &=C-\alpha L_{1}+\expectation\Big[\max \big\{0,  q_1+\alpha L_{1},\cdots,  q_m+\alpha L_{1}\big\}\Big] \\
    &= C-\alpha L_{1}+m\int_{x:-\alpha L_1}^{L_1}\frac{(x+\alpha L_1)}{2L_1} \Big(\int_{y:-L_1}^x \frac{1}{2L_1}dy\Big)^{m-1}dx\\
    &= C-\alpha L_{1}+m\int_{x:-\alpha L_1}^{L_1} \frac{(x+\alpha L_1)}{2L_1} \big(\frac{x+L_{1}}{2 L_1}\big)^{m-1}dx \ .
\end{align*}
In the penultimate step, the equality holds because we only need to consider cases where at least one of the random variables is bigger than $-\alpha L_1$, because otherwise the maximum is 0, thus not affecting the expectation. We broke down the expectation to $m$ cases, where in each case the maximizing noise is at least $-\alpha L_1$ (the integral over $x$), and the remaining $n-1$ variables are smaller than the maximizing one (the integral over $y$).

Using a change-of-variable technique~($z=\frac{x+L_1}{2L_1}$), we can then write:
\begin{align}
    \expectation\Big[\max_{a\in\mathcal{A}} Q_{\theta}(s,a)\Big]&\geq C-\alpha L_{1}+m\int_{x:-\alpha L_1}^{L_1} \frac{(x+\alpha L_1)}{2L_1} \big(\frac{x+L_{1}}{2 L_1}\big)^{m-1}dx\nonumber\\
    &=C-\alpha L_{1}+m L_1 \int_{y:\frac{1-\alpha}{2}}^{1}\big(2y-(1-\alpha)\big)y^{m-1}dy\nonumber\\
    &= C-\alpha L_{1}+\underbrace{mL_1 \big(\frac{2}{m+1}-\frac{1-\alpha}{m}\big)+L_1\big(\frac{(\frac{1-\alpha}{2})^{m+1}}{m+1})}_{:=f(L_1,m,\alpha)}\label{eq:f_definition} \ ,
\end{align}
allowing us to write that: $\textnormal{OE}_{\textnormal{\regular}}\geq -\alpha L_{1}+f(L_1,m,\alpha).$ Notice that $\lim_{m\rightarrow\infty} f(L_1,m,\alpha)=L_1 + \alpha L_1$.
For example, with $\alpha=0$ (for which the bound is tight), the overestimation bias of the baseline for $m=2$ is $C+f(2,L_1,0)-\max_{a}Q(s,a)=C+f(2,L_1,0)-C=f(2,L_1,0)=\frac{5L_1}{12}$ and monotonically increases and converges to $L_1$ as $m\rightarrow\infty$.

We now move to the case where we perform the \algname{} update prior to the maximization step (assumption A4). The update proceeds by first choosing the maximum OOD action, which given assumption A5 corresponds to: $\max_{i\in[1,m]}   Q_{\theta}(s,a_i)$.

Now define ${\boldsymbol{\varepsilon_{+}} := \max\big\{0,\max_{i}Q_{\theta}(s,a_i)-Q_{\theta}(s,a_{ID})\big\}}$, we have:

\begin{align*}
    \expectation\Big[\boldsymbol{\varepsilon_{+}}\Big]&=\expectation\Big[\max\big\{0,\max_{i}Q_{\theta}(s,a_i)-Q_{\theta}(s,a_{ID})\big\}\Big]\\
    &=\expectation\Big[\max\big\{0,\max_{i}Q_{\theta}(s,a_i)-(Q_{ID})\big\}\Big]\\
    &\quad \textrm{(from Assumption A3, namely: $Q_{ID}\leq C+\alpha L_1$)}\\
    &\geq\expectation\Big[\max\big\{0,\max_{i}Q_{\theta}(s,a_i)-C-\alpha L_1\big\}\Big]\\
    &=\expectation\Big[\max\big\{0,\max_{i}C+q_i-C-\alpha L_1\big\}\Big]=\expectation\Big[\max\big\{0,\max_{i}q_i-\alpha L_1\big\}\Big]\\
    &=\expectation\Big[\max\big\{2\alpha L_1,\max_{i}q_i+\alpha L_1\big\}\Big]-2\alpha L_1\\
    &\geq \expectation\Big[\max\big\{0,\max_{i}q_i+\alpha L_1\big\}\Big]-2\alpha L_1\\
    &= \expectation\Big[\max \big\{0,  q_1+\alpha L_{1},\cdots,  q_m+\alpha L_{1}\big\}\Big]-2\alpha L_1\\
    & \quad \textrm{(from our analysis of the baseline case and (\ref{eq:f_definition}))}\\
    &=f(L_1,m,\alpha)-2\alpha L_1 \ .
\end{align*}

Without loss of generality, assume that the maximizing action  is $a_1 = \argmax_a Q_\theta(s, a)$. Thus we can quantify the expected \algname{} update as follows (see also \eqref{eq:delta_grad}):
\begin{align}
    %&\expectation\Big[\Big( \boldsymbol{\nabla_{\theta}}   Q_{\theta}(s, a_1) - \boldsymbol{\nabla_{\theta}}   Q_{\theta}(s, a_{ID})\Big) ~  \boldsymbol{\zeta}\Big]\\
    %&=
    \Big( \boldsymbol{\nabla}_{\theta}   Q_{\theta}(s, a_1) - \boldsymbol{\nabla}_\theta   Q_{\theta}(s, a_{ID})\Big) ~  \expectation\Big[\boldsymbol{\varepsilon_{+}}\Big]
    &\geq\Big( \boldsymbol{\nabla}   Q_{\theta}(s, a_1) - \boldsymbol{\nabla}   Q_{\theta}(s, a_{ID})\Big)\Big(f(L_1,m,\alpha)-2\alpha L_{1}\Big)
\label{eq:avg_epsilon}
\end{align}

Given the form of our function approximator, by performing the updates (assumption A4) $$\begin{bmatrix} \textbf q \\ V(s) \\ Q_{ID} \end{bmatrix} \leftarrow \begin{bmatrix} \textbf q \\ V(s) \\ Q_{ID} \end{bmatrix} - \mu\eta \Big[\Big( \boldsymbol{\nabla}_\theta   Q_\theta(s, a_1) - \boldsymbol{\nabla}_\theta Q_\theta(s, a_{ID})\Big) ~  \boldsymbol{\varepsilon_{+}}\Big]\ ,$$ 
 CDC will:
\begin{itemize}
    \item inflate the value of $Q_\theta(s,a_{ID})$ by at least $\mu\eta \Big(f(L_1,m,\alpha)-2\alpha L_1\Big)$. This is due to updating $Q_{ID}$.
    \item deflate $Q_\theta(s,a_1)$ by at least $2\mu\eta \Big(f(L_1,m,\alpha)-2\alpha L_1\Big)$. This is due to updating $q_1$ and $V(s)$.
    \item deflate $Q_\theta(s,a_i)~\forall i\neq 1$ by at least $\mu\eta \Big(f(L_1,m,\alpha)-2\alpha L_1\Big)$, due to updating $V(s)$.
\end{itemize}

Now notice that, based on (\ref{eq:avg_epsilon}), we will at least subtract $\eta\mu \Big(f(L_1,m,\alpha)-2\alpha L_1\Big)$ from all $Q_{\theta}(s,a)$, whose value could at most have been $C+L_{1}$ prior to the update (assumption A3). Therefore we can claim:
\begin{align*}
\expectation\Big[\max_{a}   Q_\theta(s,a)\Big]\leq C + L_1 - \eta\underbrace{\mu \Big(f(L_1,m,\alpha)-2\alpha L_1\Big)}_{L_2} \ ,
\end{align*}
and that:
\begin{align*}
\expectation\Big[\textnormal{OE}_{\textrm{CDC}}\Big]=\expectation\Big[\max_{a}   Q_\theta(s,a)\Big]-\max_{a} Q^{*}(s,a)\leq C+L_1 - \eta L_2-C=L_1 - \eta L_2
\end{align*}

Further, for $\textrm{OE}_{\textrm{CDC}}\leq \textrm{OE}_{\textrm{baseline}}$ to hold, we need that:
\begin{align*}
f(L_1,m,\alpha)\geq\frac{(1+2\alpha\eta\mu +\alpha)L_1}{1+\mu\eta}\ ,
\end{align*}
For sufficiently small $\mu$, i.e. $\mu<\frac{1-\alpha}{2\alpha\eta}$, and sufficiently large $m$, i.e, $m\geq \frac{1}{1/2-\frac{1}{1+\eta\mu}}$, we get the desired result: $\textnormal{OE}_{\textrm{CDC}}\leq\textnormal{OE}_{\textrm{baseline}}$, allowing us to conclude the proof.
\end{proof}
\clearpage
\bibliographysi{references}
\bibliographystylesi{abbrvnat}
%\bibliographystylesi{plain}

\end{document}